\documentclass{article}

\usepackage[protrusion=true,expansion=true]{microtype} 

\usepackage{graphicx}
\usepackage{subcaption}
\usepackage{booktabs} 

\usepackage{hyperref}

\usepackage[accepted]{icml2026}

\usepackage{amsmath}
\usepackage{amssymb}
\usepackage{mathtools}
\usepackage{amsthm}
\usepackage{setspace}
\usepackage{url}
\usepackage{kotex}
\usepackage{multirow}
\usepackage{pifont}
\usepackage{float}         
\usepackage{xcolor} 
\usepackage[normalem]{ulem}
\usepackage{soul}
\usepackage{fontawesome}
\usepackage[hypcap=false]{caption}

\definecolor{softblue}{RGB}{15, 30, 130}       
\definecolor{softred}{RGB}{170, 10, 45}  

\newcommand{\reduline}[1]{\textcolor{softred}{\uline{#1}}}
\newcommand{\bluebold}[1]{\textcolor{softblue}{\textbf{#1}}}

\usepackage[capitalize,noabbrev]{cleveref}

\theoremstyle{plain}
\newtheorem{theorem}{Theorem}[section]
\newtheorem{proposition}[theorem]{Proposition}

\theoremstyle{definition}

\theoremstyle{remark}
\newtheorem{remark}[theorem]{Remark}

\usepackage[textsize=tiny]{todonotes}

\icmltitlerunning{Quantile-Free Uncertainty Quantification in Graph Neural Networks}

\begin{document}

\twocolumn[
  \icmltitle{Quantile-Free Uncertainty Quantification in Graph Neural Networks}



  \icmlsetsymbol{equal}{*}

\begin{icmlauthorlist}
  \icmlauthor{Soyoung Park}{cnu}
  \icmlauthor{Hwanjun Song}{kaist}
  \icmlauthor{Sungsu Lim}{cnu}
\end{icmlauthorlist}

\icmlaffiliation{cnu}{Department of Computer Science and Engineering, Chungnam National University, Daejeon, Korea}
\icmlaffiliation{kaist}{Department of Industrial and Systems Engineering, KAIST, Daejeon, Korea}

\icmlcorrespondingauthor{Sungsu Lim}{sungsu@cnu.ac.kr}

  \icmlkeywords{Uncertainty Quantification, Graph Neural Networks, Prediction Intervals, Quantile Regression, Node Regression, Coverage Guarantee, Conformal Prediction}

  \vskip 0.3in
]



\printAffiliationsAndNotice{}  


\begin{abstract}
Uncertainty quantification (UQ) in graph neural networks (GNNs) is crucial in high-stakes domains but remains a significant challenge. 
In graph settings, message passing often relies on strong assumptions such as exchangeability, which are rarely satisfied in practice, and achieving reliable UQ typically requires costly resampling or post-hoc calibration.
To address these issues, we introduce Quantile-free Prediction Interval GNN (QpiGNN), a framework that builds on quantile regression (QR) to enable GNN-based UQ by directly optimizing coverage and interval width without requiring quantile inputs or post-processing. QpiGNN employs a dual-head architecture that decouples prediction and uncertainty, and is trained with label-only supervision through a quantile-free joint loss. This design allows efficient training and yields robust prediction intervals, with theoretical guarantees of asymptotic coverage and near-optimal width under mild assumptions. Experiments on 19 synthetic and real-world benchmarks show QpiGNN achieves average 22\% higher coverage and 50\% narrower intervals than baselines, while ensuring efficiency and robustness to noise and structural shifts.
\end{abstract}


\section{Introduction}
\label{sec:intro}

Graph Neural Networks (GNNs) are increasingly applied to node regression, where the goal is to predict target values for individual entities in graph-structured data.
Node-level predictions have shown strong potential in high-stakes domains such as healthcare~\citep{li2020graph} and criminal justice~\citep{zhou2024hdm}, where reliability is critical.
However, most existing GNNs rely on deterministic architectures that produce point estimates without uncertainty. Their message-passing mechanisms can further propagate and amplify data biases~\citep{jiang2024chasing, lin2024bemap}, increasing risks in real-world decision-making~\citep{Kwon2022}.
Although uncertainty-aware modeling has gained attention, uncertainty quantification (UQ) in GNN regression remains underexplored, hindering trustworthy applications. This typically involves prediction intervals that capture plausible ranges~\citep{huang2023uncertainty, pouplin2024relaxed}.

Recent works on UQ in GNNs can be broadly categorized into Bayesian and frequentist approaches~\citep{chen2024uncertainty, wang2024uncertainty2}.
Bayesian approaches~\citep{zhao2020uncertainty, stadler2021graph} estimate posterior distributions but suffer from scalability issues and sensitivity to prior specification.
Frequentist approaches are generally more efficient, but they often rely on resampling or post-hoc calibration, which can be costly or unstable in graph. Resampling-based methods~\citep{kang2022jurygcn, liao2023ultra} incur computational overhead due to repeated inference, while post-hoc calibration methods~\citep{huang2023uncertainty} require calibration sets and additional processing. In particular, these approaches often rely on strong assumptions such as exchangeability, which are often violated in graphs due to structural dependencies~\citep{zhou2020graph}.

These limitations motivate the development of alternative frequentist approaches for GNN regression that avoid strong assumptions and post-processing while remaining computationally practical.
We consider quantile regression (QR)~\citep{koenker1978regression} as a promising alternative in graphs, due to its ability to handle non-Gaussian and heteroscedastic targets without restrictive distributional assumptions. 
Unlike resampling- or post-hoc methods, QR directly estimates conditional quantiles, avoiding repeated inference and reducing additional calibration steps. 
As a result, it provides a robust alternative for UQ in GNNs, where relational dependencies often violate standard assumptions~\citep{ma2021homophily, angelopoulos2021gentle}.

Standard QR assumes i.i.d. data and requires quantile-level inputs. Specifically, the model either takes a quantile parameter $\tau$ (e.g., $\tau=0.05, 0.95$) as input or trains separate predictors for each quantile to estimate the conditional quantile function $f(\mathbf{X};\tau)$. This design increases model complexity and introduces issues such as quantile crossing~\citep{zhou2020non}. 
When combined with graph message passing, quantile supervision entangles node representations and often yields poorly calibrated, non-compact intervals~\citep{rusch2023survey}. Moreover, QR-based interval estimation methods that have not been applied to GNNs still rely on explicit quantile-level inputs or fail to distinguish predictions from uncertainty~\citep{tagasovska2019single, pouplin2024relaxed}, which limits their stability and expressiveness in graph settings.

Motivated by limitations of existing methods, we develop a frequentist framework for UQ in GNNs that avoids strong assumptions such as exchangeability.
To this end, we propose \textbf{Quantile-free Prediction Interval GNN (QpiGNN)}, a framework for UQ in GNNs through prediction interval estimation.
QpiGNN is built on two key ideas. 
First, a \textit{dual-head architecture} that separates prediction from uncertainty estimation, reducing oversmoothing and entanglement. Second, a \textit{quantile-free joint loss} directly optimizes coverage and interval width from label-only supervision, eliminating the need for quantile-level inputs and post-processing.
Collectively, these components enable stable training and calibrated intervals while ensuring coverage and near-optimal width under assumptions in graphs. 
The our code is available at \url{https://github.com/sybeam27/QpiGNN}, and our main contributions are summarized as follows:
\vspace{-0.4cm}
\begin{itemize}
    \item \textbf{Quantile-free UQ for GNNs}: 
    We analyze the structural limitations of applying QR-based interval estimation to GNNs and introduce QpiGNN, the first quantile-free framework that enables calibrated and compact node-level uncertainty quantification.
    \item \textbf{Framework Design and Theory}: 
    We develop a dual-head architecture with a quantile-free joint loss that decouples prediction from uncertainty, mitigates oversmoothing, and—under mild assumptions—offers theoretical guarantees of coverage and near-optimal width.
    \item \textbf{Empirical Results}:
    Across 19 diverse synthetic and real-world benchmarks, QpiGNN achieves on average 22\% higher coverage and 50\% narrower intervals than competitive baselines, while still remaining efficient and robust to noise and structural shifts.
\end{itemize}


\section{Related Work}
\label{sec:related}

\subsection{Uncertainty Quantification in GNNs}
UQ in GNNs has been studied through ensemble, Bayesian, and frequentist approaches.
Ensemble methods~\citep{bazhenov2022towards, wang2024uncertainty} improve robustness via multiple predictions but are costly and sensitive to shifts, limiting scalability in large graphs.
Bayesian methods~\citep{zhao2020uncertainty, stadler2021graph} provide principled probabilistic estimates by modeling posterior uncertainty but often face scalability issues and sensitivity to prior specification.
Frequentist methods~\citep{kang2022jurygcn, liao2023ultra} are more efficient but often depend on resampling or post-hoc calibration, causing significant overhead. Among such post-hoc approaches, Conformal Prediction (CP) provides distribution-free coverage guarantees but relies on exchangeability (often simplified as i.i.d.), an assumption that can be violated in graph data~\citep{vovk2005algorithmic}. Although recent works attempt to relax these via partial exchangeability~\citep{huang2023uncertainty, zhao2024conformalized}, CP remains sensitive to heterogeneity such as hubs and heterophily, highlighting the need for alternative approaches.

\subsection{Quantile Regression}
QR models asymmetric and heteroscedastic targets without strong distributional assumptions~\citep{koenker1978regression}. However, existing graph-based QR methods such as GSL-QR~\citep{zhang2023delivery} and PE-GQNN~\citep{de2024positional} primarily target prediction or representation learning rather than uncertainty estimation.
Extensions of QR for interval estimation, such as SQR~\citep{tagasovska2019single} and RQR~\citep{pouplin2024relaxed}, improve flexibility but are not designed for GNNs, often causing calibration failures or oversmoothing when combined with message passing. In contrast, we integrate QR into GNNs in a quantile-free manner, removing explicit quantile-level inputs and post-processing, and thereby achieving calibrated and robust interval prediction for graphs.

\begin{figure*}[t]
  \centering
  \includegraphics[width=1.0\linewidth]{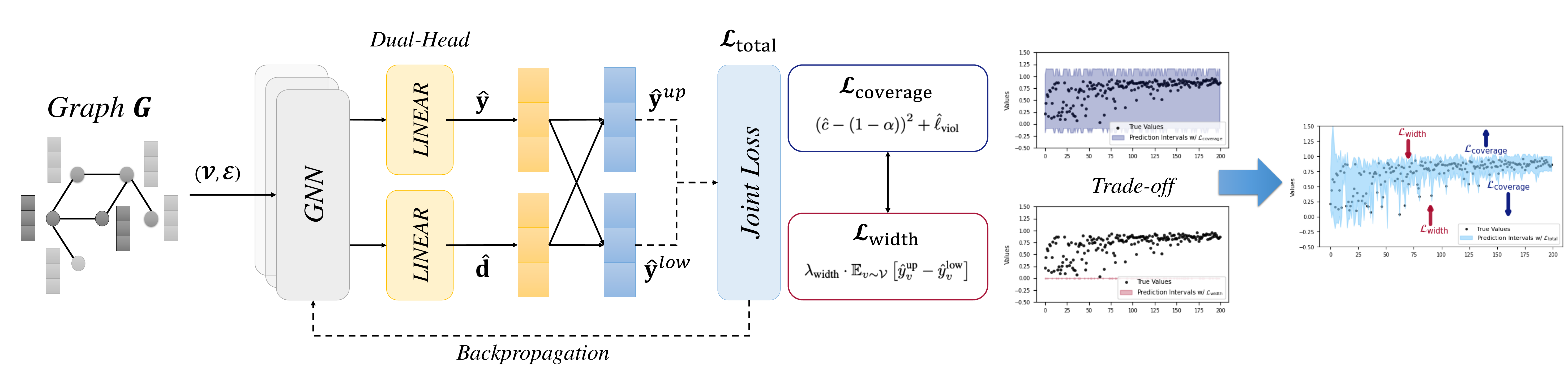}
  \caption{
    \textbf{Overview of Quantile-free Prediction Interval Graph Neural Network.}  
    QpiGNN estimates node-wise prediction intervals using a \textit{dual-head GNN} trained with a \textit{quantile-free joint loss} that trades off coverage and compactness.
    One head predicts the target value $\hat{y}$, while the other predicts the interval width $\hat{d}$.
    The loss includes a coverage term $\mathcal{L}{\text{coverage}}$ (\textcolor{softblue}{\faArrowUp}) encouraging wide enough intervals to maintain coverage, and a width term $\mathcal{L}{\text{width}}$ (\textcolor{softred}{\faArrowDown}) promoting tighter intervals.
    As illustrated, models trained with only the coverage term produce overly wide intervals, whereas the full joint loss yields tighter, locally adaptive intervals that still satisfy the target coverage $1-\alpha$.
    }
  \label{fig:gpi_framework}
\end{figure*}

\section{Preliminaries and Background}
\label{sec:preliminaries}
This section introduces node-wise uncertainty quantification in graph regression via prediction intervals, along with the notations and background on QR and its extensions.

\subsection{Node-wise Prediction Intervals in GNNs}
Let \( G = (\mathcal{V}, \mathcal{E}) \) be a graph, where \( \mathcal{V} \) is the set of nodes with \( n = |\mathcal{V}| \), and \( \mathcal{E} \subseteq \mathcal{V} \times \mathcal{V} \) is the set of edges.
Each node \( v \in \mathcal{V} \) is associated with a feature vector \( \mathbf{x}_v \in \mathbb{R}^d \), where \( d \) is the dimensionality of node features. The feature matrix is denoted by \( \mathbf{X} \in \mathbb{R}^{n \times d} \).
Each node also has a scalar regression target \( y_v \in \mathbb{R} \), and the full target vector is \( \mathbf{y} = [y_1, y_2, \dots, y_n]^\top \in \mathbb{R}^n \).

The goal is to predict, for each node \( v \), a \textit{prediction interval} \( [\hat{y}_v^{\text{low}}, \hat{y}_v^{\text{up}}] \) such that \( \hat{y}_v^{\text{low}} \leq y_v \leq \hat{y}_v^{\text{up}} \)
holds with high probability. 
The prediction intervals should be tight, node-wise adaptive, and avoid conservative bounds that are uniformly applied across all nodes.

GNNs update node representations by aggregating information from their local neighborhoods. The representation of node \( v \), denoted as \( \mathbf{h}_v \), is used to estimate the target value \( y_v \). At layer \( l \), the representation is updated as follows:
\begin{equation}
\label{eq:gnn_passing}
    \mathbf{h}_v^{(l)} = \sigma \left( W^{(l)} \cdot \text{AGG} \left( \{ \mathbf{h}_u^{(l-1)} \mid u \in \mathcal{N}(v) \} \right) \right),
\end{equation}
where \( \text{AGG} \in \{\text{MEAN}, \text{SUM}, \text{MAX}\} \) is an aggregation function and \( \sigma \) a non-linearity. 
Aggregation varies by architecture, using mean, sum, or attention (e.g., GraphSAGE~\citep{hamilton2017inductive}, GCN~\citep{kipf2016semi}). 
Here, \( \mathcal{N}(v) \) denotes the neighbors of node \( v \).


\subsection{Quantile Regression and Extensions}

QR estimates the conditional quantile function \( q_\tau(x) \) with \( P(y \leq q_\tau(x)) = \tau \), where \( \tau \in (0, 1) \) is the quantile level. Prediction intervals are obtained by setting the bounds to \( \tau^{\text{low}} = \alpha/2 \) and \( \tau^{\text{up}} = 1 - \alpha/2 \), where \( \alpha \) is the miscoverage rate and \( 1 - \alpha \) the target coverage (e.g., \( \alpha = 0.1 \) yields a 90\% interval). 
A limitation of QR is quantile crossing~\citep{zhou2020non}, where predictions at different quantile levels violate monotonicity, causing lower quantiles to exceed higher ones. To overcome this, extensions of QR have been proposed.

\citet{tagasovska2019single} proposed Simultaneous Quantile Regression (SQR), which jointly estimates multiple quantiles using a single model \( f_\theta(x, \tau) \) conditioned on input and quantile level. Rather than training separate models for each quantile, SQR optimizes a objective.  
The standard QR loss, known as the pinball loss, is defined as:
\begin{equation}
    \mathcal{L}_{\text{QR}}(y, \hat{y}) = (\tau - \mathbb{I}(y < \hat{y}))(y - \hat{y}),
\end{equation}
where \(\tau \in (0,1)\) is a quantile level. In contrast, SQR minimizes the expected pinball loss over a continuous range of quantiles \( \tau \sim \mathcal{U}(0,1) \): 
\begin{equation}
\label{eq:sqr_loss}
    \mathcal{L}_{\text{SQR}} = \mathbb{E}_{\tau \sim \mathcal{U}(0,1)} [ \mathcal{L}_{\text{QR}}(y, f_\theta(x, \tau)) ].
\end{equation}

This formulation mitigates quantile crossing and improves parameter efficiency by sharing representations across quantile levels. 
However, our experiments show that SQR suffers from instability and calibration failure when combined with the message passing of GNNs as defined in Equation~(\ref{eq:gnn_passing}).

Beyond quantile-based approaches, recent extensions have explored predicting prediction intervals directly without requiring pre-defined quantile inputs \( \tau \).  
\citet{pouplin2024relaxed} proposed Relaxed Quantile Regression (RQR), developed for MLPs, which estimates prediction intervals by learning the conditional center and spread without relying on pre-defined quantile-level inputs.
RQR predicts input-dependent bounds \( [\hat{y}^{\text{low}}(x),\ \hat{y}^{\text{up}}(x)] \) and optimizes them for target coverage and width using a width-regularized objective:
\begin{equation}
\label{eq:rqr_loss}
\begin{aligned}
\mathcal{L}_{\text{RQR-W}}
&=
\left( \alpha + 2\lambda
      - \mathbb{I}\!\left[ \hat{y}^{\text{low}} \leq y \leq \hat{y}^{\text{up}} \right] \right) \\
&\quad \times (y - \hat{y}^{\text{low}})(y - \hat{y}^{\text{up}})
      + \frac{\lambda}{2} \left( \hat{y}^{\text{up}} - \hat{y}^{\text{low}} \right)^2,
\end{aligned}
\end{equation}
where \( \lambda \geq 0 \) controls the trade-off between coverage and compactness and \( \mathbb{I}(\cdot) \) is the indicator function.  
The first term enforces coverage, while the second penalizes width.


While effective in tabular MLPs, our experiments reveal that directly extending RQR to GNNs introduces significant challenges. Message passing tends to produce overly smooth and global intervals~\citep{rusch2023survey}, limiting node-wise adaptivity. Moreover, the single-head design of RQR entangles the learning of center and spread, reducing representation flexibility and degrading calibration performance under graph-structured data.  
To improve interval validity in graph-based tasks, we add an ordering penalty $\gamma_{\text{order}}$, which facilitates comparison with our proposed model:
\begin{equation}
\label{eq:adj_rqr_loss}
    \mathcal{L}_{\text{RQR}^{adj.}} = \mathcal{L}_{\text{RQR-W}} + \gamma_{\text{order}} \cdot \text{ReLU}(\hat{\mathbf{y}}^{\text{low}} - \hat{\mathbf{y}}^{\text{up}}),
\end{equation}
where $\gamma_{\text{order}} \geq 0$ is a hyperparameter. 
This penalty, \textit{absent in the original formulation}, is added as a practical fix: applying RQR to GNNs causes quantile crossing that hinders evaluation, rather than reflecting any graph-specific mechanism.

\section{QpiGNN: Quantile-free Prediction Interval Graph Neural Network}
\label{sec:method}

\subsection{Problem Formulation}
We address uncertainty quantification in graph-structured data by learning calibrated prediction intervals without relying on quantile inputs, resampling, and post-processing.  
Given a graph \( G \) with node features \( \mathbf{X} \), the objective is to learn a function \( f: \mathcal{V} \to \mathbb{R}^2 \) that assigns each node \( v \) a prediction interval \( f(v) = [\hat{y}_v^{\text{low}}, \hat{y}_v^{\text{up}}] \), such that:
\[
\mathbb{P}\!\left( \hat{y}_v^{\text{low}} \le y_v \le \hat{y}_v^{\text{up}} \right) \ge 1 - \alpha,
\quad
\mathbb{E}\!\left[ \hat{y}_v^{\text{up}} - \hat{y}_v^{\text{low}} \right] \ \text{minimized}.
\]
Here, \( 1 - \alpha \) denotes the target coverage, the probability that true values fall within the predicted intervals.  
Thus, the learned intervals should be both \textit{calibrated}—achieving the target coverage across nodes—and \textit{compact}—minimizing their average width. We propose QpiGNN, a GNN-based framework for node-wise uncertainty quantification via calibrated prediction intervals.  
  

\subsection{Dual-head GNN for Calibrated Prediction Intervals}
\label{subsec:dual}
GNNs generate node representations by aggregating neighborhood information~\citep{hamilton2017inductive, kipf2016semi}. 
While effective, single-head architectures that jointly estimate prediction bounds can suffer from over-smoothing~\citep{li2018deeper}, making it difficult to capture uncertainty.  
To mitigate this, QpiGNN employs a dual-head architecture that decouples the estimation of prediction \( \hat{\mathbf{y}} \) and interval width \( \hat{\mathbf{d}} \), allowing each to be optimized for a specific objective—accuracy for \( \hat{\mathbf{y}} \), and coverage for \( \hat{\mathbf{d}} \).

Separating the two heads also prevents representational conflicts and gradient interference between prediction and uncertainty estimation, enabling each branch to learn its own function class more effectively.
Similar dual-head structures have demonstrated strong performance in heteroscedastic and Bayesian regression~\citep{kendall2017uncertainties, lakshminarayanan2017simple}.  As shown in Figure~\ref{fig:gpi_framework}, this design enables expressive, node-wise uncertainty modeling by structurally separating prediction and uncertainty components, with the full training algorithm provided in Appendix~\ref{apdx:alg}.

The effectiveness of this design is validated through an ablation study in Section~\ref{sec:ablation}. Across nine synthetic datasets, the dual-head GNN with a learnable margin outperforms fixed-margin or single-output variants, achieving better empirical trade-offs by maintaining high coverage with significantly narrower intervals. QpiGNN uses a GNN encoder to compute node embeddings \( \mathbf{H} = \text{GNN}(\mathbf{X}, \mathcal{E}) \), followed by two linear heads
\(
\hat{\mathbf{y}} = \mathbf{W}_{\text{pred}} \mathbf{H} + \mathbf{b}_{\text{pred}}\), 
\(
\hat{\mathbf{d}} = \text{Softplus}(\mathbf{W}_{\text{diff}} \mathbf{H} + \mathbf{b}_{\text{diff}})
\).
The softplus activation ensures \( \hat{\mathbf{d}} > 0 \), yielding half-widths, and the final prediction interval for node \( v \) is given by:
\[
\hat{y}_v^{\text{low}} = \hat{y}_v - \hat{d}_v, \quad 
\hat{y}_v^{\text{up}} = \hat{y}_v + \hat{d}_v.
\]
This architecture allows QpiGNN to estimate calibrated prediction intervals end-to-end—without requiring quantile-level input, resampling, or post-processing.

\subsection{Joint Loss for Compact Prediction Intervals}
\label{subsec:loss}
To train QpiGNN to produce prediction intervals that are both calibrated and compact, we define a loss function \(\mathcal{L}_{\text{total}}\) consisting of a coverage term \(\mathcal{L}_{\text{coverage}}\) and a width regularization term \(\mathcal{L}_{\text{width}}\).
\begin{equation}
\label{eq:total_loss}
    \mathcal{L}_{\text{total}} =
    \underbrace{
    \left(
    \hat{c}
    - (1 - \alpha)
    \right)^2
    +
    \hat{\ell}_{\text{viol}}
    }_{\mathcal{L}_{\text{coverage}}}
    +
    \underbrace{
    \lambda_{\text{width}} \cdot
    \mathbb{E}_{v \sim \mathcal{V}} \left[ \hat{y}_v^{\text{up}} - \hat{y}_v^{\text{low}} \right]
    }_{\mathcal{L}_{\text{width}}},
\end{equation}
where the empirical coverage is defined as \( \hat{c} := \mathbb{P}\left( \hat{y}_v^{\text{low}} \leq y_v \leq \hat{y}_v^{\text{up}} \right), \)
and the empirical violation loss\footnote{For notational compactness, an equivalent form is \(\hat{\ell}_{\mathrm{viol}}=\mathbb{E}_{v\sim\mathcal{V}}[\max(0,(y_v-\hat{y}_v^{\mathrm{low}})(y_v-\hat{y}_v^{\mathrm{up}}))]\); we retain the decomposed form for interpretability and slightly more stable gradients.} $\hat{\ell}_{\text{viol}}$ is given by:
\[
\mathbb{E}\Big[
    |y_v - \hat{y}_v^{\text{low}}| \cdot \mathbb{I}[y_v < \hat{y}_v^{\text{low}}] + |y_v - \hat{y}_v^{\text{up}}| \cdot \mathbb{I}[y_v > \hat{y}_v^{\text{up}}]
\Big] .
\]

The violation penalty provides fine-grained feedback when predicted intervals fail to capture true targets.\footnote{In finite samples, the violation loss provides a useful training signal; once the model is calibrated, its effect diminishes and we omit it from the asymptotic analysis (Appendix~\ref{apdx:violation}).}
The term \(\lambda_{\text{width}}\) balances calibration against interval compactness.

Unlike RQR-W in Equation (\ref{eq:rqr_loss}), which entangles coverage and width into a single conditional loss—often resulting in oversmoothed intervals in GNNs~\citep{rusch2023survey}—our formulation separates these objectives.  
This separation yields more stable intervals under graph-based learning, as it directly optimizes empirical coverage and width over nodes.
We further validate this in Appendix~\ref{apdx:rqr_qpignn} by applying our loss to an RQR formulation, showing that our objective yields more stable intervals.

Specifically, it penalizes deviations from the target coverage \((1 - \alpha)\) via \((\hat{c} - (1 - \alpha))^2\), while independently regularizing interval width via $\lambda_{\text{width}} \cdot \mathbb{E}_{v \sim \mathcal{V}} [ \hat{y}_v^{\text{up}} - \hat{y}_v^{\text{low}} ]$. 
This disentangled design enables stable training and precise control over node-level uncertainty. Its linear additive penalty prevents excessive amplification and yields more stable behavior across datasets.\footnote{Conversely, when the target range is narrow, multiplicative terms may operate more stably.}
As shown in Figure~\ref{fig:gpi_framework}, jointly minimizing \(\mathcal{L}_{\text{coverage}}\) and \(\mathcal{L}_{\text{width}}\) yields prediction intervals while maintaining the coverage.

\subsubsection{Asymptotic and Finite-sample Coverage}
A goal of QpiGNN is to ensure that predicted intervals achieve the coverage level \(1 - \alpha\) at finite and asymptotic sample sizes. We provide justification for why the empirical coverage \(\hat{c}\) remains close to the target under mild conditions.



\begin{proposition}
Assume the following mild conditions: (i) the label noise \(\varepsilon_v = y_v - f(x_v)\) is bounded and weakly dependent across nodes,  (ii) the predicted mean \(\hat{y}_v\) and interval half-width \(\hat{d}_v\) converge in probability to their targets, and  (iii) node embeddings remain sufficiently diverse.  
Then, as \(N \to \infty\), the empirical coverage converges in probability to the target level \(1-\alpha\):
\[
\hat{c} \xrightarrow{P} 1 - \alpha, 
\quad \text{equivalently,} 
\]
\[
\forall \varepsilon > 0,\;
\lim_{N \to \infty} \mathbb{P}\!\left( \left| \hat{c} - (1 - \alpha) \right| > \varepsilon \right) = 0.
\]
\end{proposition}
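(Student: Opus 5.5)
The plan is to read $\hat{c}$ as the empirical node average $\hat{c}_N = \frac{1}{N}\sum_{v} Z_v$ with $Z_v := \mathbb{I}[\,|y_v-\hat{y}_v|\le \hat{d}_v\,]$. I then compare it to an oracle average built from the population targets $(f(x_v), d^\ast_v)$ in condition (ii). Here $d^\ast_v$ is the half-width at which the population loss is minimized. I assume it satisfies the calibration identity $\frac1N\sum_v \mathbb{P}(|\varepsilon_v|\le d^\ast_v)\to 1-\alpha$. This is the first-order condition of the coverage term $(c-(1-\alpha))^2$ once the violation term is dropped asymptotically, as in the paper's footnote.

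The oracle indicators are $Z^\ast_v := \mathbb{I}[|\varepsilon_v|\le d^\ast_v]$. I decompose
\[
\hat{c}_N-(1-\alpha) = \underbrace{\tfrac1N\textstyle\sum_v (Z_v - Z^\ast_v)}_{A_N} + \underbrace{\tfrac1N\textstyle\sum_v (Z^\ast_v-\mathbb{E}Z^\ast_v)}_{B_N} + \underbrace{\tfrac1N\textstyle\sum_v \mathbb{E}Z^\ast_v-(1-\alpha)}_{C_N},
\]
and show that each term vanishes in probability.

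$C_N\to 0$ by the calibration identity, i.e.\ the definition of the target.

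For $B_N$, I use condition (i). The indicators are bounded, and weak dependence means $\sum_{u,v}|\mathrm{Cov}(Z^\ast_u,Z^\ast_v)| = o(N^2)$; examples are $m$-dependence along graph distance with bounded degree, or summable $\alpha$-mixing coefficients. Then $\mathrm{Var}(B_N)=o(1)$, and Chebyshev gives $B_N\xrightarrow{P}0$. Condition (iii), diversity of embeddings, is used here too. It ensures the effective number of distinct $(f(x_v),d^\ast_v)$ configurations grows, so the oracle indicators are not degenerate copies of one another and the covariance sum really is subquadratic.

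The main obstacle is $A_N$, because the indicator is discontinuous. I first note that $|Z_v-Z^\ast_v|\le \mathbb{I}\big[\,\big||\varepsilon_v|-d^\ast_v\big|\le \delta_v\,\big]$, where $\delta_v:=|\hat{y}_v-f(x_v)|+|\hat{d}_v-d^\ast_v|$. Fix $\eta>0$ and split on the event $\{\delta_v\le\eta\}$:
\[
|A_N|\le \tfrac1N\textstyle\sum_v \mathbb{I}\big[\,||\varepsilon_v|-d^\ast_v|\le\eta\,\big] + \tfrac1N\textstyle\sum_v\mathbb{I}[\delta_v>\eta].
\]
The second term has expectation $\frac1N\sum_v\mathbb{P}(\delta_v>\eta)$. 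This tends to $0$ under (ii), provided the convergence is uniform over nodes, which I will state as the intended reading of (ii). Markov's inequality then finishes that term.

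The first term is again a bounded average of weakly dependent indicators. By the same argument as for $B_N$, it concentrates around $\frac1N\sum_v \mathbb{P}(||\varepsilon_v|-d^\ast_v|\le\eta)$. This is at most $2\eta\,\sup_v \|p_{\varepsilon_v}\|_\infty$ if the noise has bounded densities, an implicit regularity I would add, or at least continuity of the noise distribution at $\pm d^\ast_v$. Letting $\eta\downarrow 0$ then gives $A_N\xrightarrow{P}0$.

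Combining the three terms with a union bound over the $\varepsilon/3$ events yields $\mathbb{P}(|\hat{c}_N-(1-\alpha)|>\varepsilon)\to 0$. The delicate points to make explicit in the write-up are the uniformity in (ii) and the anti-concentration of the noise near the interval boundary.
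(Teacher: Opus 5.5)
Your argument is correct under the hypotheses you make explicit, but it is organized differently from the paper's. The paper applies a weak law of large numbers directly to the indicators $Z_v=\mathbb{I}[\hat{y}_v^{\text{low}}\le y_v\le\hat{y}_v^{\text{up}}]$ built from the \emph{learned} intervals. It notes $Z_v\in[0,1]$, asserts that $\mathbb{E}[Z_v]\to 1-\alpha$ follows from (i)--(iii), and treats the dependence among the $Z_v$ as negligible ``via local message passing.''

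You instead split $\hat c-(1-\alpha)$ into a plug-in error, an oracle fluctuation, and a calibration bias. This exposes two steps the paper's sketch passes over.
\begin{itemize}
\item You only apply concentration to functions of the noise, namely $Z_v^\ast$ and $\mathbb{I}\bigl[\bigl|\,|\varepsilon_v|-d_v^\ast\bigr|\le\eta\bigr]$. These inherit weak dependence directly from (i). The learned $Z_v$ are coupled through parameters fitted on all labels, so their weak dependence is not a direct consequence of (i). Your pathwise bound $|Z_v-Z_v^\ast|\le\mathbb{I}\bigl[\bigl|\,|\varepsilon_v|-d_v^\ast\bigr|\le\delta_v\bigr]$, combined with Markov's inequality on $\frac1N\sum_v\mathbb{I}[\delta_v>\eta]$, needs no independence at all.
\item You deal with the discontinuity of the indicator. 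Convergence in probability of $(\hat y_v,\hat d_v)$ gives $\mathbb{E}[Z_v]\to\mathbb{P}(|\varepsilon_v|\le d_v^\ast)$ only if the noise puts no mass at $\pm d_v^\ast$, uniformly over nodes for the average. The paper's one-line ``$\mathbb{E}[Z_v]\to1-\alpha$'' silently presupposes this.
\end{itemize}
The paper's route is shorter. Yours costs explicit extra hypotheses: uniformity in (ii), bounded noise densities, and calibrated targets. The paper's argument needs these anyway.

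Three small points to fix in the write-up.
\begin{itemize}
\item Your justification of the calibration identity as a first-order condition fails when $\lambda_{\text{width}}>0$. Let $g_v$ be the density of $|\varepsilon_v|$ and $c=\frac1N\sum_v\mathbb{P}(|\varepsilon_v|\le d_v)$. Stationarity in $d_v$ of $(c-(1-\alpha))^2+\lambda_{\text{width}}\cdot\frac1N\sum_v 2d_v$ gives $c-(1-\alpha)=-\lambda_{\text{width}}/g_v(d_v)<0$. So the population minimizer under-covers, and the identity has to be imposed as the definition of the target. That is in effect what the paper does when it posits $\mathbb{E}[Z_v]\to1-\alpha$.
\item Condition (iii) plays no role in $B_N$. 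The covariance bound for the $Z_v^\ast$ comes from the noise alone, and it would hold even if all $d_v^\ast$ coincided.
\item The fallback ``continuity at $\pm d_v^\ast$'' suffices only if it holds uniformly in $v$, e.g.\ for identically distributed continuous noise. Otherwise keep the bounded-density assumption.
\end{itemize}
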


\begin{proof}[Sketch of Proof]
Define each prediction interval as \([\hat{y}_v^{\text{low}},\; \hat{y}_v^{\text{up}}]\) and let 
\(Z_v := \mathbb{I}[\hat{y}_v^{\text{low}} \leq y_v \leq \hat{y}_v^{\text{up}}]\). 
Under assumptions (i)–(iii), the expected coverage satisfies \(\mathbb{E}[Z_v] \to 1 - \alpha\). 
By the Weak Law of Large Numbers (WLLN) ~\citep{penrose2003weak, gama2019ergodicity}, the empirical coverage
\(
\hat{c} = \frac{1}{N} \sum_{v=1}^N Z_v
\xrightarrow{P} 1 - \alpha.
\) 
\end{proof}

\begin{figure}[t]
  \centering
  \begin{minipage}{0.45\linewidth}
    \centering
    \includegraphics[width=\linewidth]{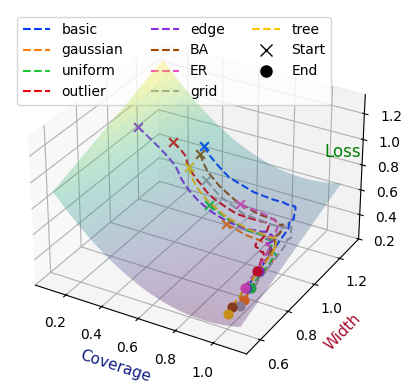}
  \end{minipage}
  \begin{minipage}{0.52\linewidth}
    \centering
    \includegraphics[width=\linewidth]{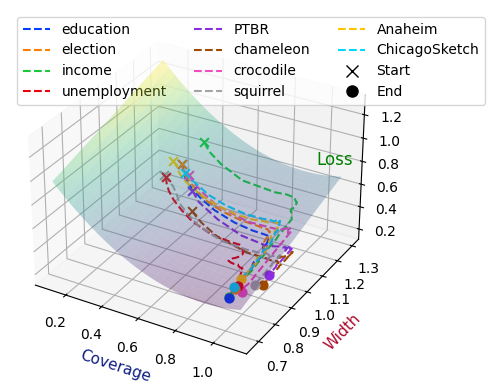}
  \end{minipage}
  \caption{
  Loss convergence trajectories on synthetic and real-world datasets.
  The paths show progression from \ding{55} (start) to \ding{108} (end).
  Left: synthetic datasets. Right: real-world datasets.
  }
  \label{fig:loss}
\end{figure}

\subsubsection{Finite-Sample Coverage}
For finite samples, deviation of empirical coverage from the target can be controlled using classical concentration inequalities. 
Under localized message passing, the bounded-difference condition holds approximately, so the inequalities of \citet{mcdiarmid1989method} and \citet{hoeffding1994probability} still apply up to a graph-dependent constant. 
This yields \(|\hat{c} - (1 - \alpha)| = \mathcal{O}(1 / \sqrt{N})\), implying that for moderate \(N\), empirical coverage \(\hat{c}\) remains close to the target \(1 - \alpha\) with high probability, consistent with the stability observed in our experiments. 

This section focuses on finite-sample guarantees derived from concentration inequalities rather than asymptotic arguments. As detailed in Appendix~\ref{apdx:dependency}, perturbing a single node affects the coverage estimator by at most $(1/N + \delta_G)$, providing the theoretical basis for the approximate bounded-difference behavior used in our analysis.
Moreover, our finite-sample guarantees bound the deviation $\left|\hat{c} - \mathbb{E}[\hat{c}]\right|$ even in the absence of exact independence, 
since the estimator exhibits controlled sensitivity to single-node perturbations, which provides the bounded-difference needed for applying concentration arguments in practical graph regimes.


\subsubsection{Optimal Width under Coverage}
To encourage compact prediction intervals, we introduce a width penalty term $\mathcal{L}_{\text{width}}$, weighted by a parameter $\lambda_{\text{width}} \geq 0$, which can be tuned via Bayesian optimization~\citep{snoek2012practical} to balance calibration (0.2--0.5; Appendix~\ref{apdx:lambda_exp}). We adopt an L1-based width penalty, avoiding the instability of L2 losses under outliers~\citep{pearce2018high,tagasovska2019single} (See Appendix~\ref{apdx:L1L2}).

Theoretically, if the conditional distribution $P(y \mid x_v)$ is symmetric around the predicted mean $\hat{y}_v$, the minimum-width interval satisfying the coverage constraint $\hat{c} \geq 1 - \alpha$ is $[\hat{y}_v - d_v^*, \; \hat{y}_v + d_v^*]$, where $d_v^* = F_v^{-1}(1 - \alpha/2)$ is the quantile of $|y_v - \hat{y}_v|$. Since the true distribution is unknown, QpiGNN instead minimizes the total loss $\mathcal{L}_{\text{total}} = \mathcal{L}_{\text{coverage}} + \lambda_{\text{width}} \cdot \mathcal{L}_{\text{width}}$, which can be interpreted as a Lagrangian relaxation~\citep{franceschi2019learning} of the constrained optimization problem. When $\hat{c}$ approaches the target $1 - \alpha$, the model prioritizes width reduction, yielding near-optimal intervals, as detailed in Appendix~\ref{apdx:width-optimality}.



\subsubsection{Convergence Properties of the Joint Loss}
The joint loss $\mathcal{L}_{\text{total}}$ is non-convex, but each component is continuous and piecewise smooth, which allows the use of standard stochastic approximation techniques. Following results from stochastic non-convex optimization~\citep{ghadimi2013stochastic}, we show that training with a diminishing learning rate ensures convergence to a stationary point. 
Specifically, under the assumptions that (i) each component is continuous and smooth, (ii) gradients are bounded, and (iii) the step size decays appropriately, the training dynamics satisfy \(\lim_{t \to \infty} \mathbb{E}\left[\|\nabla_{\theta} \mathcal{L}_{\text{total}}(\theta_t)\|\right] = 0\), implying that the parameters converge to a point where the expected gradient norm vanishes~\citep{kingma2014adam, reddi2019convergence}.

This property is crucial given the hybrid nature of the loss, which balances interval width and coverage. As shown in Figure~\ref{fig:loss}, training first reduces sharp coverage-violation penalties and then progressively narrows intervals, following a natural trajectory toward minimizing the overall loss while maintaining target coverage. A formal proof and empirical validation are provided in Appendix~\ref{apdx:loss-convergence}.

\paragraph{Limitations and Practical Considerations.}
Our theoretical guarantees rely on standard but idealized assumptions such as node-wise independence and symmetric conditional distributions. In real-world graphs, message passing induces weak dependencies, and target distributions may be skewed or heavy-tailed~\citep{jin2020graph, verma2019stability}. As noted in Appendix~\ref{apdx:dependency}, these weak dependencies do not significantly impair convergence of the coverage estimator \(\hat{c}\), and the bounded-difference condition for McDiarmid's inequality holds approximately~\citep{mcdiarmid1989method}. Moreover, QpiGNN achieves valid coverage and compact intervals under moderate distributional shifts, consistent with findings in robust quantile regression~\citep{tagasovska2019single} and relaxed quantile modeling under asymmetric noise~\citep{pouplin2024relaxed}. 
Future work may extend our theory to graph-dependent mixing, adaptive tuning of \(\lambda_{\text{width}}\) (e.g., via annealing or meta-learning), and asymmetric interval construction to better handle skewed target distributions. Furthermore, the sensitivity of the dual-head architecture to the choice of underlying GNN backbone warrants investigation. Despite simplifications, QpiGNN outperforms baselines in 7 of 10 real-world datasets (Sec.~\ref{sec:experiments}), indicating robust performance.

\section{Experiments}
\label{sec:experiments}

\subsection{Experimental Settings}
\label{subsec:setting}

\subsubsection{Datasets}
We evaluate QpiGNN on 19 datasets. The top half of Table~\ref{tab:picp_mpiw} presents synthetic graphs, while the bottom half lists real-world node-level regression benchmarks~\citep{rozemberczki2021multi}. Detailed descriptions of all datasets are provided in Appendix~\ref{apdx:dataset}.

\subsubsection{Baselines}
We compare QpiGNN with six baselines: 
SQR-GNN, an extension of the MLP-based SQR to GNNs; 
RQR$^{adj.}$-GNN, which extends RQR to GNNs by enforcing interval ordering in Equation~(\ref{eq:adj_rqr_loss});
and CF-GNN~\citep{huang2023uncertainty}. For CF-GNN, we report both our reimplementation\footnote{Same GNN architecture, layers, hidden size, and epochs.} and the original version with tuned hyperparameters\footnote{Including $\tau$, learning rate, target interval size, loss weights, and architecture.}, denoted CF-GNN$^{opt.}$.
We also include Evidential Regression (ER)-GNN, a deterministic method that estimates uncertainty by predicting evidential parameters~\citep{amini2020deep}.
For comparison with approximate Bayesian inference methods, we also evaluate BayesianNN~\citep{kendall2017uncertainties} and MC Dropout~\citep{gal2016dropout}.

\subsubsection{Metrics}
For evaluation, we primarily report Prediction Interval Coverage Probability (PICP)~\citep{rana20152d} and Mean Prediction Interval Width (MPIW)~\citep{khosravi2010prediction}, which together assess calibration and compactness.
Additional metrics are included in Appendix~\ref{apdx:eval_metrics}.

\subsubsection{Implementation}
All models target $1{-}\alpha = 0.90$, and use GraphSAGE with two layers and hidden size 64. We train with the Adam optimizer (learning rate and weight decay = $10^{-3}$) for 500 epochs, averaging over 5 or 10 runs.
MC Dropout uses 0.2 dropout rate and 100 stochastic passes. RQR$^{adj.}$-GNN is trained with fixed $\lambda = 1.0$ \footnote{Using $\lambda=0.5$ or $0.1$ yielded prediction intervals performance differences within $0.01$ compared to $\lambda=1$.} and order penalty $\gamma_{\text{order}} = 1.0$. All baselines are trained with standard MSE loss. Full details are in Appendix~\ref{apdx:baselines} and our codebase.

\subsection{Experimental Results}
\begin{table*}[t]
\centering
\caption{
Prediction intervals performance (PICP/MPIW) on 19 synthetic and real datasets. Models are grouped by dataset and evaluated on test splits. \bluebold{Bold} indicates coverage above the 90\% target ($1{-}\alpha$), and \reduline{underline} marks the lowest MPIW among those. This highlights models achieving both calibration and compactness. $\lambda^{opt.}$ denotes the width penalty selected via Bayesian optimization.}
\vspace{4pt}
\setlength{\abovecaptionskip}{5pt}
\setlength{\tabcolsep}{5pt}
\renewcommand{\arraystretch}{1.3}
\resizebox{\textwidth}{!}{
\begin{tabular}{c|cc|cc|cc|cc|cc|cc|cc|cc|cc}
\toprule
\multirow{2}{*}{\textbf{Model}} & \multicolumn{2}{c|}{\textbf{Basic}} & \multicolumn{2}{c|}{\textbf{Gaussian}} & \multicolumn{2}{c|}{\textbf{Uniform}} & \multicolumn{2}{c|}{\textbf{Outlier}} & \multicolumn{2}{c|}{\textbf{Edge}} & \multicolumn{2}{c|}{\textbf{BA}} & \multicolumn{2}{c|}{\textbf{ER}} & \multicolumn{2}{c|}{\textbf{Grid}} & \multicolumn{2}{c}{\textbf{Tree}}\\
\cline{2-19}
& \textbf{PCIP} & \textbf{MPIW} & \textbf{PCIP} & \textbf{MPIW} & \textbf{PCIP} & \textbf{MPIW} & \textbf{PCIP} & \textbf{MPIW} & \textbf{PCIP} & \textbf{MPIW} & \textbf{PCIP} & \textbf{MPIW} & \textbf{PCIP} & \textbf{MPIW} & \textbf{PCIP} & \textbf{MPIW} & \textbf{PCIP} & \textbf{MPIW} \\
\midrule
SQR 
& 0.85 & 0.33
& 0.88 & 0.50
& 0.88 & 0.51
& \bluebold{0.90} & \reduline{0.10}
& \bluebold{0.91} & 0.32
& 0.81 & 0.72
& 0.75 & 0.60
& 0.78 & 0.53
& 0.80 & 0.26
\\
RQR$^{adj.}$ 
& \bluebold{0.90} & 0.82
& 0.88 & 0.53
& \bluebold{0.90} & 0.68
& \bluebold{0.90} & 0.36
& \bluebold{0.93} & 0.83
& 0.78 & 0.75
& 0.88 & 0.77
& 0.72 & 0.48
& 0.85 & 0.68
\\
BayesianNN 
& \bluebold{1.00} & 3.01
& \bluebold{1.00} & 2.98
& \bluebold{1.00} & 3.00
& \bluebold{1.00} & 2.95
& \bluebold{1.00} & 3.06
& \bluebold{1.00} & 3.08
& \bluebold{1.00} & 3.01
& \bluebold{1.00} & 3.01
& \bluebold{1.00} & 3.00
\\
MC dropout 
& \bluebold{0.99} & 0.32 
& 0.55 & 0.20 
& 0.65 & 0.26 
& 0.58 & 0.06 
& \bluebold{1.00} & \reduline{0.30} 
& 0.67 & 0.26
& 0.76 & 0.23
& 0.33 & 0.16 
& 0.64 & 0.20
\\
CF-GNN 
& \bluebold{0.92} & 1.90
& \bluebold{0.91} & 2.90
& \bluebold{0.90} & 3.04
& \bluebold{0.93} & 1.92
& \bluebold{0.92} & 1.78
& \bluebold{0.90} & 68.27
& \bluebold{0.90} & 17.15
& \bluebold{0.94} & 3.18
& \bluebold{0.93} & 0.97
\\
\midrule
\texttt{QpiGNN} ($\lambda^{0.5}$)
& 0.89 & 0.30
& \bluebold{0.92} & \reduline{0.55}
& 0.88 & 0.43
& 0.89 & 0.47
& \bluebold{0.94} & 0.39
& \bluebold{0.98} & 0.49
& \bluebold{0.98} & \reduline{0.63}
& \bluebold{0.98} & \reduline{0.87}
& \bluebold{0.96} & \reduline{0.39}
\\
\texttt{QpiGNN} ($\lambda^{0.1}$)
& \bluebold{0.98} & 0.93
& \bluebold{0.99} & 0.84 
& \bluebold{0.99} & 0.89
& \bluebold{0.99} & 0.75
& \bluebold{1.00} & 0.97
& \bluebold{0.98} & 1.01
& \bluebold{0.99} & 0.92
& \bluebold{0.98} & 0.98
& \bluebold{1.00} & 0.59
\\
\texttt{QpiGNN} ($\lambda^{opt.}$)
& \bluebold{0.90} & \reduline{0.30} 
& \bluebold{0.95} & 0.64 
& \bluebold{0.93} & \reduline{0.62} 
& \bluebold{0.90} & 0.49 
& \bluebold{0.94} & 0.54
& \bluebold{0.98} & \reduline{0.48} 
& \bluebold{0.98} & \reduline{0.63} 
& \bluebold{0.99} & 0.93 
& \bluebold{0.96} & \reduline{0.39} 
\\
\bottomrule
\end{tabular}
}

\vspace{3pt}

\resizebox{\textwidth}{!}{
\begin{tabular}{c|cc|cc|cc|cc|cc|cc|cc|cc|cc|cc}
\toprule
\multirow{2}{*}{\textbf{Model}} & \multicolumn{2}{c|}{\textbf{Education}} & \multicolumn{2}{c|}{\textbf{Election}} & \multicolumn{2}{c|}{\textbf{Income}} & \multicolumn{2}{c|}{\textbf{Unemploy.}} & \multicolumn{2}{c|}{\textbf{Twitch}} & \multicolumn{2}{c|}{\textbf{Chameleon}} & \multicolumn{2}{c|}{\textbf{Crocodile}} & \multicolumn{2}{c|}{\textbf{Squirrel}} & \multicolumn{2}{c|}{\textbf{Anaheim}} & \multicolumn{2}{c}{\textbf{Chicago}}\\
\cline{2-21}
& \textbf{PCIP} & \textbf{MPIW} & \textbf{PCIP} & \textbf{MPIW} & \textbf{PCIP} & \textbf{MPIW} & \textbf{PCIP} & \textbf{MPIW} & \textbf{PCIP} & \textbf{MPIW} & \textbf{PCIP} & \textbf{MPIW} & \textbf{PCIP} & \textbf{MPIW} & \textbf{PCIP} & \textbf{MPIW} & \textbf{PCIP} & \textbf{MPIW} & \textbf{PCIP} & \textbf{MPIW}\\
\midrule
SQR 
& 0.88 & 0.32
& 0.89 & 0.47
& 0.86 & 0.22
& 0.87 & 0.33
& 0.30 & 0.03
& 0.37 & 0.01
& 0.44 & 0.01
& 0.22 & 0.01
& 0.88 & 0.32
& 0.87 & 0.21
\\
RQR$^{adj.}$ 
& 0.87 & 0.49
& 0.89 & 0.54
& 0.89 & 0.36
& \bluebold{0.90} & \reduline{0.38}
& \bluebold{0.91} & 0.42
& 0.86 & 0.15
& 0.87 & 0.08
& 0.89 & 0.15
& 0.85 & 0.50
& 0.88 & 0.30
\\
BayesianNN 
& \bluebold{1.00} & 2.96
& \bluebold{1.00} & 2.98
& \bluebold{1.00} & 2.97
& \bluebold{1.00} & 2.98
& \bluebold{1.00} & 3.07
& \bluebold{1.00} & 2.95
& \bluebold{1.00} & 3.07
& \bluebold{1.00} & 2.97
& \bluebold{1.00} & 2.94
& \bluebold{1.00} & 2.99
\\
MC dropout 
& 0.40 & 0.11 
& 0.48 & 0.18
& 0.45 & 0.09 
& 0.41 & 0.09
& \bluebold{0.91} & \reduline{0.15} 
& 0.47 & 0.02
& 0.46 & 0.01
& 0.31 & 0.02
& 0.50 & 0.11
& 0.34 & 0.07 
\\
CF-GNN 
& 0.88 & 2.78
& 0.89 & 1.08
& \bluebold{0.92} & 3.48
& \bluebold{0.90} & 3.16
& \bluebold{0.92} & 3.53
& - & -
& - & -
& - & -
& \bluebold{0.90} & 3.22
& \bluebold{0.90} & 3.12
\\
CF-GNN$^{opt.}$
& \bluebold{0.90} & 3.10
& \bluebold{0.91} & 0.94
& \bluebold{0.91} & 2.92
& 0.89 & 2.61
& 0.89 & 2.34
& - & -
& - & -
& - & -
& \bluebold{0.90} & 2.82
& \bluebold{0.91} & 2.26
\\
\midrule
\texttt{QpiGNN} ($\lambda^{0.5}$)
& \bluebold{0.99} & \reduline{0.57}
& \bluebold{0.98} & \reduline{0.77}
& \bluebold{0.99} & \reduline{0.41}
& \bluebold{1.00} & 0.74
& 0.59 & 0.08
& 0.51 & 0.03
& \bluebold{0.92} & \reduline{0.08}
& 0.73 & 0.07
& \bluebold{0.92} & \reduline{0.39}
& \bluebold{0.97} & \reduline{0.36}
\\
\texttt{QpiGNN} ($\lambda^{0.1}$)
& \bluebold{0.99} & 0.90
& \bluebold{1.00} & 0.97
& \bluebold{1.00} & 0.72
& \bluebold{1.00} & 0.93
& \bluebold{0.98} & 0.54
& \bluebold{0.98} & 0.40
& \bluebold{1.00} & 0.54
& \bluebold{0.99} & 0.47
& \bluebold{0.99} & 0.74
& \bluebold{0.99} & 0.60
\\
\texttt{QpiGNN} ($\lambda^{opt.}$)
& \bluebold{0.99} & 0.59
& \bluebold{0.98} & \reduline{0.77}
& \bluebold{0.99} & 0.44
& \bluebold{1.00} & 0.73
& \bluebold{0.94} & 0.36
& \bluebold{0.96} & \reduline{0.23}
& \bluebold{0.97} & 0.16
& \bluebold{0.96} & \reduline{0.18}
& \bluebold{0.93} & 0.40
& \bluebold{0.98} & \reduline{0.36}
\\
\bottomrule
\end{tabular}
}

\label{tab:picp_mpiw}
\end{table*} 

\subsubsection{Quantitative Evaluation}
Table~\ref{tab:picp_mpiw} reports PICP and MPIW on 19 datasets under three width penalties: default ($\lambda^{0.5}$), conservative ($\lambda^{0.1}$), and Bayesian-optimized ($\lambda^{opt.}$). 
On synthetic graphs, $\lambda^{opt.}$ reliably meets target coverage with the lowest MPIW in most cases, while $\lambda^{0.5}$ yields tight but sometimes under-covering intervals and $\lambda^{0.1}$ attains near-perfect coverage at the cost of wider widths. On real graphs, $\lambda^{0.5}$ provides the narrowest intervals, whereas $\lambda^{opt.}$ balances coverage and compactness (e.g., \textit{Chameleon}, \textit{Squirrel}). 
Against baselines, QpiGNN shows superior trade-off: BayesianNN ensures coverage with wide intervals, MC Dropout is narrow yet unreliable, and SQR-GNN, RQR$^{adj.}$-GNN, and CF-GNN often suffer poor calibration or unstable widths.
SQR-GNN can exhibit instability on heterophily graphs, but if slight under-coverage is permitted—as is sometimes allowed outside standard UQ conventions—QR-based baselines can also demonstrate competitive performance.
QpiGNN with $\lambda^{opt.}$ improves coverage by 22\% and reduces width by 50\% on average, with further results in Appendix~\ref{apdx:other_epx}.

\subsubsection{Qualitative Evaluation}
\begin{figure*}[t]
  \centering
  \includegraphics[width=1.0\linewidth]{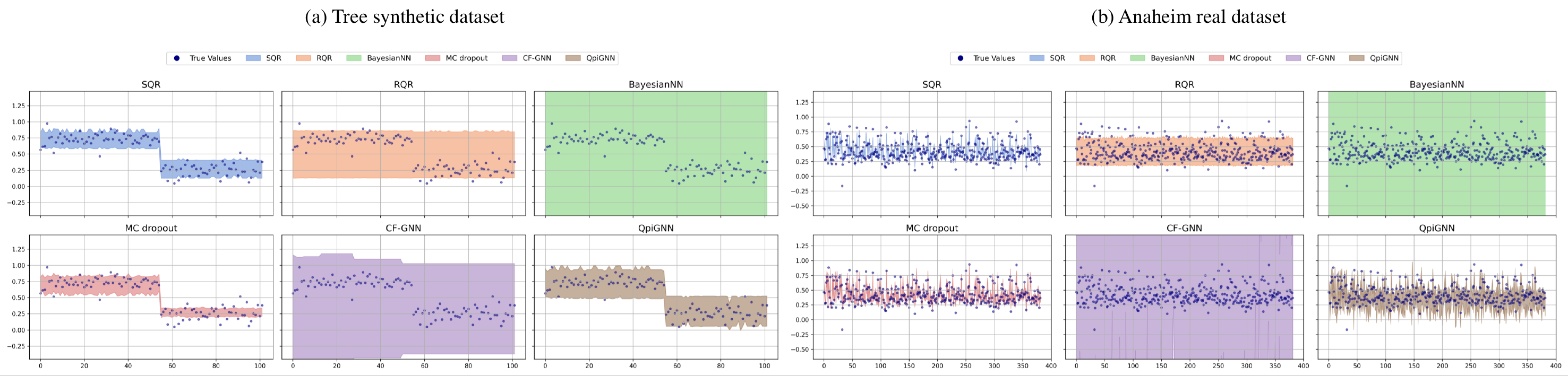}
  \caption{
  Prediction intervals comparison on the \textit{Tree} synthetic dataset and the \textit{Anaheim} real-world dataset. Models are trained with $\lambda_\text{width} = 0.5$ for 500 epochs.}
  \label{fig:qa_figure}
\end{figure*}

Fig.~\ref{fig:qa_figure} illustrates prediction intervals of six models on two representative datasets: \textit{Tree} (synthetic) and \textit{Anaheim} (real-world). On the \textit{Tree} dataset, SQR and MC Dropout produce tight but under-covering intervals, missing several true values. In contrast, \texttt{QpiGNN} slightly increases its interval width to meet the target coverage (\( 1{-}\alpha=0.90 \)), demonstrating a balance between calibration and compactness. RQR yields globally uniform intervals, while BayesianNN and CF-GNN achieve full coverage at the cost of significantly wider intervals. Similar trends are observed on the \textit{Anaheim} dataset, confirming the consistency of \texttt{QpiGNN} across data regimes. These visual comparisons highlight \texttt{QpiGNN}’s ability to adapt its intervals based on data uncertainty while maintaining strong calibration. 
Additional results for all 19 datasets are provided in Appendix~\ref{apdx:quality}.

\begin{table}[t]
\centering
\Large
\caption{
Robustness analysis on ER graphs with feature/target noise and edge dropout. 
We compare QpiGNN, SQR-GNN, and RQR$^{adj.}$-GNN, reporting average PICP and MPIW over 5 runs.
}
\setlength{\abovecaptionskip}{5pt}
\setlength{\tabcolsep}{5pt}
\renewcommand{\arraystretch}{1.3}
\resizebox{0.5\textwidth}{!}{
\begin{tabular}{lc|cc|cc|cc}
\toprule
\multirow{2}{*}{\textbf{Perturbation Type}} 
& \multirow{2}{*}{\textbf{Level}} 
& \multicolumn{2}{c|}{\textbf{QpiGNN}} 
& \multicolumn{2}{c|}{\textbf{SQR-GNN}} 
& \multicolumn{2}{c}{\textbf{RQR$^{adj.}$-GNN}} 
\\
\cline{3-8}
& & \textbf{PICP} & \textbf{MPIW} 
& \textbf{PICP} & \textbf{MPIW}
& \textbf{PICP} & \textbf{MPIW}\\
\midrule
\multirow{3}{*}{Feature Noise ($\sigma$)} 
& 0.1 & 0.89 & 0.66 
& 0.76 & 0.78 
& 0.85 & 0.78\\
& 0.2 & 0.90 & 0.80 
& 0.65 & 0.69
& 0.85 & 0.78\\
& 0.3 & 0.92 & 0.83 
& 0.71 & 0.81 
& 0.84 & 0.76\\
\midrule
\multirow{3}{*}{Target Noise ($\sigma$)} 
& 0.1 & 0.96 & 0.44 
& 0.49 & 0.52 
& 0.95 & 0.87\\
& 0.2 & 0.99 & 0.71 
& 0.95 & 0.84 
& 0.97 & 1.05\\
& 0.3 & 1.00 & 1.05 
& 0.98 & 0.99 
& 0.98 & 1.32\\
\midrule
\multirow{3}{*}{Edge Dropout ($p$)} 
& 0.2 & 0.84 & 0.21 
& 0.53 & 0.44
& 0.86 & 0.80\\
& 0.4 & 0.88 & 0.23 
& 0.60 & 0.44 
& 0.89 & 0.84\\
& 0.6 & 0.91 & 0.21 
& 0.82 & 0.50 
& 0.89 & 0.84\\
\bottomrule
\end{tabular}
}
\label{tab:exp_robust}
\end{table}

\subsubsection{Robustness to Perturbations}
We evaluate robustness under three types of perturbations
on ER graphs in Table~\ref{tab:exp_robust}. Across all settings, QpiGNN consistently maintains valid coverage and compact intervals, outperforming SQR-GNN and RQR$^{adj.}$-GNN.
Under feature noise, QpiGNN maintains high coverage (PICP $\approx 0.90$) with moderate widths, while SQR-GNN under-covers and RQR$^{adj.}$-GNN is less stable. For target noise, QpiGNN expands intervals to preserve near-perfect coverage (PICP $=1.00$ at $\sigma=0.3$, MPIW $\approx 1.05$), whereas SQR-GNN collapses early and RQR$^{adj.}$-GNN needs much wider intervals. With edge dropout, QpiGNN raises coverage (0.84 → 0.91) while keeping MPIW compact 
, outperforming SQR-GNN’s poor coverage and RQR$^{adj.}$-GNN’s wide intervals.

\subsubsection{Structural Shift Generalization}
We evaluate robustness to structural shifts by training QpiGNN on one graph type and testing on others in Appendix~\ref{apdx:gen_results}. 
This setting reflects realistic scenarios where graphs evolve, are partially observed, or reconstructed over time, making robustness to such changes critical for GNN-based UQ. 
As shown in Figure~\ref{fig:general}, models trained on expressive graphs like \textit{BA} and \textit{ER} generalize best, achieving PICP $\geq$ 0.89 on most targets. In contrast, models trained on simpler sources (e.g., \textit{Tree}, \textit{Basic}) transfer poorly, especially to irregular graphs (e.g., PICP $<$ 0.2 on \textit{BA}). We also observe a coverage–width trade-off: \textit{Basic} and \textit{Tree} yield narrow but under-covering intervals, while \textit{ER} strikes a better balance.

\begin{figure}[t]
    \centering
    \includegraphics[width=0.9\linewidth]{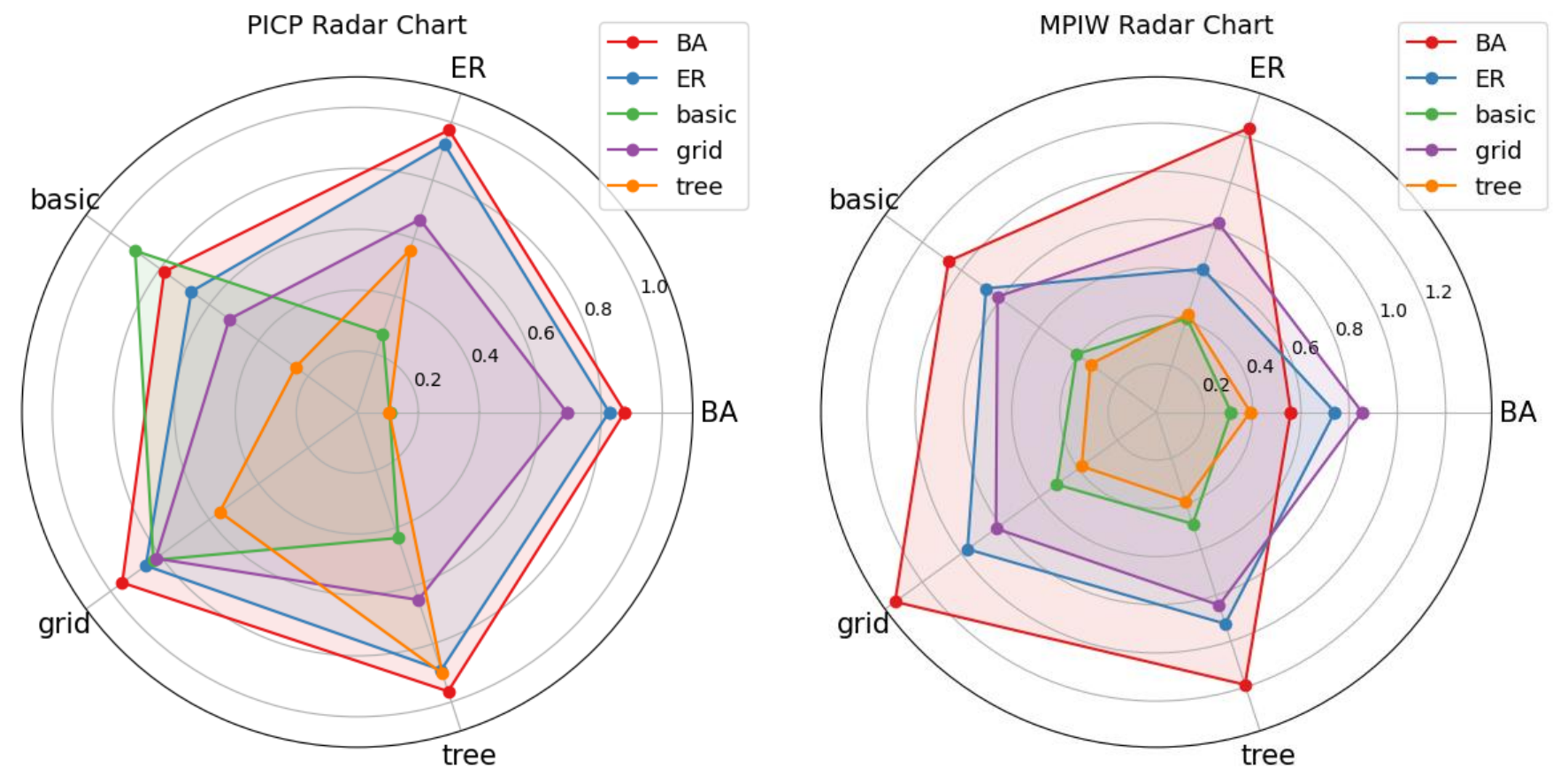}
    \caption{Radar plots of QpiGNN performance (PICP and MPIW) on synthetic graphs, demonstrating robustness to structural shifts.}
    \label{fig:general}
\end{figure}

\begin{table}[t]
\centering
\huge
\caption{
Exchangeability analysis on real-world datasets. 
We report average PICP and MPIW over 5 runs under three split types.
}
\setlength{\abovecaptionskip}{5pt}
\setlength{\tabcolsep}{5pt}
\renewcommand{\arraystretch}{1.5}
\resizebox{0.5\textwidth}{!}{
\begin{tabular}{l|cc|cc|cc|cc|cc}
\toprule
\multirow{2}{*}{\textbf{Split Type}} & \multicolumn{2}{c|}{\textbf{Education}} & \multicolumn{2}{c|}{\textbf{Election}} & \multicolumn{2}{c|}{\textbf{Income}} & \multicolumn{2}{c|}{\textbf{Unemploy.}} & \multicolumn{2}{c}{\textbf{Twitch}} \\
\cline{2-11}
& \textbf{PICP} & \textbf{MPIW} & \textbf{PICP} & \textbf{MPIW} & \textbf{PICP} & \textbf{MPIW} & \textbf{PICP} & \textbf{MPIW} & \textbf{PICP} & \textbf{MPIW} \\
\midrule
Random 
&0.99 & 0.90 & 1.00 & 0.97 & 1.00 & 0.72 & 1.00 & 0.93 & 0.98 & 0.54 \\
Degree 
& 0.99 & 0.75 & 0.99 & 0.87	& 0.99 & 0.44 & 0.99 & 0.93 & 0.94 & 0.68 \\
Community 
& 0.99 & 0.54 & 0.99 & 0.86 & 1.00 & 0.46 & 0.97 & 0.84 & 0.99 & 0.49 \\
\midrule
\multirow{2}{*}{\textbf{Split Type}} & \multicolumn{2}{c|}{\textbf{Chameleon}} & \multicolumn{2}{c|}{\textbf{Crocodile}} & \multicolumn{2}{c|}{\textbf{Squirrel}} & \multicolumn{2}{c|}{\textbf{Anaheim}} & \multicolumn{2}{c}{\textbf{Chicago}} \\
\cline{2-11}
& \textbf{PICP} & \textbf{MPIW} & \textbf{PICP} & \textbf{MPIW} & \textbf{PICP} & \textbf{MPIW} & \textbf{PICP} & \textbf{MPIW} & \textbf{PICP} & \textbf{MPIW} \\
\midrule
Random 
& 0.98 & 0.40 & 1.00 & 0.54 & 0.99 & 0.47 & 0.99 & 0.74 & 0.99 & 0.60 \\
Degree 
& 0.97 & 0.57 & 1.00 & 0.72 & 0.97 & 0.45 & 0.95 & 0.58 & 0.96 & 0.46 \\
Community 
& 0.95 & 0.58 & 0.98 & 0.74 & 0.97 & 0.47 & 0.97 & 0.60 & 1.00 & 0.53 \\
\bottomrule
\end{tabular}
}
\label{tab:exp_exchange}
\end{table}

\subsubsection{Exchangeability under Splits}
Table~\ref{tab:exp_exchange} reports QpiGNN performance under three splits: \textit{Random}, assigning nodes uniformly; \textit{Degree}, separating nodes by degree to test generalization across low- and high-degree regions; and \textit{Community}, splitting by communities to evaluate cross-community transfer.
Results show that while \textit{Random} splits yield stable coverage and widths, \textit{Degree} splits often increase interval width (e.g., \textit{Twitch}, MPIW = 0.68), and \textit{Community} splits frequently produce narrower yet valid intervals (e.g., \textit{Education}, 0.90 → 0.54). QpiGNN maintains coverage and adapts widths even under non-exchangeable splits, showing robustness to shifts.

\subsubsection{Computational Efficiency}
Appendix~\ref{apdx:compute_effi} compares efficiency, runtime, and complexity across baselines. QpiGNN balances accuracy and efficiency, with training cost comparable to lightweight baselines (e.g., SQR-GNN). All models share complexity $\mathcal{O}(E d + N d^2)$. MC Dropout and BayesianNN add overhead from sampling, while CF-GNN incurs the highest cost from calibration.

\begin{figure}[t]
    \centering
    \includegraphics[width=0.65\linewidth]{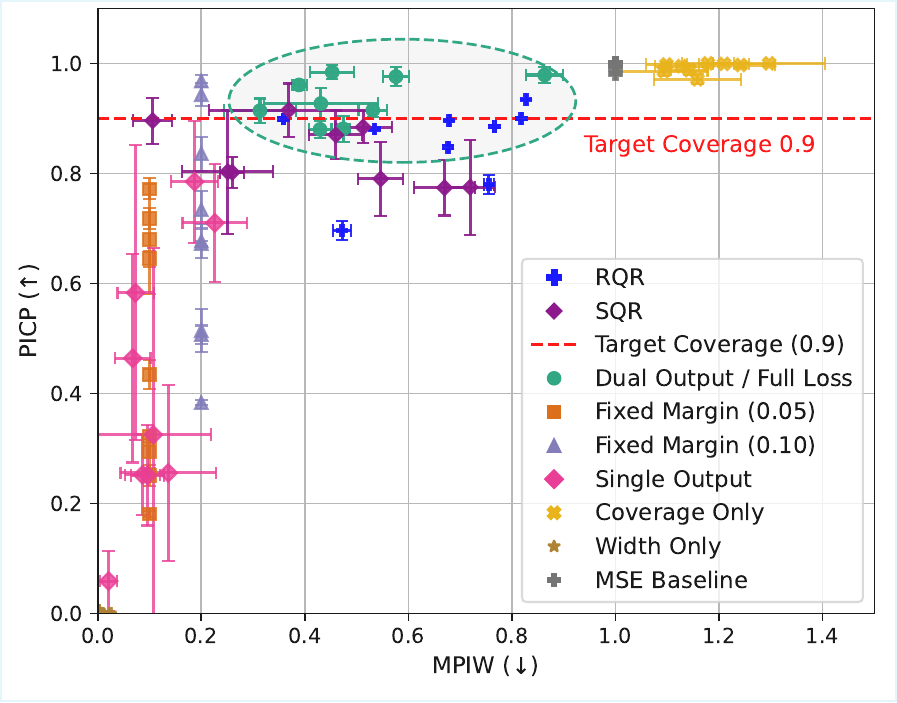}
    \vspace{-0.2cm}
    \captionof{figure}{PICP–MPIW trade-off on nine synthetic datasets, comparing our ablation with SQR and RQR.}
    \label{fig:ablation}
\end{figure}

\subsubsection{Ablation Study}
\label{sec:ablation}
We examine the contributions of QpiGNN’s architecture and loss design on nine synthetic datasets. Figure~\ref{fig:ablation} presents the PICP–MPIW trade-off, with results reported in Appendix~\ref{apdx:ablation}. 
\textit{(1) Architecture:} The dual-head model with a learnable margin (green points) consistently outperforms fixed-margin and single-output variants, achieving superior calibration–compactness trade-offs. 
\textit{(2) Loss:} The complete objective delivers the best overall performance (green dashed ellipse). Coverage-only training yields overly wide intervals, width-only training fails to calibrate, and pure MSE baseline collapses to trivial outputs, underscoring the importance of calibrated, uncertainty-aware losses.

\subsubsection{Hyperparameter Sensitivity}

\begin{figure}[t]
    \centering
    \includegraphics[width=0.76\linewidth]{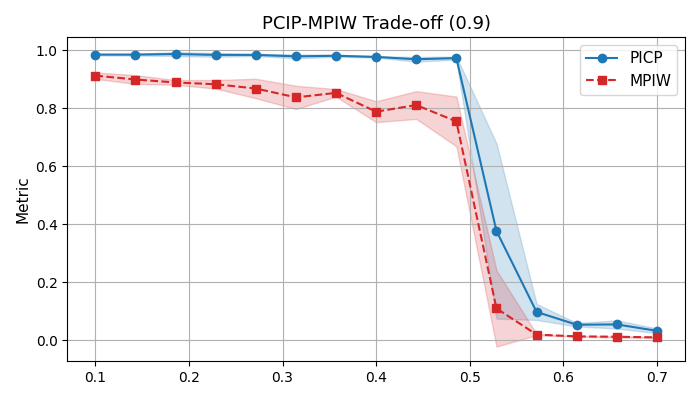}
    \caption{PICP-MPIW trade-off as a function of $\lambda_{\text{width}}$ at $1{-}\alpha=0.90$, averaged over 5 runs (500 epochs) on the ER graph.}
    \label{fig:trade}
\end{figure}

Figure~\ref{fig:trade} illustrates that at $1{-}\alpha=0.90$, PICP remains stable ($\geq$ 0.96) until $\lambda \approx 0.5$, after which both PICP and MPIW degrade rapidly, suggesting a critical tipping point.
Bayesian optimization~\citep{snoek2012practical} can be applied to select $\lambda_{\text{width}}$ for each dataset. 
As shown in Appendix~\ref{apdx:lambda_exp}, 
optimal $\lambda$ values lie between 0.2 and 0.5, with synthetic datasets favoring stronger regularization. 
Overall, this range consistently yields good calibration–compactness trade-offs, while revealing that precise coverage control remains challenging for certain baselines.
Additional analyses for the baselines and limitations are provided in Appendix~\ref{apdx:hyper_baseline}.

\section{Discussion}
\label{sec:discssuion}
QpiGNN’s design yields strong empirical performance: by decoupling prediction and uncertainty, it avoids extra procedures and provides stable, calibrated estimates under noise and dependencies. Consistently, it produces tighter reliable intervals than baselines, demonstrating efficiency and robustness in line with UQ principles that prioritize reducing width when exceeding target coverage.

\subsection{Constraints and Empirical Robustness}
Theoretical guarantees of QpiGNN rely on mild assumptions that may be violated in real graphs due to heavy-tailed noise~\citep{verma2019stability, jin2020graph}, model bias~\citep{pouplin2024relaxed}, and structural redundancy~\citep{ tagasovska2019single}. These factors represent practical violations of independence and symmetry, often limiting the reliability of UQ in graph data. Nevertheless, our results show that QpiGNN maintains calibrated and compact intervals across diverse settings—including noisy, sparse, and high-variance graphs—outperforming baselines on seven real datasets (Table~\ref{tab:picp_mpiw}). It further demonstrates resilience to feature and edge noise (Table~\ref{tab:exp_robust}) and sustains performance even under splits violating exchangeability (Table~\ref{tab:exp_exchange}), highlighting that our method does not rely on strong assumptions. 
QpiGNN’s limitations may be mitigated by incorporating coverage-aware objectives or robust training strategies to reduce model bias and handle heavy-tailed noise and structural redundancy. Moreover, the bounded-difference assumption may behave differently in dense or high-degree graphs, and clarifying this limitation is a direction for theoretical refinement.

\subsection{Task Extensions} 
QpiGNN extends naturally beyond node regression. For graph-level regression, pooled embeddings $\mathbf{h}_G = \rho(\text{GNN}(\mathbf{X}, \mathcal{E}))$ are fed into two linear heads: 
\(
\hat{y}_G = \mathbf{W}_{\text{pred}} \mathbf{h}_G + b_{\text{pred}}\),  
\(\hat{d}_G = \text{Softplus}(\mathbf{W}_{\text{diff}} \mathbf{h}_G + b_{\text{diff}}),
\)
yielding prediction intervals $[\hat{y}_G - \hat{d}_G,\ \hat{y}_G + \hat{d}_G]$.  
For link prediction, edge embeddings (e.g., $\mathbf{h}_{uv} = \phi([\mathbf{h}_u \parallel \mathbf{h}_v \parallel \mathbf{h}_u \odot \mathbf{h}_v])$) are processed in the same way, producing calibrated predictions $\hat{y}_{uv} \pm \hat{d}_{uv}$.  
While classification is not directly supported, the quantile-free dual-head design can be adapted via predictive sets or confidence margins.
\section{Conclusion}
\label{sec:conclusion}
We propose QpiGNN, a GNN-based framework for uncertainty quantification that estimates calibrated and compact prediction intervals for node-level regression. By combining a dual-head architecture with a quantile-free joint loss, QpiGNN enables end-to-end uncertainty estimation without requiring quantile-level inputs, resampling, or post-processing.
Experiments on diverse graph datasets show that QpiGNN produces well-calibrated prediction intervals that remain robust to noise, non-exchangeability, and structural shifts. Ablation studies further confirm the contribution of the dual-head design and quantile-free joint loss on performance.
Future work includes extending QpiGNN to broader graph applications requiring reliable UQ and improving its uncertainty modeling by disentangling aleatoric and epistemic components, enabling more informed decision-making in high-stakes settings.
Another important direction is applying QpiGNN to dynamic or evolving graphs, where structural drift and temporal dependencies introduce additional uncertainties that remain unexplored.



\section*{Acknowledgements}
This work was supported by the National Research Foundation of Korea (NRF) grants funded by the Korea government, the Ministry of Education (MOE) (No. RS-2025-25422445), and the Ministry of Science and ICT (MSIT) (No. RS-2025-25435830).

\section*{Impact Statement}
This paper presents QpiGNN, a framework for uncertainty quantification in GNNs, whose goal is to advance the field of Machine Learning. 
Reliable uncertainty estimates are critical in high-stakes domains such as healthcare and criminal justice, where miscalibrated predictions can lead to harmful decisions. By providing calibrated and compact prediction intervals without requiring strong assumptions or costly post-processing, our work aims to improve the trustworthiness of GNN-based systems in real-world applications. There are many potential societal consequences of our work, none of which we feel must be specifically highlighted here.

\bibliography{0.ref}

@article{amini2020deep,
  title={Deep evidential regression},
  author={Amini, Alexander and Schwarting, Wilko and Soleimany, Ava and Rus, Daniela},
  journal={NeurIPS},
  volume={33},
  pages={14927--14937},
  year={2020}
}

@article{zhou2024hdm,
  title={{HDM-GNN}: {A} Heterogeneous Dynamic Multi-view {Graph} {Neural} {Network} for Crime Prediction},
  author={Zhou, Binbin and Zhou, Hang and Wang, Weikun and Chen, Liming and Ma, Jianhua and Zheng, Zengwei},
  journal={ACM Transactions on Sensor Networks},
  year={2024},
  note={Online published}
}

@article{li2020graph,
  title={{Graph} {Neural} {Network}-based Diagnosis Prediction},
  author={Li, Yang and Qian, Buyue and Zhang, Xianli and Liu, Hui},
  journal={Big Data},
  volume={8},
  number={5},
  pages={379--390},
  year={2020},
  publisher={Mary Ann Liebert, Inc., publishers 140 Huguenot Street, 3rd Floor New~…}
}

@inproceedings{jiang2024chasing,
  title={Chasing Fairness in {Graphs}: {A} {GNN} Architecture Perspective},
  author={Jiang, Zhimeng and Han, Xiaotian and Fan, Chao and Liu, Zirui and Zou, Na and Mostafavi, Ali and Hu, Xia},
  booktitle={AAAI},
  volume={38},
  pages={21214--21222},
  year={2024}
}

@article{Kwon2022,
   author = {Youngchun Kwon and Dongseon Lee and Youn-Suk Choi and Seokho Kang},
   doi = {10.1186/s13321-021-00579-z},
   isbn = {1332102100579},
   issue = {2},
   journal = {Journal of Cheminformatics},
   title = {Uncertainty-aware prediction of chemical reaction yields with graph neural networks},
   volume = {14},
   pages={2},
   year = {2022},
}

@article{huang2023uncertainty,
  title={Uncertainty Quantification over {Graph} with Conformalized {Graph} {Neural} {Networks}},
  author={Huang, Kexin and Jin, Ying and Candes, Emmanuel and Leskovec, Jure},
  journal={NeurIPS},
  volume={36},
  pages={26699--26721},
  year={2023}
}

@inproceedings{pouplin2024relaxed,
  title={Relaxed Quantile Regression: Prediction Intervals for Asymmetric Noise},
  author={Pouplin, Thomas and Jeffares, Alan and Seedat, Nabeel and Van Der Schaar, Mihaela},
  booktitle={ICML},
  pages={40951--40981},
  year={2024}
}

@article{chen2024uncertainty,
  title={Uncertainty Quantification on {Graph} Learning: {A} Survey},
  author={Chao Chen and Chenghua Guo and Rui Xu and Xiangwen Liao and Xi Zhang and Sihong Xie and Hui Xiong and Philip S. Yu},
  journal={arXiv preprint arXiv:2404.14642},
  year={2024}
}

@inproceedings{zhao2024conformalized,
  title={Conformalized Link Prediction on {Graph} {Neural} {Networks}},
  author={Zhao, Tianyi and Kang, Jian and Cheng, Lu},
  booktitle={KDD},
  pages={4490--4499},
  year={2024}
}

@article{koenker1978regression,
  title={Regression quantiles},
  author={Koenker, Roger and Bassett Jr, Gilbert},
  journal={Econometrica: Journal of the Econometric Society},
  pages={33--50},
  year={1978},
  publisher={JSTOR}
}

@article{khosravi2010lower,
  title={Lower upper bound estimation method for construction of neural network-based prediction intervals},
  author={Khosravi, Abbas and Nahavandi, Saeid and Creighton, Doug and Atiya, Amir F},
  journal={IEEE Transactions on Neural Networks},
  volume={22},
  number={3},
  pages={337--346},
  year={2010},
  publisher={IEEE}
}

@article{zhao2020uncertainty,
  title={Uncertainty aware semi-supervised learning on graph data},
  author={Zhao, Xujiang and Chen, Feng and Hu, Shu and Cho, Jin-Hee},
  journal={NeurIPS},
  volume={33},
  pages={12827--12836},
  year={2020}
}

@article{stadler2021graph,
  title={Graph posterior network: Bayesian predictive uncertainty for node classification},
  author={Stadler, Maximilian and Charpentier, Bertrand and Geisler, Simon and Z{\"u}gner, Daniel and G{\"u}nnemann, Stephan},
  journal={NeurIPS},
  volume={34},
  pages={18033--18048},
  year={2021}
}

@inproceedings{kang2022jurygcn,
  title={{JuryGCN}: Quantifying Jackknife Uncertainty on {Graph} {Convolutional} {Networks}},
  author={Kang, Jian and Zhou, Qinghai and Tong, Hanghang},
  booktitle={KDD},
  pages={742--752},
  year={2022}
}

@article{wang2024uncertainty,
  title={Uncertainty quantification of spatiotemporal travel demand with probabilistic graph neural networks},
  author={Wang, Qingyi and Wang, Shenhao and Zhuang, Dingyi and Koutsopoulos, Haris and Zhao, Jinhua},
  journal={IEEE Transactions on Intelligent Transportation Systems},
  year={2024},
  publisher={IEEE}
}

@inproceedings{bazhenov2022towards,
  title={Towards {OOD} Detection in {Graph} Classification from Uncertainty Estimation Perspective},
  author={Bazhenov, Gleb and Ivanov, Sergei and Panov, Maxim and Zaytsev, Alexey and Burnaev, Evgeny},
  booktitle={ICML PODS Workshop},
  year={2022},
  note={arXiv preprint arXiv:2206.10691}
}

@article{wang2024uncertainty2,
  title={Uncertainty in {Graph} {Neural} {Networks}: {A} Survey},
  author={Wang, Fangxin and Liu, Yuqing and Liu, Kay and Wang, Yibo and Medya, Sourav and Yu, Philip S},
  journal={Transactions on Machine Learning Research},
  year={2024}
}

@article{liao2023ultra,
  title={Ultra-short-term interval prediction of wind power based on graph neural network and improved bootstrap technique},
  author={Liao, Wenlong and Wang, Shouxiang and Bak-Jensen, Birgitte and Pillai, Jayakrishnan Radhakrishna and Yang, Zhe and Liu, Kuangpu},
  journal={MPCE},
  volume={11},
  number={4},
  pages={1100--1114},
  year={2023},
  publisher={SGEPRI}
}

@article{de2024positional,
  title={Positional Encoder {Graph} Quantile {Neural} {Networks} for Geographic Data},
  author={de Amorim, William ER and Sisson, Scott A and Rodrigues, T and Nott, David J and Rodrigues, Guilherme S},
  journal={arXiv preprint arXiv:2409.18865},
  year={2024}
}

@inproceedings{zhang2023delivery,
  title={Delivery time prediction using large-scale graph structure learning based on quantile regression},
  author={Zhang, Lei and Zhou, Xin and Zeng, Zhiwei and Cao, Yiming and Xu, Yonghui and Wang, Mingliang and Wu, Xingyu and Liu, Yong and Cui, Lizhen and Shen, Zhiqi},
  booktitle={ICDE},
  pages={3403--3416},
  year={2023}
}

@article{tagasovska2019single,
  title={Single-model uncertainties for deep learning},
  author={Tagasovska, Natasa and Lopez-Paz, David},
  journal={NeurIPS},
  volume={32},
  pages={6414--6425},
  year={2019}
}

@article{ghadimi2013stochastic,
  title={Stochastic first-and zeroth-order methods for nonconvex stochastic programming},
  author={Ghadimi, Saeed and Lan, Guanghui},
  journal={SIOPT},
  volume={23},
  number={4},
  pages={2341--2368},
  year={2013},
  publisher={SIAM}
}

@inproceedings{kingma2014adam,
  title={Adam: A Method for Stochastic Optimization},
  author={Kingma, Diederik P and Ba, Jimmy},
  booktitle={ICLR},
  year={2015},
  note={arXiv preprint arXiv:1412.6980}
}

@inproceedings{reddi2019convergence,
  title={On the Convergence of Adam and Beyond},
  author={Reddi, Sashank J and Kale, Satyen and Kumar, Sanjiv},
  booktitle={ICLR},
  year={2018},
  note={arXiv preprint arXiv:1904.09237}
}

@article{hoeffding1994probability,
  title={Probability inequalities for sums of bounded random variables},
  author={Hoeffding, Wassily},
  journal={The Collected Works of Wassily Hoeffding},
  pages={409--426},
  year={1994},
  publisher={Springer}
}

@article{mcdiarmid1989method,
  title={On the method of bounded differences},
  author={McDiarmid, Colin},
  journal={Surveys in Combinatorics},
  volume={141},
  number={1},
  pages={148--188},
  year={1989},
  publisher={Norwich}
}

@inproceedings{gal2016dropout,
  title={Dropout as a bayesian approximation: Representing model uncertainty in deep learning},
  author={Gal, Yarin and Ghahramani, Zoubin},
  booktitle={ICML},
  pages={1050--1059},
  year={2016}
}

@article{rana20152d,
  title={2D-interval forecasts for solar power production},
  author={Rana, Mashud and Koprinska, Irena and Agelidis, Vassilios G},
  journal={Solar Energy},
  volume={122},
  pages={191--203},
  year={2015},
  publisher={Elsevier}
}

@article{khosravi2010prediction,
  title={A prediction interval-based approach to determine optimal structures of neural network metamodels},
  author={Khosravi, Abbas and Nahavandi, Saeid and Creighton, Doug},
  journal={Expert Systems with Applications},
  volume={37},
  number={3},
  pages={2377--2387},
  year={2010},
  publisher={Elsevier}
}

@article{gneiting2007strictly,
  title={Strictly proper scoring rules, prediction, and estimation},
  author={Gneiting, Tilmann and Raftery, Adrian E.},
  journal={Journal of the American Statistical Association},
  volume={102},
  number={477},
  pages={359--378},
  year={2007},
  publisher={Taylor \& Francis}
}

@article{winkler1994evaluating,
  title={Evaluating probabilities: Asymmetric scoring rules},
  author={Winkler, Robert L.},
  journal={Management Science},
  volume={40},
  number={11},
  pages={1395--1405},
  year={1994},
  publisher={INFORMS}
}

@book{vovk2005algorithmic,
  title={Algorithmic Learning in a Random World},
  author={Vovk, Vladimir and Gammerman, Alexander and Shafer, Glenn},
  volume={29},
  year={2005},
  publisher={Springer}
}

@article{zhou2020graph,
  title={Graph neural networks: A review of methods and applications},
  author={Zhou, Jie and Cui, Ganqu and Hu, Shengding and Zhang, Zhengyan and Yang, Cheng and Liu, Zhiyuan and Wang, Lifeng and Li, Changcheng and Sun, Maosong},
  journal={AI Open},
  volume={1},
  pages={57--81},
  year={2020},
  publisher={Elsevier}
}

@article{angelopoulos2021gentle,
  title={A Gentle Introduction to Conformal Prediction and Distribution-Free Uncertainty Quantification},
  author={Angelopoulos, Anastasios N. and Bates, Stephen},
  journal={Foundations and Trends in Machine Learning},
  volume={16},
  number={4},
  pages={494--591},
  year={2023},
  publisher={Now Publishers}
}

@inproceedings{ma2021homophily,
  title={Is Homophily a Necessity for Graph Neural Networks?},
  author={Ma, Yao and Liu, Xiaorui and Shah, Neil and Tang, Jiliang},
  booktitle={ICLR},
  year={2022},
  note={arXiv preprint arXiv:2106.06134}
}

@article{hamilton2017inductive,
  title={Inductive representation learning on large graphs},
  author={Hamilton, Will and Ying, Zhitao and Leskovec, Jure},
  journal={NeurIPS},
  volume={30},
  pages={1024--1034},
  year={2017}
}

@inproceedings{kipf2016semi,
  title={Semi-Supervised Classification with Graph Convolutional Networks},
  author={Kipf, Thomas N and Welling, Max},
  booktitle={ICLR},
  year={2017},
  note={arXiv preprint arXiv:1609.02907}
}

@article{snoek2012practical,
  title={Practical bayesian optimization of machine learning algorithms},
  author={Snoek, Jasper and Larochelle, Hugo and Adams, Ryan P.},
  journal={NeurIPS},
  volume={25},
  year={2012},
  pages={2960--2968},
}

@article{kendall2017uncertainties,
  title={What uncertainties do we need in bayesian deep learning for computer vision?},
  author={Kendall, Alex and Gal, Yarin},
  journal={NeurIPS},
  volume={30},
  year={2017},
  pages={5574--5584},
}

@inproceedings{li2018deeper,
  title={Deeper insights into graph convolutional networks for semi-supervised learning},
  author={Li, Qimai and Han, Zhichao and Wu, Xiao-Ming},
  booktitle={AAAI},
  volume={32},
  year={2018},
  pages={3538--3545},
}

@article{lakshminarayanan2017simple,
  title={Simple and scalable predictive uncertainty estimation using deep ensembles},
  author={Lakshminarayanan, Balaji and Pritzel, Alexander and Blundell, Charles},
  journal={NeurIPS},
  volume={30},
  year={2017},
  pages={6402--6413},
}

@article{rozemberczki2021multi,
  title={Multi-scale attributed node embedding},
  author={Rozemberczki, Benedek and Allen, Carl and Sarkar, Rik},
  journal={Journal of Complex Networks},
  volume={9},
  number={2},
  pages={cnab014},
  year={2021},
  publisher={Oxford University Press}
}

@article{zhou2020non,
  title={Non-crossing quantile regression for distributional reinforcement learning},
  author={Zhou, Fan and Wang, Jianing and Feng, Xingdong},
  journal={NeurIPS},
  volume={33},
  pages={15909--15919},
  year={2020}
}

@inproceedings{franceschi2019learning,
  title={Learning discrete structures for graph neural networks},
  author={Franceschi, Luca and Niepert, Mathias and Pontil, Massimiliano and He, Xiao},
  booktitle={ICML},
  pages={1972--1982},
  year={2019}
}

@inproceedings{verma2019stability,
  title={Stability and generalization of graph convolutional neural networks},
  author={Verma, Saurabh and Zhang, Zhi-Li},
  booktitle={KDD},
  pages={1539--1548},
  year={2019}
}

@article{rusch2023survey,
  title={A survey on oversmoothing in graph neural networks},
  author={Rusch, T Konstantin and Bronstein, Michael M and Mishra, Siddhartha},
  journal={arXiv preprint arXiv:2303.10993},
  year={2023}
}

@inproceedings{pearce2018high,
  title={High-quality prediction intervals for deep learning: A distribution-free, ensembled approach},
  author={Pearce, Tim and Brintrup, Alexandra and Zaki, Mohamed and Neely, Andy},
  booktitle={ICML},
  pages={4075--4084},
  year={2018}
}

@inproceedings{lin2024bemap,
  title={Bemap: Balanced message passing for fair graph neural network},
  author={Lin, Xiao and Kang, Jian and Cong, Weilin and Tong, Hanghang},
  booktitle={LoG},
  pages={37--1},
  year={2024}
}

@inproceedings{jin2020graph,
  title={Graph structure learning for robust graph neural networks},
  author={Jin, Wei and Ma, Yao and Liu, Xiaorui and Tang, Xianfeng and Wang, Suhang and Tang, Jiliang},
  booktitle={KDD},
  pages={66--74},
  year={2020}
}

@article{gama2019ergodicity,
  title={Ergodicity in stationary graph processes: A weak law of large numbers},
  author={Gama, Fernando and Ribeiro, Alejandro},
  journal={IEEE Transactions on Signal Processing},
  volume={67},
  number={10},
  pages={2761--2774},
  year={2019},
  publisher={IEEE}
}

@article{penrose2003weak,
  title={Weak laws of large numbers in geometric probability},
  author={Penrose, Mathew D and Yukich, Joseph E},
  journal={The Annals of Applied Probability},
  volume={13},
  number={1},
  pages={277--303},
  year={2003},
  publisher={Institute of Mathematical Statistics}
}
\bibliographystyle{icml2026}

\newpage
\appendix
\onecolumn

\section{Training Algorithm for QpiGNN}
\label{apdx:alg}

\begin{algorithm}[ht]
\caption{Training QpiGNN with Coverage--Width Joint Loss}
\label{alg:qpi-alg}

\noindent\textbf{Input:} Graph $G=(\mathcal{V},\mathcal{E})$, node features $\mathbf{X}\in\mathbb{R}^{N\times d}$, targets $\mathbf{y}\in\mathbb{R}^N$, coverage $1-\alpha$, $\lambda_{\text{width}}\ge0$, epochs $T$, learning rate $\eta$, weight decay $\omega$.\\
\textbf{Output:} Trained parameters $\boldsymbol{\theta}$.

\vspace{2pt}
\noindent\textbf{Procedure.}
\begin{enumerate}
  \item Initialize $\boldsymbol{\theta}$ and optimizer $\texttt{Adam}(\boldsymbol{\theta};\eta,\omega)$.
  \item For $t=1,\dots,T$:
  \begin{enumerate}
    \item Compute embeddings:
    \begin{align*}
      \mathbf{H}_1 &\leftarrow \mathrm{ReLU}\!\left(\mathrm{SAGEConv}_1(\mathbf{X},\mathcal{E})\right),\\
      \mathbf{H}_2 &\leftarrow \mathrm{ReLU}\!\left(\mathrm{SAGEConv}_2(\mathbf{H}_1,\mathcal{E})\right).
    \end{align*}
    \item Dual-head outputs:
    \begin{align*}
      \hat{\mathbf{y}} &\leftarrow \mathrm{Linear}_{\text{pred}}(\mathbf{H}_2),\\
      \hat{\mathbf{d}} &\leftarrow \mathrm{softplus}\!\left(\mathrm{Linear}_{\text{width}}(\mathbf{H}_2)\right).
    \end{align*}
    \item Intervals: \(\hat{\mathbf{y}}^{\text{low}} \leftarrow \hat{\mathbf{y}}-\hat{\mathbf{d}}\),
          \(\hat{\mathbf{y}}^{\text{up}} \leftarrow \hat{\mathbf{y}}+\hat{\mathbf{d}}\).
    \item Coverage and violation:
    \begin{align*}
      \hat{c} &:= \frac{1}{|\mathcal{V}|}\sum_{v\in\mathcal{V}}
      \mathbb{I}\!\left(\hat{y}_v^{\text{low}} \le y_v \le \hat{y}_v^{\text{up}}\right),\\
      \hat{\ell}_{\text{viol}} &:= \frac{1}{|\mathcal{V}|}\sum_{v\in\mathcal{V}}
      \Big(
      |y_v-\hat{y}_v^{\text{low}}|\,\mathbb{I}[y_v<\hat{y}_v^{\text{low}}]
      +
      |y_v-\hat{y}_v^{\text{up}}|\,\mathbb{I}[y_v>\hat{y}_v^{\text{up}}]
      \Big).
    \end{align*}
    \item Loss:
    \begin{align*}
      \mathcal{L}_{\text{coverage}} &:= \left(\hat{c}-(1-\alpha)\right)^2 + \hat{\ell}_{\text{viol}},\\
      \mathcal{L}_{\text{width}} &:= \frac{1}{|\mathcal{V}|}\sum_{v\in\mathcal{V}}
      \left(\hat{y}_v^{\text{up}}-\hat{y}_v^{\text{low}}\right),\\
      \mathcal{L}_{\text{total}} &:= \mathcal{L}_{\text{coverage}} + \lambda_{\text{width}}\mathcal{L}_{\text{width}}.
    \end{align*}
    \item Update: one step on \(\nabla_{\boldsymbol{\theta}}\mathcal{L}_{\text{total}}\).
  \end{enumerate}
\end{enumerate}
\end{algorithm}

Algorithm~\ref{alg:qpi-alg} outlines the training process for QpiGNN, which learns to generate node-specific prediction intervals through a dual-head architecture: one head predicts the mean response, while the other estimates the interval width. The total loss $\mathcal{L}_{\text{total}}$ combines a coverage constraint $\mathcal{L}_{\text{coverage}}$, which penalizes incorrect interval coverage, and a width penalty $\mathcal{L}_{\text{width}}$, which encourages compact and informative intervals. The contribution of the width term is controlled by a regularization weight $\lambda_{\text{width}} \geq 0$, which governs the trade-off between calibration and interval tightness. By eliminating the need for explicit quantile estimation or conformal post-processing, QpiGNN enables fast, one-pass uncertainty quantification (UQ) that scales efficiently with graph size and structure.

\section{Applying QpiGNN Loss to an RQR-Style Formulation}
\label{apdx:rqr_qpignn}
\begin{table}[t]
\centering
\caption{
Comparison using QpiGNN loss replaced by the RQR loss across 19 synthetic and real datasets (10 runs, target coverage 90\%).
}
\vspace{-0.1cm}
\setlength{\abovecaptionskip}{5pt}
\setlength{\tabcolsep}{5pt}
\renewcommand{\arraystretch}{1.3}
\resizebox{\textwidth}{!}{
\begin{tabular}{c|cc|cc|cc|cc|cc|cc|cc|cc|cc}
\toprule
\multirow{2}{*}{\textbf{Model}} & \multicolumn{2}{c|}{\textbf{Basic}} & \multicolumn{2}{c|}{\textbf{Gaussian}} & \multicolumn{2}{c|}{\textbf{Uniform}} & \multicolumn{2}{c|}{\textbf{Outlier}} & \multicolumn{2}{c|}{\textbf{Edge}} & \multicolumn{2}{c|}{\textbf{BA}} & \multicolumn{2}{c|}{\textbf{ER}} & \multicolumn{2}{c|}{\textbf{Grid}} & \multicolumn{2}{c}{\textbf{Tree}}\\
\cline{2-19}
& \textbf{PCIP} & \textbf{MPIW} & \textbf{PCIP} & \textbf{MPIW} & \textbf{PCIP} & \textbf{MPIW} & \textbf{PCIP} & \textbf{MPIW} & \textbf{PCIP} & \textbf{MPIW} & \textbf{PCIP} & \textbf{MPIW} & \textbf{PCIP} & \textbf{MPIW} & \textbf{PCIP} & \textbf{MPIW} & \textbf{PCIP} & \textbf{MPIW} \\
\midrule
QpiGNN ($\lambda = 0.5$)
& 0.89 & 0.30
& 0.92 & 0.55
& 0.88 & 0.43
& 0.89 & 0.47
& 0.94 & 0.39
& 0.98 & 0.49
& 0.98 & 0.63
& 0.98 & 0.87
0.96 & 0.39
\\
QpiGNN ($\lambda = optimal$)
& 0.90 & 0.30
& 0.95 & 0.64
& 0.93 & 0.62
& 0.90 & 0.49
& 0.94 & 0.54
& 0.98 & 0.48
& 0.98  0.63
& 0.99 & 0.93
& 0.96 & 0.39
\\
QpiGNN w/ RQRLoss ($\lambda = 0.1$)
& 0.92 & 0.84 
& 0.93 & 0.61
& 0.90 & 0.71
& 0.94 & 0.46
& 0.93 & 0.85
& 0.19 & 0.85
& 0.50 & 0.83
& 0.87 & 0.62
& 0.27 & 0.81
\\
\bottomrule
\end{tabular}
}

\vspace{3pt}

\resizebox{\textwidth}{!}{
\begin{tabular}{c|cc|cc|cc|cc|cc|cc|cc|cc|cc|cc}
\toprule
\multirow{2}{*}{\textbf{Model}} & \multicolumn{2}{c|}{\textbf{Education}} & \multicolumn{2}{c|}{\textbf{Election}} & \multicolumn{2}{c|}{\textbf{Income}} & \multicolumn{2}{c|}{\textbf{Unemploy.}} & \multicolumn{2}{c|}{\textbf{Twitch}} & \multicolumn{2}{c|}{\textbf{Chameleon}} & \multicolumn{2}{c|}{\textbf{Crocodile}} & \multicolumn{2}{c|}{\textbf{Squirrel}} & \multicolumn{2}{c|}{\textbf{Anaheim}} & \multicolumn{2}{c}{\textbf{Chicago}}\\
\cline{2-21}
& \textbf{PCIP} & \textbf{MPIW} & \textbf{PCIP} & \textbf{MPIW} & \textbf{PCIP} & \textbf{MPIW} & \textbf{PCIP} & \textbf{MPIW} & \textbf{PCIP} & \textbf{MPIW} & \textbf{PCIP} & \textbf{MPIW} & \textbf{PCIP} & \textbf{MPIW} & \textbf{PCIP} & \textbf{MPIW} & \textbf{PCIP} & \textbf{MPIW} & \textbf{PCIP} & \textbf{MPIW}\\
\midrule
QpiGNN ($\lambda = 0.5$)
& 0.99 & 0.57
& 0.98 & 0.77
& 0.99 & 0.41
& 1.00 & 0.74
& 0.59 & 0.08
& 0.51 & 0.03
& 0.92 & 0.08
& 0.73 & 0.07
& 0.92 & 0.39
& 0.97 & 0.36
\\
QpiGNN ($\lambda = optimal$)
& 0.99 & 0.59
& 0.98 & 0.77
& 0.99 & 0.44
& 1.00 & 0.73
& 0.94 & 0.36
& 0.96 & 0.23
& 0.97 & 0.16
& 0.96 & 0.18
& 0.93 & 0.40
& 0.98 & 0.36
\\
QpiGNN w/ RQRLoss ($\lambda = 0.1$)
& 0.90 & 0.53
& 0.89 & 0.52
& 0.90 & 0.37
& 0.84 & 0.46
& 0.95 & 0.72
& 0.94 & 0.30
& 0.97 & 0.24
& 0.92 & 0.25
& 0.83 & 0.44
& 0.82 & 0.41
\\
\bottomrule
\end{tabular}
}

\label{tab:qpignn_rqr}
\end{table} 

Keeping the architecture fixed, we replaced the QpiGNN objective with the RQR loss and evaluated both variants across the 19 synthetic and real datasets in Table~\ref{tab:qpignn_rqr}. This controlled setup allows us to isolate the effect of the loss function while keeping the encoder and training pipeline identical.

Across the synthetic datasets, QpiGNN maintains near-target coverage with compact intervals, whereas the RQR-based variant often collapses, showing under-coverage (e.g., 0.19 on BA, 0.27 on Tree) and inflated intervals. A similar pattern appears on real-world graphs: QpiGNN provides reliable coverage with reasonable interval widths, while the RQR-style model shows degradation, including under-coverage and wider intervals on datasets such as Twitch, Chameleon, and Crocodile.

Overall, these results confirm that simply substituting the loss with an RQR formulation is insufficient for stable UQ on graphs. The proposed joint loss (Equation (\ref{eq:total_loss})) is crucial for achieving both coverage reliability and interval compactness under graph-dependent residual structures.

\section{Theoretical Foundations of QpiGNN}
\label{apdx:proofs}
In this appendix, we provide theoretical insights and guarantees underpinning the design of QpiGNN. Specifically, we analyze the method along three key dimensions: 
\begin{enumerate}
    \item The convergence of empirical coverage to the desired target level \(1-\alpha\)
    \item The optimality of predicted interval widths under the coverage constraint
    \item The convergence behavior of the proposed loss function under gradient-based optimization. 
\end{enumerate}
These results offer a foundational understanding of why QpiGNN is both statistically sound and practically effective for node-level UQ in graph-structured data.

\subsection{Coverage Consistency}
\label{apdx:coverage-consistency}
We formally establish that the empirical coverage \( \hat{c} \), defined as the proportion of true targets contained within the predicted intervals, converges to the target level \( 1 - \alpha \) as the number of nodes increases. This result provides theoretical justification for the calibration behavior observed in QpiGNN.

\begin{proposition}[Asymptotic Coverage Consistency]
Let \(\hat{y}_v\) and \(\hat{d}_v > 0\) denote the predicted mean and interval half-width for node \(v\), forming the prediction interval \([\hat{y}_v^{\text{low}}, \hat{y}_v^{\text{up}}]=[\hat{y}_v - \hat{d}_v,\; \hat{y}_v + \hat{d}_v]\). Suppose the following conditions hold:
\begin{itemize}
  \item (A1) The noise \(\varepsilon_v = y_v - f(x_v)\) is bounded and independent across nodes;
  \item (A2) The predicted mean and width satisfy \(\hat{y}_v \xrightarrow{P} \mathbb{E}[y_v | x_v]\) and \(\hat{d}_v \xrightarrow{P} d_v^*\), and the loss \(\mathcal{L}_{\text{total}}\) is Lipschitz-continuous in the model parameters;
  \item (A3) Node embeddings \(\mathbf{H}\) are sufficiently expressive and bounded in norm.
\end{itemize}
Then, as \(N \to \infty\),
\[
\hat{c} := \mathbb{P}_{v \sim \mathcal{V}} \left( \hat{y}_v^{\text{low}} \leq y_v \leq \hat{y}_v^{\text{up}} \right)
\xrightarrow{P} 1 - \alpha.
\]
\end{proposition}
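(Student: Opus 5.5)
The plan is to split $\hat{c}-(1-\alpha)$ into a fluctuation term and a bias term. Write $Z_v := \mathbb{I}[\hat{y}_v - \hat{d}_v \le y_v \le \hat{y}_v + \hat{d}_v]$, so that $\hat{c} = \frac{1}{N}\sum_{v} Z_v$. Introduce the oracle indicators $Z_v^* := \mathbb{I}[|y_v - \mathbb{E}[y_v\mid x_v]| \le d_v^*]$, where $d_v^*$ is the $(1-\alpha)$-quantile of $|\varepsilon_v|$ given $x_v$. This is the width that the appendix on width optimality identifies as the constrained optimum, and by definition it satisfies $\mathbb{E}[Z_v^*\mid x_v] = 1-\alpha$. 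Then
\[
\hat{c} - (1-\alpha) = \frac{1}{N}\sum_{v}(Z_v - Z_v^*) + \frac{1}{N}\sum_{v}\bigl(Z_v^* - (1-\alpha)\bigr),
\]
and I will show that each term tends to zero in probability.

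\textbf{The oracle term.} The summands $Z_v^* - (1-\alpha)$ are bounded in $[-1,1]$ and have conditional mean zero. Under (A1) they are also independent given the features. Chebyshev's inequality therefore gives $\mathrm{Var}\bigl(\frac{1}{N}\sum_v Z_v^*\bigr) \le 1/(4N) \to 0$, which is the WLLN step invoked in the main-text sketch. Hoeffding's inequality strengthens this to the $\mathcal{O}(1/\sqrt{N})$ rate quoted in the finite-sample section. If one only assumes the weak dependence of the main-text statement, the same bound holds with the variance inflated by the bounded-difference constant $\delta_G$ from the dependency appendix, provided $N\delta_G^2 \to 0$.

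\textbf{The plug-in term.} Here I use $|Z_v - Z_v^*| \le \mathbb{I}\bigl[\,\bigl||\varepsilon_v| - d_v^*\bigr| \le \eta_v\bigr]$ with $\eta_v := |\hat{y}_v - \mathbb{E}[y_v\mid x_v]| + |\hat{d}_v - d_v^*|$. The two indicators can only disagree when $y_v$ lies within $\eta_v$ of one of the oracle endpoints. Fix $\delta>0$ and split according to whether $\eta_v \le \delta$. The expected fraction of nodes with $\eta_v > \delta$ vanishes by (A2). The remaining contribution is bounded by $\sup_v \mathbb{P}\bigl(\bigl||\varepsilon_v| - d_v^*\bigr| \le \delta\bigr)$. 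This last quantity goes to zero as $\delta\to 0$ if the conditional law of $\varepsilon_v$ has no atoms at $\pm d_v^*$, uniformly in $v$. For example, a bounded conditional density suffices, which fits naturally with the boundedness in (A1). Markov's inequality then converts the bound on the expectation into convergence in probability. Combining the two terms finishes the proof.

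\textbf{Main obstacle.} The difficulty is that (A2) is stated pointwise, while I need convergence uniformly over, or averaged across, a growing node set. Moreover, $\hat{y}_v$ and $\hat{d}_v$ depend on all the training labels, including $y_v$ itself, so $Z_v$ is not independent across nodes even under (A1). My first attempt would be to read (A2) as $\frac{1}{N}\sum_v \mathbb{P}(\eta_v > \delta) \to 0$ and to use (A3) to justify it: bounded embeddings together with the Lipschitz loss give a uniform-convergence argument over the head parameters. I would then decouple the dependence on $y_v$ by noting that the fitted parameters converge to deterministic limits, so $Z_v$ differs from an indicator built with the limiting parameters only on an asymptotically negligible set. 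That reduces the dependent case to the independent oracle term treated above. The uniform anti-concentration assumption at $\pm d_v^*$ must be added explicitly, since without it coverage can jump and fail to converge.
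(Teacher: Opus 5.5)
Your proof is correct under the extra hypotheses you make explicit. These are an averaged (or uniform-in-$v$) form $\frac1N\sum_v\mathbb{P}(\eta_v>\delta)\to0$ of (A2), uniform anti-concentration of $|\varepsilon_v|$ at $d_v^*$, $f(x_v)=\mathbb{E}[y_v\mid x_v]$, and noise independent given the features. It takes a different route from the paper. The paper applies a WLLN directly to the fitted indicators $Z_v$. It notes they are bounded, asserts $\mathbb{E}[Z_v]\to1-\alpha$ as a consequence of (A1)--(A3), and assumes their message-passing dependence is negligible, citing WLLNs for dependent settings without checking any covariance condition. Your oracle/plug-in split removes the need for that dependence assumption. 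The only sum whose fluctuations you must control is $\frac1N\sum_v Z_v^*$, which contains no fitted quantity and is genuinely independent under (A1). The discrepancy obeys the pathwise bound $|Z_v-Z_v^*|\le\mathbb{I}[\eta_v>\delta]+\mathbb{I}\bigl[\bigl||\varepsilon_v|-d_v^*\bigr|\le\delta\bigr]$, whose expectation needs no independence at all. Your route also makes explicit what the paper's ``mean-convergent'' step silently requires: anti-concentration at the endpoints, an averaged version of (A2), and the right $d_v^*$. Your $(1-\alpha)$-quantile of $|\varepsilon_v|$ is the correct choice. The width-optimality appendix's $F_v^{-1}(1-\alpha/2)$, read literally as a quantile of $|y_v-\hat y_v|$, would give coverage $1-\alpha/2$. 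The paper's route is shorter and puts the dependence assumption where message passing actually creates it. The cost is that both that assumption and the mean convergence are left unverified.

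Two small corrections. First, the decoupling step in your ``main obstacle'' paragraph is unnecessary. Because your plug-in bound holds pathwise, the dependence of $\hat y_v,\hat d_v$ on $y_v$ is absorbed entirely by (A2) itself; a model that memorized $y_v$ would simply violate it. So the averaged reading of (A2) is the only strengthening you need. Second, your aside on $\delta_G$ does not fit your own decomposition. $Z_v^*$ involves no message passing, so its bounded differences are exactly $1/N$. McDiarmid also requires independent inputs, so genuinely dependent noise would call for a mixing or dependency-graph inequality. The paper's $\delta_G$ is not that tool: it measures message-passing influence on the fitted intervals.
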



\begin{proof}
We verify the conditions of the Weak Law of Large Numbers (WLLN)~\citep{penrose2003weak, gama2019ergodicity} for the sequence \(\{Z_v\}_{v=1}^N\), where each \(Z_v := \mathbb{I}[\hat{y}_v^{\text{low}} \leq y_v \leq \hat{y}_v^{\text{up}}]\) is:
\begin{itemize}
  \item Bounded: \(Z_v \in [0,1]\) for all \(v\),
  \item Mean-convergent: \(\mathbb{E}[Z_v] \to 1 - \alpha\) as \(N \to \infty\),
  \item Weakly dependent (assumed negligible or bounded via local message passing).
\end{itemize}
Under these conditions, the empirical mean \(\hat{c} = \frac{1}{N} \sum_{v=1}^N Z_v\) satisfies the WLLN:
\[
\hat{c} \xrightarrow{P} 1 - \alpha.
\]
\end{proof}

\paragraph{Finite-sample Concentration Bounds}
While the above result ensures asymptotic calibration, we now provide finite-sample guarantees that quantify the deviation of \(\hat{c}\) from its expected value. These bounds show that QpiGNN maintains reliable coverage even with moderate graph sizes.

\begin{proposition}[Hoeffding-type Bound~\citep{hoeffding1994probability}]
\label{theorem:hoeffding}
If \(Z_v := \mathbb{I}[\hat{y}_v^{\text{low}} \leq y_v \leq \hat{y}_v^{\text{up}}] \in [0,1]\) are independent for \(v = 1,\dots,N\), then for any \(\delta \in (0,1)\), with probability at least \(1 - \delta\),
\[
\left| \hat{c} - \mathbb{E}[\hat{c}] \right| \leq \sqrt{ \frac{ \log (2/\delta) }{ 2N } }.
\]
\end{proposition}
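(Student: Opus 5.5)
This is a direct application of Hoeffding's inequality to the empirical mean $\hat{c} = \frac{1}{N}\sum_{v=1}^N Z_v$. Throughout, the interval endpoints $\hat{y}_v^{\text{low}}, \hat{y}_v^{\text{up}}$ are treated as fixed, or at least as not introducing dependence beyond what the hypothesis allows. The statement assumes the $Z_v$ themselves are independent, so no graph-dependency argument is needed here.

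\textbf{Step 1: set up the centered sum.} Write $S_N = \sum_{v=1}^N Z_v$, so that $\hat{c} = S_N/N$ and $\mathbb{E}[\hat{c}] = \frac{1}{N}\sum_v \mathbb{E}[Z_v]$. We do not need the $Z_v$ to be identically distributed, since Hoeffding's inequality only requires independence and almost-sure bounds $a_v \le Z_v \le b_v$. Here $a_v = 0$ and $b_v = 1$.

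\textbf{Step 2: apply Hoeffding's inequality.} It gives, for every $t > 0$,
\[
\mathbb{P}\bigl(|S_N - \mathbb{E}[S_N]| \ge Nt\bigr) \le 2\exp\!\left(-\frac{2N^2t^2}{\sum_{v}(b_v-a_v)^2}\right) = 2\exp(-2Nt^2).
\]
If one prefers to prove this rather than cite it, the key ingredient is Hoeffding's lemma, $\mathbb{E}[e^{s(Z_v-\mathbb{E}Z_v)}] \le e^{s^2/8}$. This is combined with independence, which lets the moment generating function factor, and a Chernoff bound optimized at $s = 4t$, applied to each tail. A union over the two tails gives the factor $2$.

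\textbf{Step 3: invert the bound.} Set $2\exp(-2Nt^2) = \delta$, which gives $t = \sqrt{\log(2/\delta)/(2N)}$. With probability at least $1-\delta$,
\[
|\hat{c} - \mathbb{E}[\hat{c}]| \le \sqrt{\frac{\log(2/\delta)}{2N}}.
\]

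The only delicate point is conceptual rather than computational. In QpiGNN the endpoints are outputs of a model trained on the same nodes, so the $Z_v$ are generally not independent. The proof should therefore state plainly that independence is taken as the hypothesis, with the interval function regarded as fixed, for instance trained on separate data or conditioned on the learned parameters. Relaxing this assumption, through the bounded-difference route with constant $1/N + \delta_G$ and McDiarmid's inequality, is a separate result and is not needed for this proposition.
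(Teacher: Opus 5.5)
Your proposal is correct and takes the same approach as the paper. The paper gives no separate argument and simply cites Hoeffding's inequality for independent $[0,1]$-valued variables; your inversion $2\exp(-2Nt^2)=\delta \Rightarrow t=\sqrt{\log(2/\delta)/(2N)}$ is exactly what produces the stated bound. Your caveat that independence of the $Z_v$ must be taken as a hypothesis, with the endpoints regarded as fixed, is appropriate, since the paper treats graph dependence only heuristically through its $1/N+\delta_G$ bounded-difference discussion.
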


\begin{proposition}[McDiarmid-type Bound~\citep{mcdiarmid1989method}]
\label{theorem:mcdiarmid}
Let \(\hat{c} = f(X_1, \dots, X_N)\), where \(X_v = (x_v, y_v)\), and suppose that changing a single sample \(X_v\) alters \(\hat{c}\) by at most \(1/N\). Then for any \(\epsilon > 0\),
\[
\mathbb{P} \left( \left| \hat{c} - \mathbb{E}[\hat{c}] \right| > \epsilon \right) \leq 2 \exp \left( -2N \epsilon^2 \right).
\]
\end{proposition}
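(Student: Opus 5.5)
This is McDiarmid's bounded differences inequality applied to $\hat c$, with constants $c_v = 1/N$. I would prove it through the Doob martingale of $\hat c$ and the Azuma--Hoeffding argument. Assume the samples $X_1,\dots,X_N$ are independent, as in the setting of Proposition~\ref{theorem:hoeffding}. Write $f=\hat c$ and define
\[
M_k := \mathbb{E}[f(X_1,\dots,X_N)\mid X_1,\dots,X_k],\qquad k=0,\dots,N,
\]
so that $M_0=\mathbb{E}[\hat c]$ and $M_N=\hat c$. The deviation then telescopes as $\hat c-\mathbb{E}[\hat c]=\sum_{k=1}^N D_k$ with martingale differences $D_k:=M_k-M_{k-1}$.

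The key step is to show that each $D_k$, conditional on $X_1,\dots,X_{k-1}$, lies in an interval of length at most $1/N$.

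Set
\[
U_k:=\sup_{x}\mathbb{E}[f\mid X_1,\dots,X_{k-1},X_k=x]-M_{k-1},\qquad L_k:=\inf_x(\cdots)-M_{k-1},
\]
so that $L_k\le D_k\le U_k$. Independence lets us write both conditional expectations as integrals over the same law of $(X_{k+1},\dots,X_N)$. The hypothesis that changing one coordinate moves $f$ by at most $1/N$ then gives $U_k-L_k\le 1/N$.

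This is exactly where independence is essential. It is also the main obstacle in the graph setting: message passing makes $\hat y_v,\hat d_v$ depend on neighbours' features. One must therefore either take the hypothesis literally, with a Lipschitz constant $1/N$ for the whole map $f$ (each $Z_v$ is a $\{0,1\}$ indicator averaged with weight $1/N$), or inflate the constant to $1/N+\delta_G$ as in Appendix~\ref{apdx:dependency}. I would state the proof under the literal hypothesis and remark that the same argument with $c_v=1/N+\delta_G$ yields $2\exp(-2\epsilon^2/\sum_v c_v^2)$.

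Next apply Hoeffding's lemma conditionally. For a mean-zero variable supported on an interval of length $c$,
\[
\mathbb{E}[e^{sD_k}\mid X_{1:k-1}]\le e^{s^2c^2/8}.
\]
Iterating the tower property over $k=N,\dots,1$ gives
\[
\mathbb{E}\,e^{s(\hat c-\mathbb{E}\hat c)}\le \exp\!\big(s^2 N (1/N)^2/8\big)=e^{s^2/(8N)}.
\]
Markov's inequality then yields
\[
\mathbb{P}(\hat c-\mathbb{E}\hat c>\epsilon)\le \exp(-s\epsilon+s^2/(8N)).
\]
Optimizing at $s=4N\epsilon$ gives $\exp(-2N\epsilon^2)$. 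Applying the same argument to $-f$ and taking a union bound produces the factor $2$, which proves the stated inequality.

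Finally, I would note the consistency check with Proposition~\ref{theorem:hoeffding}. Setting $2e^{-2N\epsilon^2}=\delta$ recovers $\epsilon=\sqrt{\log(2/\delta)/(2N)}$, so when the $Z_v$ are independent the two propositions coincide.
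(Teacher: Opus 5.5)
Your proposal is correct. It is the standard Doob-martingale/Azuma--Hoeffding proof of McDiarmid's bounded-differences inequality with $c_v = 1/N$, so $\sum_v c_v^2 = 1/N$ and the optimized exponent is $-2N\epsilon^2$; this is the same route as the paper, which gives no argument of its own and simply invokes McDiarmid's inequality by citation. The independence of $X_1,\dots,X_N$ that you add is genuinely necessary, since the statement omits it and fails without it, and it matches the i.i.d.\ assumption the paper states in its robustness appendix; note that message passing only inflates the bounded-difference constant of $f$ and does not break independence of the inputs, which is why your $1/N+\delta_G$ remark goes through.
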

These concentration bounds confirm that QpiGNN's empirical coverage remains close to its expected value with high probability, thus providing calibration guarantees under both asymptotic and finite-sample settings.

\subsection{Width Optimality under Coverage Constraints}
\label{apdx:width-optimality}

We now theoretically justify why the objective used in QpiGNN encourages compact prediction intervals while satisfying the coverage requirement. We formalize this as a constrained optimization problem where the goal is to minimize the average width of prediction intervals subject to a minimum coverage level.

\begin{theorem}[Width-Optimality under Coverage]
Let each \(y_v\) follow a symmetric, continuous distribution centered at \(\hat{y}_v = \mathbb{E}[y_v \mid x_v]\). Then, the solution to the optimization problem
\[
\min_{\{\hat{d}_v \geq 0\}} \quad \frac{1}{N} \sum_{v=1}^N 2\hat{d}_v \quad 
\text{subject to} \quad \mathbb{P}_{v \sim \mathcal{V}} \left( \hat{y}_v^{\text{low}} \leq y_v \leq \hat{y}_v^{\text{up}} \right) \geq 1 - \alpha
\]
is given by \(\hat{d}_v^{*} = F_v^{-1}(1 - \alpha/2)\), where \(F_v^{-1}(\cdot)\) denotes the conditional quantile function of \(|y_v - \hat{y}_v|\).
\end{theorem}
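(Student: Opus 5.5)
The plan is to treat the statement as a convex program in $(\hat d_1,\dots,\hat d_N)$ with an averaged coverage constraint, exhibit $\hat d_v^*$ as a feasible point, and certify optimality through the Lagrangian. This matches the paper's reading of $\mathcal{L}_{\text{total}}$ as a Lagrangian relaxation.

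\textbf{Step 1 (per-node coverage function).} Write $r_v := y_v-\hat y_v$. This residual is continuous and symmetric about $0$, since $\hat y_v=\mathbb{E}[y_v\mid x_v]$ is the centre of symmetry. The coverage of node $v$ is
\[
G_v(d) := \mathbb{P}(|r_v|\le d) = 2F_{r_v}(d)-1 ,
\]
and the averaged constraint in the statement reads $\frac1N\sum_v G_v(\hat d_v)\ge 1-\alpha$. By symmetry, $G_v(d)=1-\alpha$ exactly when $F_{r_v}(d)=1-\alpha/2$. This is the sense in which I read $F_v^{-1}(1-\alpha/2)$: the upper $\alpha/2$ point of the symmetric residual law, which is the same as the $(1-\alpha)$-quantile of $|r_v|$. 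Under this reading, $\hat d_v^*$ gives $G_v(\hat d_v^*)=1-\alpha$ for every $v$. The average is then exactly $1-\alpha$, so $\hat d^*$ is feasible and the constraint is active.

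\textbf{Step 2 (convexity and KKT).} I will additionally use unimodality of the symmetric laws, i.e.\ a density $g_v$ that is nonincreasing on $[0,\infty)$. Then $G_v$ is concave on $d\ge0$. Consequently the feasible set $\{\hat d\ge 0:\frac1N\sum_v G_v(\hat d_v)\ge 1-\alpha\}$ is convex. The objective $\frac1N\sum_v 2\hat d_v$ is linear, so KKT conditions are sufficient for global optimality, and Slater's condition holds because large $\hat d$ is strictly feasible. The Lagrangian is
\[
\frac1N\sum_v 2\hat d_v-\mu\Bigl(\frac1N\sum_v G_v(\hat d_v)-(1-\alpha)\Bigr).
\]
Stationarity at an interior point requires $2=\mu\, 2g_v(\hat d_v)$ for every $v$. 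Equivalently, all nodes must share the common density level $g_v(\hat d_v)=1/\mu$ at their endpoints.

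\textbf{Step 3 (identifying the minimiser; main obstacle).} The crux is showing that $\hat d^*$ satisfies this equal-density condition. Under the averaged constraint this is not automatic: a heteroscedastic family generally has a density-equalising minimiser that differs from the nodewise quantiles. I will therefore first verify it in the regime the paper's assumptions describe, namely noise $\varepsilon_v$ with a common symmetric law as in (A1), so that the $r_v$ are identically distributed given the mean. In that case $g_v(\hat d_v^*)$ is the same for all $v$. Setting $\mu=1/g(\hat d^*)>0$ then satisfies stationarity, dual feasibility and complementary slackness (the constraint is active by Step 1), and Step 2 yields global optimality. Strict concavity near $\hat d^*$ gives uniqueness.

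For the heteroscedastic case I would try a scale family $r_v=\sigma_v\xi$ next. There $g_v(\hat d_v^*)=g_\xi(q)/\sigma_v$, so equality forces equal $\sigma_v$. I expect to have to state explicitly that, beyond the common-law case, $\hat d^*$ is the optimal \emph{nodewise-calibrated} solution rather than the unconstrained averaged optimum. This is the step where I expect the argument to need the extra assumption rather than follow from symmetry alone.
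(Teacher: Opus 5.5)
Your route differs from the paper's, and it is the more careful of the two. The paper argues node by node. For each $v$ it takes the interval centred at $\hat{y}_v$ with mass exactly $1-\alpha$, asserts that this is the shortest such interval, and then notes that these intervals satisfy the averaged constraint. That shows $\hat{d}^*$ is feasible, but it never compares against allocations that trade coverage between nodes. What it really proves is the decoupled problem, with $\mathbb{P}(|y_v-\hat{y}_v|\le \hat{d}_v)\ge 1-\alpha$ imposed for every $v$. There each one-dimensional problem is solved by the quantile simply because $G_v$ is continuous and nondecreasing, with no unimodality or common law needed.

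Your KKT treatment of the coupled constraint is correct. The difficulty you anticipate in Step 3 lies in the statement, not in your argument: as literally posed, the statement is false outside your restricted setting.
\begin{itemize}
\item Take $N=2$, $r_1\sim U[-1,1]$, $r_2\sim U[-3,3]$, $\alpha=0.1$. The nodewise choice $(\hat{d}_1,\hat{d}_2)=(0.9,2.7)$ has average width $3.6$. The choice $(1,2.4)$ has average coverage $(1+0.8)/2=0.9$ and average width $3.4$. This is exactly your unequal-endpoint-density obstruction.
\item Unimodality is needed even under a common law. Let every $r_v$ have density $\tfrac12$ on $[-2,-1]\cup[1,2]$, with $N=10$ and $\alpha=0.1$. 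The nodewise choice $\hat{d}_v=1.9$ gives average width $3.8$. Nine nodes at $\hat{d}_v=2$ and one at $\hat{d}_v=0$ give average coverage $0.9$ and average width $3.6$.
\end{itemize}
So both hypotheses you add are necessary for the averaged formulation. Your fallback conclusion (optimality among nodewise-calibrated solutions) is precisely what the paper's proof delivers. Your version additionally gives a proof of the problem as actually written, together with the exact condition under which it holds.

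Two smaller points.
\begin{itemize}
\item Your reading of $F_v^{-1}(1-\alpha/2)$ as the upper $\alpha/2$ point of the signed residual, equivalently the $(1-\alpha)$-quantile of $|r_v|$, is not optional. If $F_v$ is literally the distribution function of $|y_v-\hat{y}_v|$, the stated formula gives per-node coverage $1-\alpha/2$ and is not minimal. The paper's proof silently adopts your reading when it passes from $\mathbb{P}(|y_v-\hat{y}_v|\le \hat{d}_v^*)=1-\alpha$ to $F_v^{-1}(1-\alpha/2)$.
\item In Step 3, record why $g(\hat{d}^*)>0$, so that $\mu$ is well defined: if it vanished, monotonicity would give $G(\hat{d}^*)=1\neq 1-\alpha$. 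Also keep uniqueness conditional on strict concavity. For $r_v\sim U[-1,1]$, every $\hat{d}\in[0,1]^N$ with $\sum_v \hat{d}_v=N(1-\alpha)$ is optimal, so ``the solution'' in the statement should read ``a solution''.
\end{itemize}
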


\begin{proof}
Given the symmetry and continuity of \(y_v\) around \(\hat{y}_v\), the most compact interval capturing mass \(1 - \alpha\) is symmetric about the mean. By the definition of quantiles, the smallest such symmetric interval corresponds to the threshold \(\hat{d}_v^*\) such that \(\mathbb{P}(|y_v - \hat{y}_v| \leq \hat{d}_v^*) = 1 - \alpha\), which gives \(\hat{d}_v^* = F_v^{-1}(1 - \alpha/2)\). This value achieves the minimal width \(2\hat{d}_v^*\) while satisfying the global coverage constraint.
\end{proof}

\paragraph{Lagrangian Relaxation.}
In practice, the true quantile function \(F_v^{-1}(\cdot)\) is unknown and intractable to estimate directly. Instead, QpiGNN minimizes the following surrogate loss:
\[
    \left(
    \mathbb{P}_{v \sim \mathcal{V}}\left( \hat{y}_v^{\text{low}} \leq y_v \leq \hat{y}_v^{\text{up}} \right)
    - (1 - \alpha)
    \right)^2
    +
    \lambda_{\text{width}} \cdot
    \mathbb{E}_{v \sim \mathcal{V}} \left[ \hat{y}_v^{\text{up}} - \hat{y}_v^{\text{low}} \right]
    .
\]
which serves as a soft Lagrangian penalty formulation~\citep{franceschi2019learning}. The hyperparameter \(\lambda_{\text{width}}\) modulates the trade-off between satisfying the coverage constraint and minimizing interval width. When empirical coverage exceeds the target, the model reduces interval sizes; when coverage falls short, it implicitly expands the intervals by adjusting the predicted margins. This leads to adaptive and stable training dynamics without requiring explicit quantile estimation.

\subsection{Convergence of the Training Objective}
\label{apdx:loss-convergence}
We analyze the convergence behavior of a simplified version of the training loss used in QpiGNN, omitting the violation loss term \(\hat{\ell}_{\text{viol}}\). The objective is defined as:
\[
\mathcal{L}_{\text{total}}(\theta) =
\left(\mathbb{P}_{v \sim \mathcal{V}}\left( \hat{y}_v^{\text{low}} \leq y_v \leq \hat{y}_v^{\text{up}} \right) - (1 - \alpha) \right)^2
+ \lambda_{\text{width}} \cdot \mathbb{E}_{v \sim \mathcal{V}} \left[ \hat{y}_v^{\text{up}} - \hat{y}_v^{\text{low}} \right], \]
where \(\theta\) denotes the trainable parameters. This loss consists of a quadratic coverage penalty and a linear interval width term, both of which are differentiable with respect to \(\theta\).

\begin{proposition}[Convergence to a Stationary Point]
Assume:
\begin{itemize}
    \item[(A1)] Each component of \(\mathcal{L}_{\text{total}}\) is continuous and piecewise smooth;
    \item[(A2)] The gradient norm is bounded: \(\|\nabla_\theta \mathcal{L}_{\text{total}}(\theta)\| \leq G\);
    \item[(A3)] The learning rate \(\eta_t\) satisfies: \(\eta_t \to 0\), \(\sum_{t=1}^\infty \eta_t = \infty\), and \(\sum_{t=1}^\infty \eta_t^2 < \infty\).
\end{itemize}
Then the sequence of iterates \(\theta_t\) generated by stochastic gradient descent satisfies:
\[
\lim_{t \to \infty} \mathbb{E}[\|\nabla_\theta \mathcal{L}_{\text{total}}(\theta_t)\|] = 0.
\]
\end{proposition}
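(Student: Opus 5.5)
The plan is the standard descent-lemma argument for nonconvex SGD in the style of \citet{ghadimi2013stochastic}. (A1) is not quite enough for this, so I would strengthen it to $L$-smoothness of $\mathcal{L}_{\text{total}}$, meaning an $L$-Lipschitz gradient, at least along the trajectory. The indicator inside $\hat{c}$ makes the raw loss piecewise constant in $\theta$. I would therefore interpret $\hat{c}$ through its differentiable surrogate, e.g.\ a sigmoid-smoothed indicator, which is what backpropagation actually differentiates. The width term is linear in the Softplus output, and the encoder and heads are smooth, so on a bounded parameter region (consistent with (A2) and weight decay) the surrogate loss is $L$-smooth. The stochastic gradient $g_t$ is taken to be unbiased, $\mathbb{E}[g_t\mid\theta_t]=\nabla\mathcal{L}_{\text{total}}(\theta_t)$, with second moment at most $G^2$, consistent with (A2).

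The first step applies the descent lemma to the update $\theta_{t+1}=\theta_t-\eta_t g_t$ and takes conditional expectations:
\[
\mathbb{E}[\mathcal{L}(\theta_{t+1})]\le \mathbb{E}[\mathcal{L}(\theta_t)]-\eta_t\,\mathbb{E}\|\nabla\mathcal{L}(\theta_t)\|^2+\tfrac{L}{2}\eta_t^2G^2 .
\]
Summing from $t=1$ to $T$, and using that $\mathcal{L}_{\text{total}}\ge 0$ (a square plus a nonnegative width), gives
\[
\sum_t\eta_t\,\mathbb{E}\|\nabla\mathcal{L}(\theta_t)\|^2\le \mathcal{L}(\theta_1)+\tfrac{LG^2}{2}\sum_t\eta_t^2<\infty .
\]
Combined with $\sum_t\eta_t=\infty$ from (A3), this yields $\liminf_t \mathbb{E}\|\nabla\mathcal{L}(\theta_t)\|^2=0$.

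The second step upgrades the $\liminf$ to a limit, and this is the main obstacle. Set $a_t=\mathbb{E}\|\nabla\mathcal{L}(\theta_t)\|^2$. By $L$-smoothness and (A2),
\[
|a_{t+1}-a_t|\le 2G\,L\,\mathbb{E}\|\theta_{t+1}-\theta_t\|\le 2LG^2\eta_t .
\]
Now suppose $a_t>2\epsilon$ infinitely often. Since $\liminf_t a_t=0$, the sequence also falls below $\epsilon$ infinitely often, so there are infinitely many disjoint upcrossing intervals $[s_k,e_k]$ on which $a_t$ rises from at most $\epsilon$ to above $2\epsilon$. Each such upcrossing requires $\sum_{t=s_k}^{e_k}\eta_t\ge \epsilon/(2LG^2)$, and on these intervals $a_t\ge\epsilon$. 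Summing over the infinitely many intervals then gives $\sum_t\eta_t a_t=\infty$, contradicting the first step. Hence $a_t\to0$.

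The third step concludes by Jensen's inequality, $\mathbb{E}\|\nabla\mathcal{L}(\theta_t)\|\le\sqrt{a_t}\to 0$, which is the claimed statement.

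I expect the second step, together with justifying $L$-smoothness from the merely piecewise-smooth (A1), to be the delicate part. If the smoothed-indicator reformulation is not acceptable, I would fall back on the weaker but standard conclusion $\min_{t\le T}\mathbb{E}\|\nabla\mathcal{L}(\theta_t)\|\to0$. Alternatively, I would work with Clarke subgradients and cite the convergence of SGD on tame (definable) functions, since ReLU networks composed with indicator surrogates are definable.
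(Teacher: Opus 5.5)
Your argument is correct under the hypotheses you add, and it is more complete than the paper's proof. The paper gives a one-line appeal. It asserts that the coverage term is a differentiable quadratic and the width term is linear, hence smooth, and then cites Ghadimi and Lan for SGD with diminishing steps, adding without proof that Adam behaves similarly.

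You instead note, correctly, that (A1) cannot suffice. The raw indicator-based coverage term is piecewise constant and discontinuous in $\theta$, and for a merely piecewise-smooth loss such as $|\theta|$ the gradient norm along the iterates stays equal to $1$. So you pass to a smoothed indicator and assume $L$-smoothness and an unbiased oracle with second moment at most $G^2$. The descent lemma then gives $\sum_t\eta_t a_t<\infty$. You upgrade $\liminf_t a_t=0$ to $a_t\to0$ by an upcrossing argument built on $|a_{t+1}-a_t|\le 2LG^2\eta_t$.

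That last step is what the citation does not provide. Ghadimi and Lan bound the expected squared gradient at a randomly selected iterate, equivalently a step-size-weighted average. That is a $\min$/average-type statement, not the last-iterate limit the proposition claims. The paper's version buys brevity and a gesture toward Adam, which your argument does not address.

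Some points to tighten:
\begin{itemize}
\item Your derivation of $L$-smoothness from the architecture fails as written. The GraphSAGE encoder in Appendix~\ref{apdx:alg} uses ReLU, so the network is only piecewise smooth. Even with smooth activations the gradient is Lipschitz only on bounded sets, and (A2) does not bound the iterates. So $L$-smoothness along the trajectory must be assumed, or ReLU replaced by a smooth activation and bounded iterates assumed.
\item Your oracle assumptions are automatic in the full-batch training used there: $g_t=\nabla\mathcal{L}_{\text{total}}(\theta_t)$, and (A2) gives the second-moment bound. Unbiasedness would fail for minibatch SGD, however. The term $(\hat c-(1-\alpha))^2$ is the square of a node average, so $2(\hat c_B-(1-\alpha))\nabla\hat c_B$ is a biased estimate of its gradient.
\item In the upcrossing step you only know $a_{s_k}\le\epsilon$ at the left endpoint. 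Either drop that term, which costs $\epsilon\,\eta_{s_k}\to0$, or use $a_{s_k}\ge a_{s_k+1}-2LG^2\eta_{s_k}$.
\item Both fallbacks change the conclusion. The $\min_{t\le T}$ bound is weaker. SGD on tame functions yields Clarke stationarity of limit points rather than $\mathbb{E}\|\nabla\mathcal{L}_{\text{total}}(\theta_t)\|\to0$, as the $|\theta|$ example again shows.
\end{itemize}
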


\begin{proof}
The total loss \(\mathcal{L}_{\text{total}}(\theta)\) is composed of a differentiable quadratic coverage term and a linear width penalty, both of which are locally Lipschitz and smooth with respect to \(\theta\). As such, \(\mathcal{L}_{\text{total}}\) satisfies the regularity conditions required by standard results in stochastic non-convex optimization~\citep{ghadimi2013stochastic}. Therefore, under assumptions (A1)–(A3), stochastic gradient descent with diminishing step sizes converges to a first-order stationary point in expectation. Similar convergence guarantees also hold for adaptive optimizers such as Adam, provided the gradients remain bounded~\citep{kingma2014adam, reddi2019convergence}.
\end{proof}

\paragraph{Empirical verification.}
\begin{figure}[H]
\centering
  \begin{subfigure}[t]{0.45\textwidth}
    \centering
    \includegraphics[width=0.83\linewidth]{figures/i.figure_loss.png}
    \caption{
    Loss convergence trajectory on synthetic graph datasets.
    }
    \label{fig:syn_loss}
  \end{subfigure}
  \hfill
  \begin{subfigure}[t]{0.45\textwidth}
    \centering
    \includegraphics[width=\linewidth]{figures/i.figure_loss_real.png}
    \caption{
    Loss convergence trajectory on real-world graph datasets.
    }
    \label{fig:real_loss}
  \end{subfigure}
\caption{
3D convergence trajectories in coverage–width–loss space. Each dashed line corresponds to a dataset, tracing optimization from initialization (\ding{55}) to convergence (\ding{108}). The surfaces visualize the loss landscape, while the trajectories highlight stable descent toward calibrated and compact prediction intervals.
}
\label{fig:loss_conv}
\end{figure}

We empirically validate the convergence dynamics of QpiGNN through 3D trajectory visualizations on both synthetic and real-world graph datasets. Figure~\ref{fig:loss_conv} plots training paths in the coverage–width–loss space, showing how the model jointly optimizes calibration and compactness over time. In both settings, training starts with high loss (marked by \ding{55}) and proceeds toward a low-loss region (marked by \ding{108}) along a smooth descent trajectory. 

On synthetic datasets (Figure~\ref{fig:syn_loss}), the paths show a structured progression: the model first corrects coverage errors and then reduces interval width, eventually converging near the optimal coverage level \(1 - \alpha = 0.9\) while minimizing width. These trajectories confirm the effectiveness of the sequential optimization behavior embedded in our loss design. On real datasets (Figure~\ref{fig:real_loss}), the convergence paths are more varied due to structural heterogeneity and noise. Some datasets exhibit greater fluctuations in coverage during early training, but all converge toward the loss-minimizing surface. This robustness across dataset types highlights the adaptability and stability of QpiGNN’s training process.

\subsection{Violation Loss and Theoretical Coverage Guarantees}
\label{apdx:violation}
Recall the total coverage-related loss is defined as \( \mathcal{L}_{\text{coverage}} = (\hat{c} - (1 - \alpha))^2 + \hat{\ell}_{\text{viol}} \), where the empirical violation loss is
\[
\hat{\ell}_{\text{viol}} := \frac{1}{N} \sum_{v=1}^N \left[ |y_v - \hat{y}_v^{\text{low}}| \cdot \mathbb{I}[y_v < \hat{y}_v^{\text{low}}] + |y_v - \hat{y}_v^{\text{up}}| \cdot \mathbb{I}[y_v > \hat{y}_v^{\text{up}}] \right].
\]
This term penalizes not just whether a coverage violation occurs, but also by how far \(y_v\) lies outside the predicted interval. It plays a key role in shaping the model’s predictions during training, especially when \(\hat{c} \ll 1 - \alpha\).
However, note that \(\hat{\ell}_{\text{viol}} \xrightarrow{P} 0\) as the model becomes well-calibrated, i.e., when 
\[
\mathbb{P}(y_v < \hat{y}_v^{\text{low}} \text{ or } y_v > \hat{y}_v^{\text{up}}) \to \alpha.
\]
In this regime, the expected violation magnitude vanishes:
\[
\mathbb{E}\left[\hat{\ell}_{\text{viol}}\right] \leq \mathbb{E}\left[ |y_v - \hat{y}_v^{\text{low}}| \cdot \mathbb{I}[y_v < \hat{y}_v^{\text{low}}] + |y_v - \hat{y}_v^{\text{up}}| \cdot \mathbb{I}[y_v > \hat{y}_v^{\text{up}}] \right] \to 0.
\]
Hence, \(\hat{\ell}_{\text{viol}}\) is asymptotically negligible and does not interfere with the convergence of \(\hat{c}\) to the target coverage level \(1 - \alpha\). The proof of coverage consistency thus holds even in its absence.

\subsection{Robustness of Theoretical Guarantees under Practical Conditions}
\label{apdx:dependency}
Our coverage consistency theorem assumes that the samples \(X_v = (x_v, y_v)\) are i.i.d., and that model predictions converge under bounded noise. These assumptions, while standard, are idealized. In practical graph settings, particularly with GNNs, message passing introduces weak dependencies among node predictions due to shared neighborhood structures~\citep{verma2019stability, jin2020graph}.

Nonetheless, we observe that predictions \(\hat{y}_v^{\text{low}}\), \(\hat{y}_v^{\text{up}}\) still rely primarily on local features and limited-depth neighborhoods. As a result, these weak dependencies do not significantly impair the empirical convergence behavior of the coverage estimator \(\hat{c}\). 
In particular, our empirical results demonstrate that QpiGNN maintains tight coverage and compact intervals even under node and edge perturbations, supporting the practical validity of the assumptions used in our finite-sample analysis.
In fact, the bounded-difference property required by McDiarmid’s inequality~\citep{mcdiarmid1989method} remains approximately satisfied:
\[
\sup_{X_1, \dots, X_N, X_v'} \left| \hat{c}(X_1, \dots, X_v, \dots, X_N) - \hat{c}(X_1, \dots, X_v', \dots, X_N) \right| \leq \frac{1}{N} + \delta_G,
\]
where \(\delta_G\) is a graph-dependent residual term that diminishes under localized message aggregation. Similar forms of concentration under dependency graphs and weak mixing conditions have been studied in, supporting the plausibility of our empirical findings.

\begin{remark}
To build intuition, consider sparse Erdős–Rényi (ER) graphs with average degree $\mathcal{O}(\log N)$. In such graphs, the influence of a single node diminishes rapidly with distance. For example, in GraphSAGE with mean aggregation, each neighbor contributes at most $\mathcal{O}(1/\deg(v))$ per layer, yielding an overall effect of $\mathcal{O}(1/N)$. Over $k$ layers, the cumulative influence remains bounded by $\mathcal{O}(k/N)$. This intuition aligns with prior work showing that the WLLN can hold under weak dependence in graph processes~\citep{gama2019ergodicity}.
\end{remark}

Empirically, QpiGNN maintains tight coverage and compact intervals under diverse perturbations, confirming that these assumptions hold in practice (see Section~\ref{sec:experiments}).

Moreover, while the theoretical analysis assumes symmetric conditional distributions (e.g., for width optimality), we observe that QpiGNN maintains valid coverage and produces compact intervals even under moderately skewed or heavy-tailed distributions. 
This suggests that the method is robust to deviations from theoretical assumptions in practice, consistent with observations in robust quantile regression literature~\citep{tagasovska2019single, pouplin2024relaxed}.

Overall, the purpose of Appendix B.5 is to demonstrate that concentration-based reasoning remains appropriate in the regimes where QpiGNN is evaluated, even though the strict i.i.d. and symmetry assumptions are only approximately satisfied in practical graph settings.
A rigorous extension of our theoretical framework to formally incorporate weak dependency models—e.g., via mixing conditions or graph-dependent concentration inequalities—remains an important direction for future work.

\section{Additional Hyperparameter Sensitivity Results for Baselines}
\label{apdx:hyper_baseline}
\begin{table}[t]
\centering
\caption{
Hyperparameter sensitivity analysis of target-coverage baselines (SQR, RQR, CF-GNN) evaluated on 10 real datasets over 10 independent runs.
}
\vspace{-0.1cm}
\setlength{\abovecaptionskip}{5pt}
\setlength{\tabcolsep}{5pt}
\renewcommand{\arraystretch}{1.3}
\resizebox{\textwidth}{!}{
\begin{tabular}{c|cc|cc|cc|cc|cc|cc|cc|cc|cc|cc}
\toprule
\multirow{2}{*}{\textbf{Model (target coverage)}} & \multicolumn{2}{c|}{\textbf{Education}} & \multicolumn{2}{c|}{\textbf{Election}} & \multicolumn{2}{c|}{\textbf{Income}} & \multicolumn{2}{c|}{\textbf{Unemploy.}} & \multicolumn{2}{c|}{\textbf{Twitch}} & \multicolumn{2}{c|}{\textbf{Chameleon}} & \multicolumn{2}{c|}{\textbf{Crocodile}} & \multicolumn{2}{c|}{\textbf{Squirrel}} & \multicolumn{2}{c|}{\textbf{Anaheim}} & \multicolumn{2}{c}{\textbf{Chicago}}\\
\cline{2-21}
& \textbf{PCIP} & \textbf{MPIW} & \textbf{PCIP} & \textbf{MPIW} & \textbf{PCIP} & \textbf{MPIW} & \textbf{PCIP} & \textbf{MPIW} & \textbf{PCIP} & \textbf{MPIW} & \textbf{PCIP} & \textbf{MPIW} & \textbf{PCIP} & \textbf{MPIW} & \textbf{PCIP} & \textbf{MPIW} & \textbf{PCIP} & \textbf{MPIW} & \textbf{PCIP} & \textbf{MPIW}\\
\midrule
SQR (0.80)
& 0.78 & 0.28
& 0.75 & 0.32
& 0.78 & 0.19
& 0.79 & 0.35
& 0.09 & 0.02
& 0.11 & 0.01
& 0.15 & 0.01
& 0.09 & 0.01
& 0.67 & 0.35
& 0.73 & 0.23
\\
SQR (0.85)
& 0.82 & 0.30
& 0.78 & 0.36
& 0.83 & 0.21
& 0.83 & 0.38
& 0.12 & 0.03
& 0.11 & 0.01
& 0.16 & 0.01
& 0.10 & 0.01
& 0.72 & 0.38
& 0.76 & 0.24
\\
SQR (0.95)
& 0.88 & 0.36
& 0.85 & 0.43
& 0.87 & 0.23
& 0.88 & 0.44
& 0.12 & 0.03
& 0.14 & 0.02
& 0.20 & 0.01
& 0.09 & 0.01
& 0.74 & 0.43
& 0.77 & 0.27
\\
\midrule
RQR (0.80)
& 0.78 & 0.37
& 0.77 & 0.40
& 0.78 & 0.26	
& 0.76 & 0.35	
& 0.62 & 0.42
& 0.67 & 0.09	
& 0.72 & 0.03	
& 0.68 & 0.10	
& 0.65 & 0.22	
& 0.72 & 0.30
\\
RQR (0.85)
& 0.83 & 0.41	
& 0.82 & 0.44	
& 0.83 & 0.28	
& 0.80 & 0.39	
& 0.68 & 0.46	
& 0.73 & 0.09	
& 0.75 & 0.03	
& 0.75 & 0.10	
& 0.68 & 0.23	
& 0.80 & 0.32
\\
RQR(0.95)
& 0.94 & 0.65	
& 0.93 & 0.60	
& 0.94 & 0.42	
& 0.90 & 0.51	
& 0.83 & 0.56	
& 0.92 & 0.17	
& 0.86 & 0.05	
& 0.94 & 0.19	
& 0.83 & 0.62	
& 0.91 & 0.51
\\
\midrule
CF-GNN (0.80)
& 0.80 & 2.66	
& 0.80 & 0.99	
& 0.80 & 2.45	
& 0.80 & 2.28	
& 0.81 & 2.77
& - & -
& - & -
& - & -
& 0.81 & 2.55	
& 0.80 & 2.44
\\
CF-GNN (0.85)
& 0.85 & 3.21	
& 0.85 & 1.07	
& 0.85 & 2.63	
& 0.85 & 2.79	
& 0.85 & 3.00	
& - & -
& - & -
& - & -
& 0.86 & 3.42	
& 0.85 & 2.89
\\
CF-GNN (0.95)
& 0.95 & 5.10	
& 0.95 & 1.44	
& 0.95 & 4.48	
& 0.95 & 4.77	
& 0.95 & 7.91	
& - & -
& - & -
& - & -
& 0.96 & 5.65	
& 0.95 & 4.78
\\
\bottomrule
\end{tabular}
}

\label{tab:target_baseline}
\end{table} 
We further evaluated SQR, RQR, and CF-GNN under multiple target-coverage levels to assess whether these baselines can realize different coverage–width trade-offs (Table~\ref{tab:target_baseline}).

Although SQR adjusts its quantile inputs based on the specified target coverage, its achieved coverage consistently fell short of the desired levels across most datasets and collapsed entirely on non-homophilous graphs. RQR exhibited similar behavior: modifying the target coverage produced only minor changes in interval width, while the realized coverage remained largely unchanged and often far below the target.

These results reveal a limitation of current UQ methods for graph regression: similar to OpiGNN, RQR shows limited sensitivity to the penalty magnitude, indicating that coverage may be inherently difficult to control through this mechanism. We therefore highlight this limitation and point to it as an area requiring further investigation to achieve stricter control over target coverage.

\section{Dataset-Specific \texorpdfstring{$\lambda_{\text{width}}$}{lambda} Selection}
\label{apdx:lambda_exp}
\begin{table}[t]
\centering
\caption{Optimal $\lambda_{\text{width}}$ and corresponding MPIW for each dataset obtained via Bayesian optimization.}
\vspace{4pt}
\small

\setlength{\abovecaptionskip}{5pt}
\setlength{\tabcolsep}{5pt}
\renewcommand{\arraystretch}{1.1}
\resizebox{0.6\textwidth}{!}{
\begin{tabular}{c||cc|c||cc}
\toprule
\textbf{Synthetic Dataset} & \textbf{Best $\lambda_{\text{width}}$} & \textbf{MPIW} & \textbf{Real Dataset} & \textbf{Best $\lambda_{\text{width}}$} & \textbf{MPIW}
\\
\midrule
Basic           & 0.5000 & 0.39
& Education     & 0.4898 & 0.44 \\
Gaussian        & 0.4138 & 0.58
& Election      & 0.5000 & 0.73 \\
Uniform         & 0.4219 & 0.65
& Income        & 0.2264 & 0.36 \\
Outlier         & 0.4875 & 0.46
& Unemploy.     & 0.1930 & 0.42 \\
Edge            & 0.4876 & 0.48
& PTBR          & 0.3761 & 0.21 \\
BA              & 0.5000 & 0.65
& Chameleon     & 0.2875 & 0.21 \\
ER              & 0.5000 & 0.62
& Crocodile     & 0.4871 & 0.39\\
Grid            & 0.3994 & 0.75
& Squirrel      & 0.2875 & 0.21\\
Tree            & 0.4879 & 0.39
& Anaheim       & 0.4871 & 0.40 \\
- & \multicolumn{2}{c|}{-}
& Chicago       & 0.5000 & 0.34 \\
\bottomrule
\end{tabular}
}

\label{tab:lambda_opt}
\end{table}

To investigate how the trade-off between calibration and sharpness varies across tasks, we perform dataset-wise tuning of the width penalty coefficient \(\lambda_{\text{width}}\) using Bayesian optimization. The results are summarized in Table~\ref{tab:lambda_opt} and visualized in Figure~\ref{fig:lambda_selection}.

\begin{figure}[t]
  \centering
  \begin{subfigure}{0.48\textwidth}
    \centering
    \includegraphics[width=\linewidth]{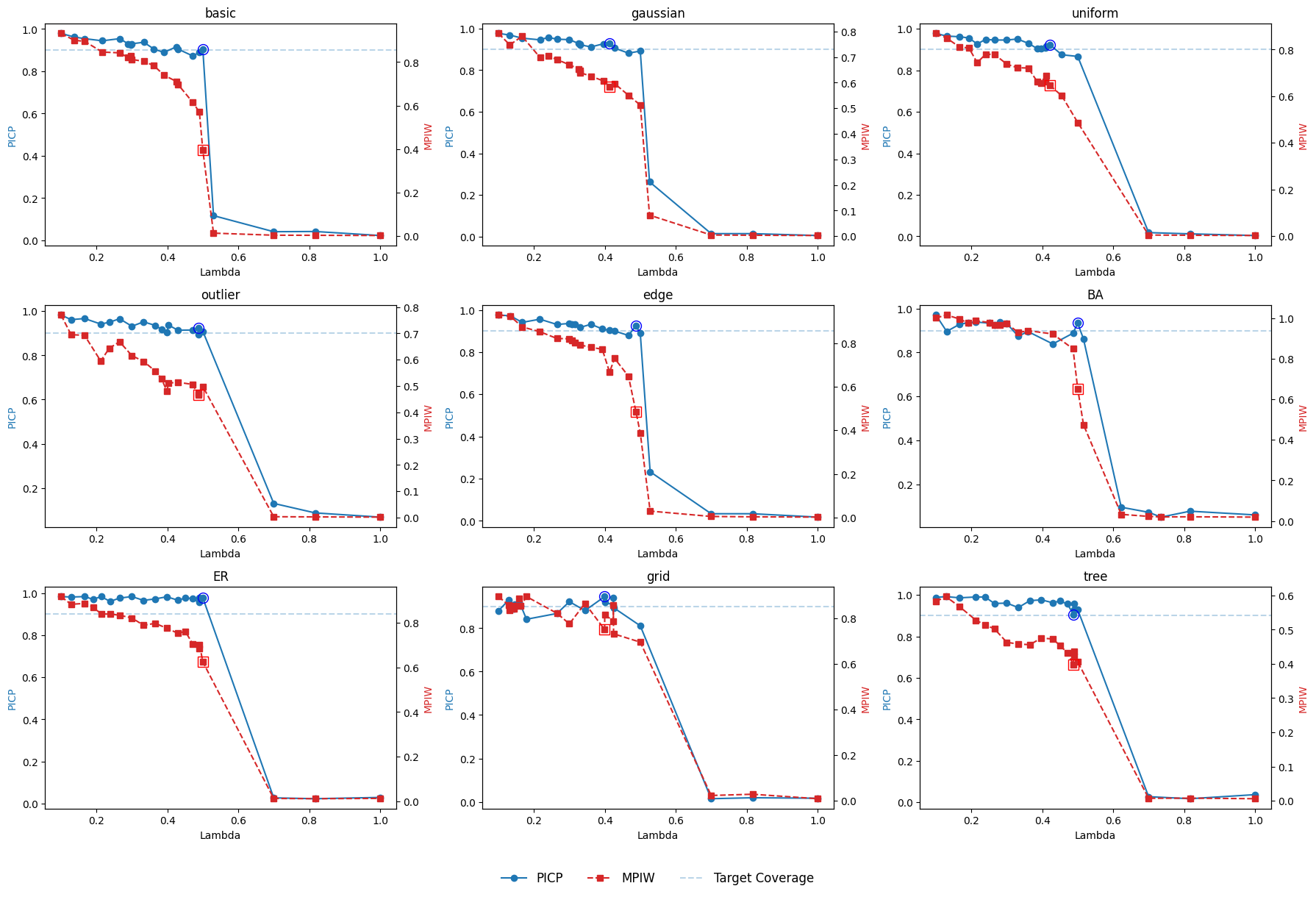}
    \caption{Synthetic datasets}
    \label{fig:lambda_syn}
  \end{subfigure}
  \hfill
  \begin{subfigure}{0.48\textwidth}
    \centering
    \includegraphics[width=\linewidth]{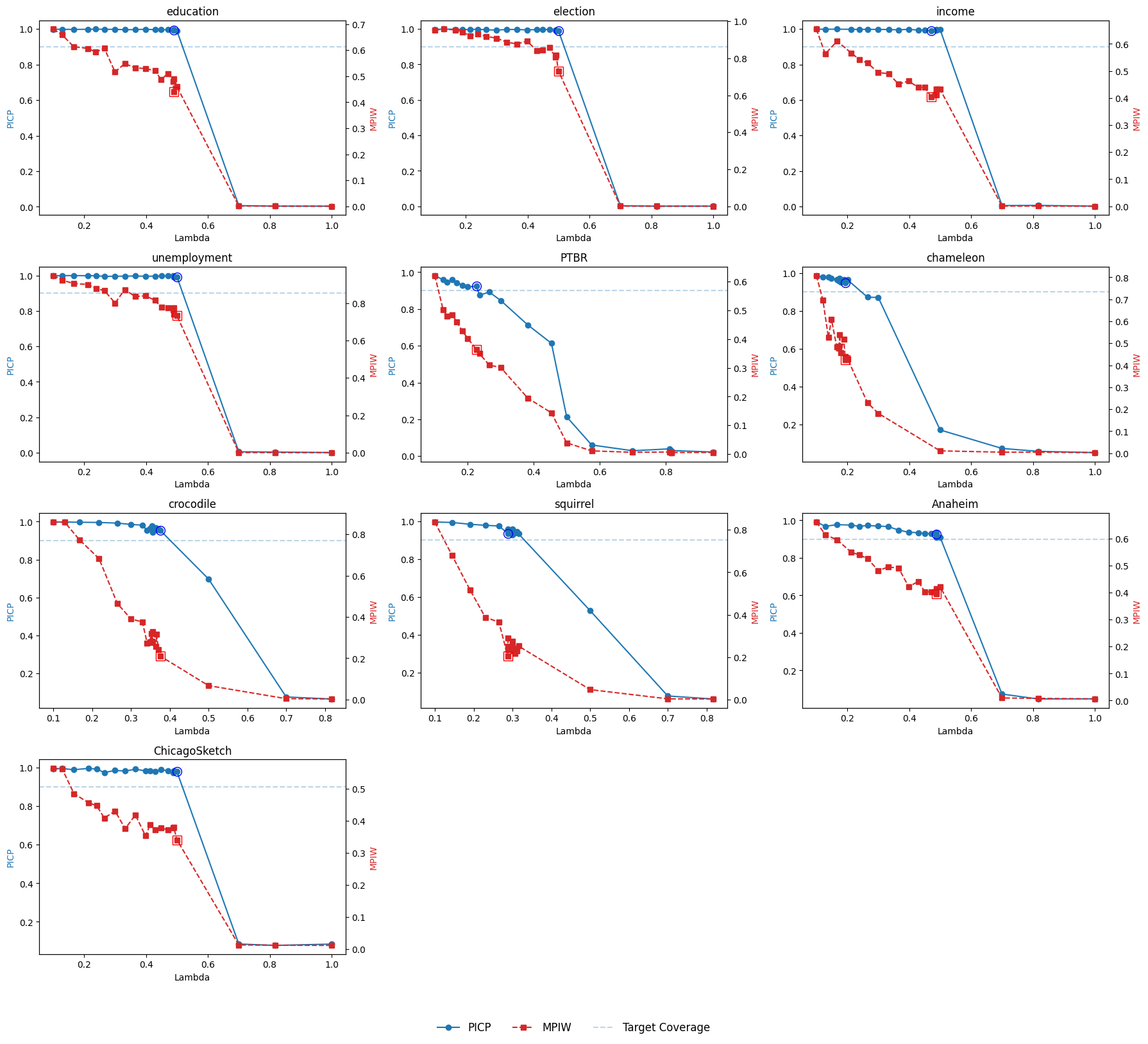}
    \caption{Real-world datasets}
    \label{fig:lambda_real}
  \end{subfigure}
  \caption{
  Optimal values of the width penalty coefficient \(\lambda_{\text{width}}\) identified via Bayesian optimization for (a) synthetic and (b) real-world datasets. The results illustrate dataset-specific variability in the calibration–sharpness trade-off, supporting the need for adaptive regularization.
  }
  \label{fig:lambda_selection}
\end{figure}

We observe substantial variation in the optimal \(\lambda_{\text{width}}\) across datasets. Synthetic graphs typically favor higher values (\(\geq 0.4\)) to prevent overly conservative intervals in structured or low-noise settings (e.g., \textit{Tree}, \textit{Edge}). Real-world datasets show greater diversity: high-variability graphs (e.g., \textit{Election}, \textit{Crocodile}) prefer stronger regularization, while noisy or non-homophilous graphs (e.g., \textit{Chameleon}, \textit{Squirrel}) work better with smaller values (0.22--0.28). Notably, similar interval widths can arise from different \(\lambda\) values depending on structural factors. These results highlight the need for dataset-specific tuning, and Bayesian optimization offers a principled way to achieve this.



\section{Comparison of L1 and L2 Width Penalties}
\label{apdx:L1L2}
\begin{table}[t]
\centering
\caption{
Prediction interval performance on 19 synthetic and real datasets. Comparison of L1- and L2-based width penalties in QpiGNN (10 runs, target coverage 90\%, $\lambda$ selected optimally).
}
\vspace{-0.1cm}
\setlength{\abovecaptionskip}{5pt}
\setlength{\tabcolsep}{5pt}
\renewcommand{\arraystretch}{1.3}
\resizebox{\textwidth}{!}{
\begin{tabular}{c|cc|cc|cc|cc|cc|cc|cc|cc|cc}
\toprule
\multirow{2}{*}{\textbf{Penalty}} & \multicolumn{2}{c|}{\textbf{Basic}} & \multicolumn{2}{c|}{\textbf{Gaussian}} & \multicolumn{2}{c|}{\textbf{Uniform}} & \multicolumn{2}{c|}{\textbf{Outlier}} & \multicolumn{2}{c|}{\textbf{Edge}} & \multicolumn{2}{c|}{\textbf{BA}} & \multicolumn{2}{c|}{\textbf{ER}} & \multicolumn{2}{c|}{\textbf{Grid}} & \multicolumn{2}{c}{\textbf{Tree}}\\
\cline{2-19}
& \textbf{PCIP} & \textbf{MPIW} & \textbf{PCIP} & \textbf{MPIW} & \textbf{PCIP} & \textbf{MPIW} & \textbf{PCIP} & \textbf{MPIW} & \textbf{PCIP} & \textbf{MPIW} & \textbf{PCIP} & \textbf{MPIW} & \textbf{PCIP} & \textbf{MPIW} & \textbf{PCIP} & \textbf{MPIW} & \textbf{PCIP} & \textbf{MPIW} \\
\midrule
L1
& 0.90 & 0.30
& 0.95 & 0.64
& 0.93 & 0.62
& 0.90 & 0.49
& 0.94 & 0.54
& 0.98 & 0.48
& 0.98 & 0.63
& 0.99 & 0.93
& 0.96 & 0.39
\\
L2
& 0.90 & 0.81
& 0.94 & 0.69
& 0.91 & 0.75
& 0.94 & 0.63
& 0.90 & 0.79
& 0.17 & 0.94
& 0.87 & 0.87
& 0.99 & 1.01
& 0.31 & 1.01
\\
\bottomrule
\end{tabular}
}

\vspace{3pt}

\resizebox{\textwidth}{!}{
\begin{tabular}{c|cc|cc|cc|cc|cc|cc|cc|cc|cc|cc}
\toprule
\multirow{2}{*}{\textbf{Penalty}} & \multicolumn{2}{c|}{\textbf{Education}} & \multicolumn{2}{c|}{\textbf{Election}} & \multicolumn{2}{c|}{\textbf{Income}} & \multicolumn{2}{c|}{\textbf{Unemploy.}} & \multicolumn{2}{c|}{\textbf{Twitch}} & \multicolumn{2}{c|}{\textbf{Chameleon}} & \multicolumn{2}{c|}{\textbf{Crocodile}} & \multicolumn{2}{c|}{\textbf{Squirrel}} & \multicolumn{2}{c|}{\textbf{Anaheim}} & \multicolumn{2}{c}{\textbf{Chicago}}\\
\cline{2-21}
& \textbf{PCIP} & \textbf{MPIW} & \textbf{PCIP} & \textbf{MPIW} & \textbf{PCIP} & \textbf{MPIW} & \textbf{PCIP} & \textbf{MPIW} & \textbf{PCIP} & \textbf{MPIW} & \textbf{PCIP} & \textbf{MPIW} & \textbf{PCIP} & \textbf{MPIW} & \textbf{PCIP} & \textbf{MPIW} & \textbf{PCIP} & \textbf{MPIW} & \textbf{PCIP} & \textbf{MPIW}\\
\midrule
L1
& 0.99 & 0.57
& 0.98 & 0.77
& 0.99 & 0.44
& 1.00 & 0.73
& 0.94 & 0.36
& 0.96 & 0.23
& 0.97 & 0.16
& 0.96 & 0.18
& 0.93 & 0.40
& 0.98 & 0.36
\\
L2
& 0.99 & 0.68
& 0.99 & 0.79
& 0.99 & 0.43
& 0.98 & 0.86
& 0.97 & 0.75
& 0.97 & 0.70
& 1.00 & 0.69
& 0.97 & 0.53
& 0.94 & 0.52
& 0.94 & 0.45
\\
\bottomrule
\end{tabular}
}

\label{tab:L1_L2}
\end{table} 
We further validated this by replacing the L1 width penalty in Equation (\ref{eq:total_loss}) with an L2 counterpart (Table~\ref{tab:L1_L2}). The L2 penalty consistently yielded much wider intervals—despite achieving similar or even overly conservative coverage—because it magnifies heavy-tailed residuals induced by heterophily, structural noise, and message passing. In contrast, the L1 penalty remained stable across all datasets and achieved the target coverage with substantially tighter intervals.

\section{Evaluation Datasets}
\label{apdx:dataset}

\begin{table}[t]
    \centering
    \caption{Statistics of all datasets used in the experiments, including synthetic, real-world, and graph-structured domains. Each entry lists the number of nodes, edges, and input features.}
    \vspace{4pt}
    \setlength{\abovecaptionskip}{3pt}
    \setlength{\tabcolsep}{3pt}
    \renewcommand{\arraystretch}{1.0}
    \resizebox{0.65\textwidth}{!}{
\renewcommand{\arraystretch}{1.0}
\begin{tabular}{c|c||ccc|c|c||ccc}
\toprule
\multicolumn{2}{c||}{\textbf{Dataset}} & \textbf{\# Nodes} & \textbf{\# Edges} & \textbf{\# Features} &
\multicolumn{2}{c||}{\textbf{Dataset}} & \textbf{\# Nodes} & \textbf{\# Edges} & \textbf{\# Features} \\
\toprule
\multirow{10}{*}{\rotatebox{90}{\textbf{Synthetic}}} & Basic & 1,000 & 2,000 & 5 &
\multirow{10}{*}{\rotatebox{90}{\textbf{Real}}} & Education & 3,234 & 12,717 & 6
\\
& Gaussian & 1,000 & 2,000 & 5 & & Election & 3,234 & 12,717 & 6 \\
& Uniform  & 1,000 & 2,000 & 5 & & Income  & 3,234 & 12,717 & 6 \\
& Outlier  & 1,000 & 2,000 & 5 & & Unemploy. & 3,234 & 12,717 & 6 \\
& Edge     & 1,000 & 2,000 & 5 & & Twitch & 1,912 & 31,299 & 3,169\\
& BA & 1,000 & $\sim$2,994 & 5 & & Chameleon & 2,277 & 36,101 & 3,132\\
& ER & 1,000 & $\sim$5,000 & 5 & & Crocodile & 11,631 & 180,020 & 13,183  \\
& Grid & 900 & 1,740 & 5 & & Squirrel & 5,201 & 217,073 & 3,148 \\
& Tree & 1,000 & 998 & 5 & & Anaheim & 914 & 3,638 & 4  \\
& - & \multicolumn{3}{c|}{-} & & Chicago & 2,176 & 14,961 & 4 \\
\bottomrule
\end{tabular}
    }
    \label{tab:dataset-stats}
\end{table}

We evaluate QpiGNN and all baselines on a diverse collection of graph-structured datasets, encompassing both synthetic and real-world settings. Table~\ref{tab:dataset-stats} summarizes key statistics across all datasets, including graph size, feature dimensions, and target distribution.

\paragraph{Synthetic Datasets.}
We generate nine synthetic datasets to assess controlled uncertainty estimation behavior under varying structural and statistical conditions. These datasets include grid, chain, tree, and random graphs, each with distinct patterns of homophily, noise level, and target function. Node features and targets are synthetically generated based on predefined functional relationships (e.g., nonlinear or spatial functions with added noise). The generation scripts and documentation are publicly available on GitHub\footnote{\url{https://github.com/sybeam27/QpiGNN}}. These datasets enable controlled ablation and stress testing of uncertainty quantification performance.

\paragraph{Real-World Datasets.}
We also evaluate on ten publicly available real-world datasets from diverse domains, covering social networks, geographic data, knowledge graphs, and transportation systems.

\begin{itemize}
    \item \textbf{U.S. County-Level Datasets (Education, Election, Income, Unemployment)} are constructed from county-level U.S. maps, using adjacency information derived from geographic boundaries. Node attributes include socioeconomic indicators, and targets reflect either vote shares or demographic statistics. The base topology and election outcomes were obtained from an open GitHub repository\footnote{\url{https://github.com/tonmcg/}}, while additional attributes were sourced from the U.S. Department of Agriculture Economic Research Service\footnote{\url{https://www.ers.usda.gov/data-products/county-level-data-sets/}}.

    \item \textbf{Wikipedia \& Twitch Graphs (Chameleon, Squirrel, Crocodile, PTBR)} were collected from the MUSAE project~\citep{rozemberczki2021multi}, which provides temporal and social graphs annotated with node features and continuous targets. These datasets are widely used for benchmarking node regression in non-homophilous graphs\footnote{\url{https://github.com/benedekrozemberczki/MUSAE}}.

    \item \textbf{Transportation Networks (Anaheim, Chicago)} model urban road networks as graphs, with nodes corresponding to intersections and edges to road segments. Node features include traffic-related metrics, and the targets correspond to flow estimates or congestion levels. These datasets were obtained from the Transportation Networks for Research repository\footnote{\url{https://github.com/bstabler/TransportationNetworks}}.
\end{itemize}

These datasets span a wide range of graph topologies, feature modalities, and target distributions, providing a robust testbed for evaluating node-level prediction interval quality across domains.

\section{Evaluation Metrics}
\label{apdx:eval_metrics}

To evaluate the quality of UQ, we consider a comprehensive set of metrics that collectively measure two core aspects of prediction intervals: \textit{reliability} (calibration) and \textit{informativeness} (compactness). These metrics enable a nuanced analysis of model performance beyond simple point prediction accuracy.

There is a natural tension between reliability (PICP) and informativeness (MPIW/sharpness): increasing interval width improves coverage but reduces precision. A robust UQ model should strike a balance between the two. Metrics like WS and CWC explicitly model this trade-off, with CWC particularly effective in safety-critical domains where undercoverage must be avoided at all costs.

\paragraph{Prediction Interval Coverage Probability (PICP)} 
PICP~\citep{rana20152d} evaluates whether the model’s prediction intervals capture the true target values at the desired rate. It is the proportion of samples whose ground truth values fall within the corresponding predicted intervals:
\[
    \text{PICP} = \frac{1}{N} \sum_{v=1}^{N} \mathbb{I} \left[\hat{y}_v^{\text{low}} \leq y_v \leq \hat{y}_v^{\text{up}} ] \right].
\]
A well-calibrated model with target coverage \(1 - \alpha\) should achieve PICP close to that value. However, high PICP alone does not guarantee high-quality intervals if the width is excessive.

\paragraph{Mean Prediction Interval Width (MPIW)}
MPIW~\citep{khosravi2010prediction} measures the average width of the predicted intervals:
\[
    \text{MPIW} = \frac{1}{N} \sum_{v=1}^{N} ( \hat{y}_v^{\text{up}} - \hat{y}_v^{\text{low}} ).
\]
Smaller MPIW indicates tighter (sharper) intervals, which are generally more informative. However, overly narrow intervals risk missing the true target and reducing PICP. Thus, MPIW should be interpreted jointly with PICP.

\paragraph{Mean Prediction Error (MPE)}  
MPE measures the average deviation between the center of the predicted interval and the ground truth:
\[
    \text{MPE} = \frac{1}{N} \sum_{v=1}^{N} \left| \frac{ \hat{y}_v^{\text{low}} + \hat{y}_v^{\text{up}} }{2} - y_v \right|.
\]
This evaluates whether the intervals are centered correctly. A model with good coverage and sharpness may still perform poorly if it consistently shifts intervals away from the true value.

\paragraph{Sharpness}  
Sharpness~\citep{gneiting2007strictly} refers to the concentration of prediction intervals and penalizes unnecessarily wide intervals:
\[
    \text{Sharpness} = \frac{1}{N} \sum_{v=1}^{N} \left( \hat{y}_v^{\text{up}} - \hat{y}_v^{\text{low}} \right)^2.
\]
It is a stricter version of MPIW that emphasizes outliers by squaring the width. A well-calibrated model should aim for minimal sharpness under valid PICP.

\paragraph{Winkler Score (WS)}  
WS~\citep{winkler1994evaluating} is a proper scoring rule that evaluates both the width of the interval and whether it covers the true target. It imposes a linear penalty on width and an additional penalty on uncovered samples:
\[
    \text{WS} = \frac{1}{N} \sum_{v=1}^{N} \left[ \left( \hat{y}_v^{\text{up}} - \hat{y}_v^{\text{low}} \right) + \frac{2}{\alpha} \cdot \max( 0,\; \hat{y}_v^{\text{low}} - y_v,\; y_v - \hat{y}_v^{\text{up}} ) \right].
\]
WS is interpretable, sensitive to both undercoverage and over-conservativeness, and widely used in applied settings.

\paragraph{Combinational Coverage Width-based Criterion (CWC)}  
CWC~\citep{khosravi2010lower} balances calibration and sharpness with an asymmetric penalty. It penalizes undercoverage exponentially, making it highly sensitive to violations of the target coverage:
\[
    \text{CWC} = \text{NMPIW} \cdot \left( 1 + \gamma \cdot e^{ -\eta (\text{PICP} - \mu) } \right),
\]
where \(\mu = 1 - \alpha\), \(\gamma = 1\), and \(\eta = 10\). When PICP falls below \(\mu\), the exponential term grows rapidly, strongly penalizing the score.

\paragraph{Normalized Mean Prediction Interval Width (NMPIW)}  
NMPIW normalizes MPIW to make interval widths comparable across datasets with different label ranges:
\[
    \text{NMPIW} = \frac{1}{N} \sum_{v=1}^{N} \frac{ \hat{y}_v^{\text{up}} - \hat{y}_v^{\text{low}} }{ y_{\text{max}} - y_{\text{min}} }.
\]
It is especially useful for cross-dataset comparisons and is a key component of CWC.

\section{Compared Baseline Models}
\label{apdx:baselines}
\begin{table}[ht]
\centering
\caption{Summary of baseline models in terms of strengths, limitations, and computational complexity.}
\vspace{4pt}
\renewcommand{\arraystretch}{1.0}
\resizebox{0.9\textwidth}{!}{
\begin{tabular}{l|p{5.4cm}|p{5.4cm}|c}
\toprule
\textbf{Model} & \textbf{Strengths} & \textbf{Limitations} & \textbf{Comp. Cost} \\
\midrule
\textbf{SQR} & Fast and simple; enables arbitrary quantile estimation without sampling; compact implementation & No coverage guarantee; intervals may cross; sensitive to quantile choices & Low \\
\textbf{RQR} & Predicts both bounds jointly; soft ordering constraint avoids interval crossing; promotes compact intervals & Requires tuning of penalties; prone to over-smoothing when used with GNNs & Moderate \\
\textbf{CF-GNN} & Post-hoc calibrated intervals with valid marginal coverage; model-agnostic; flexible scoring rules & Needs separate calibration set; assumes exchangeability; less adaptive to node-level uncertainty & Moderate \\
\textbf{BayesianNN} & Theoretically grounded; captures both epistemic and aleatoric uncertainty & High computational cost due to sampling; slow convergence; less scalable & High \\
\textbf{MC Dropout} & Easy to integrate; empirically effective in practice; low overhead at training time & Multiple forward passes at inference; unstable under high variance; approximate posterior & Moderate \\
\bottomrule
\end{tabular}
}
\label{tab:baseline-summary}
\end{table}

We compare QpiGNN against five representative baseline models that span distinct paradigms of uncertainty quantification: quantile regression, Bayesian approximation, and conformal prediction. All baselines are adapted to graph-based settings using the same GNN backbone (except CF-GNN$^{opt}$, which includes its own calibration-specific components).

\paragraph{Simultaneous Quantile Regression (SQR)}
SQR~\citep{tagasovska2019single} models conditional quantiles by treating the quantile level \(\tau \in (0,1)\) as an input feature. To construct prediction intervals, the model is evaluated twice at \(\tau_{\text{low}} = \alpha/2\) and \(\tau_{\text{up}} = 1 - \alpha/2\), where \(\alpha\) is the miscoverage level. The model is trained using the standard quantile (pinball) loss. While SQR allows simultaneous learning of multiple quantiles through randomized sampling over \(\tau\), the model requires explicitly specified quantile levels and may suffer from quantile crossing~\citep{zhou2020non}.

\paragraph{Relaxed Quantile Regression (RQR)}
RQR~\citep{pouplin2024relaxed} directly predicts both lower and upper bounds of prediction intervals using a shared architecture. The training objective incorporates a composite loss that balances calibration and compactness:
\[
\mathcal{L}_{\text{RQR}} = \mathcal{L}_{\text{RQR-W}} + \gamma_{\text{order}} \cdot \text{ReLU}(\hat{\mathbf{y}}^{\text{low}} - \hat{\mathbf{y}}^{\text{up}}),
\]
where \(\mathcal{L}_{\text{RQR-W}}\) penalizes miscoverage and excessive width, and the ReLU term enforces a soft constraint to avoid interval crossing. \(\gamma_{\text{order}} \geq 0\) is a hyperparameter. This ordering penalty is \textit{not included in the original formulation} but added here to improve interval validity in GNN-based tasks. Despite its design, RQR tends to produce overly smooth and wide intervals when applied to GNNs, due to their intrinsic neighborhood averaging and representation homogeneity~\citep{rusch2023survey}.

\paragraph{Bayesian Neural Networks (BayesianNN)}
BayesianNN~\citep{kendall2017uncertainties} models uncertainty via posterior distributions over network weights. During inference, multiple stochastic forward passes are performed to estimate the predictive mean \(\mu\) and standard deviation \(\sigma\), forming prediction intervals as:
\[
[\mu - t \cdot \sigma,\; \mu + t \cdot \sigma], \quad t \text{ chosen to match the desired confidence level}.
\]
This method captures both epistemic and aleatoric uncertainty but is computationally expensive and often slow to converge.

\paragraph{Monte Carlo Dropout (MC dropout)}
MC dropout~\citep{gal2016dropout} offers an approximate Bayesian alternative by retaining dropout during inference. Similar to BayesianNN, the model performs multiple forward passes to compute predictive mean and variance. It is simpler to implement and more scalable, but its uncertainty estimates can be unstable in high-variance regimes.

\paragraph{Conformalized GNN (CF-GNN)}
CF-GNN~\citep{huang2023uncertainty} separates prediction and calibration by first training a base GNN regressor \(\hat{\mathbf{y}}\), and then post-calibrating the intervals using conformal prediction (CP) methods. The calibrated prediction intervals is:
\[
[\hat{\mathbf{y}} - \hat{q},\; \hat{\mathbf{y}} + \hat{q}],
\]
where \(\hat{q}\) is the \((1 - \alpha)\)-quantile of residuals on a held-out calibration set. CF-GNN is model-agnostic and guarantees valid marginal coverage under the exchangeability assumption. However, it requires an additional calibration dataset and may be sensitive to distribution shifts. We use the official PyTorch-Geometric implementation\footnote{Source code: \url{https://github.com/snap-stanford/conformalized-gnn}} provided by the authors.

Table~\ref{tab:baseline-summary} summarizes the comparative properties of the five baselines. These models reflect a wide range of UQ approaches—quantile-based, Bayesian, and CP—and highlight the trade-offs in coverage validity, interval compactness, computational efficiency, and graph-awareness. For fairness, we implement all models using the same GNN encoder as QpiGNN, except CF-GNN$^{opt}$, which uses its official architecture and calibration setup.

\section{Supplementary Metric Results}
\label{apdx:other_epx}
\begin{table}[t]
\centering
\caption{
Comparison of Mean Prediction Error (MPE) and Combinational Coverage Width-based Criterion (CWC) on synthetic and real datasets. Lower values indicate better performance for both metrics. The best result for each dataset is highlighted in \textbf{bold}, and the second-best is \underline{underlined}.
}
\vspace{4pt}
\setlength{\abovecaptionskip}{5pt}
\setlength{\tabcolsep}{5pt}
\renewcommand{\arraystretch}{1.3}

\resizebox{\textwidth}{!}{
\begin{tabular}{c|c|cc|cc|cc|cc|cc|cc|cc|cc|cc}
\toprule
& \multirow{2}{*}{{\textbf{Model}}}
& \multicolumn{2}{c|}{\textbf{Basic}} & \multicolumn{2}{c|}{\textbf{Gaussian}} & \multicolumn{2}{c|}{\textbf{Uniform}} & \multicolumn{2}{c|}{\textbf{Outlier}} & \multicolumn{2}{c|}{\textbf{Edge}} & \multicolumn{2}{c|}{\textbf{BA}} & \multicolumn{2}{c|}{\textbf{ER}} & \multicolumn{2}{c|}{\textbf{Grid}} & \multicolumn{2}{c}{\textbf{Tree}}\\
\cline{3-20}
& & \textbf{MPE} & \textbf{CWC} & \textbf{MPE} & \textbf{CWC} & \textbf{MPE} & \textbf{CWC} & \textbf{MPE} & \textbf{CWC} & \textbf{MPE} & \textbf{CWC} & \textbf{MPE} & \textbf{CWC} & \textbf{MPE} & \textbf{CWC} & \textbf{MPE} & \textbf{CWC} & \textbf{MPE} & \textbf{CWC} \\
\cline{2-20}
\multirow{6}{*}{\rotatebox{90}{\textbf{Synthetic}}}
& SQR-GNN 
& \uline{0.09} & 1.01
& \textbf{0.13} & 1.32
& \uline{0.13} & 1.27
& \textbf{0.05} & \textbf{0.20}
& \textbf{0.08} & \textbf{0.64}
& \uline{0.25} & 3.20
& \uline{0.21} & 4.27
& 0.17 & 2.87
& \textbf{0.08} & 1.38
\\
& RQR$^{adj.}$-GNN  
& 0.25 & 1.66
& \uline{0.14} & 1.34
& 0.17 & 1.48
& 0.11 & \uline{0.71}
& 0.25 & 1.43
& 0.28 & 3.41
& 0.25 & 1.77
& 0.17 & 3.87
& 0.24 & 2.14
\\
& BayesianNN 
& 0.39 & 4.15
& 0.31 & 4.63
& 0.35 & 4.39
& 0.09 & 3.94
& 0.42 & 4.22
& 0.40 & 4.43
& 0.28 & 4.34
& 0.30 & 4.53
& 0.29 & 4.79
\\
& MC dropout 
& 0.28 & 71.71
& 0.15 & 34.7
& 0.21 & 60.18 
& \uline{0.07} & 8.18
& 0.27 & 58.17
& 0.24 & 47.61
& 0.28 & 157.92 
& \textbf{0.15} & 55.25 
& 0.27 & 85.17
\\
& CF-GNN 
& 0.57 & 3.26
& 0.74 & 2.50 
& 0.85 & 4.05
& 0.54 & 0.67 
& 0.50 & 3.01
& 17.67 & 59.12
& 4.62 & 14.57 
& 0.95 & 11.82
& 0.26 & 1.80 
\\
\cline{2-20}
& \texttt{QpiGNN} ($\lambda^{0.5}$)
& \textbf{0.08} & \uline{0.63} 
& \textbf{0.13} & \textbf{1.15}
& \textbf{0.11} & \textbf{1.02}
& 0.18 & 0.99
& \uline{0.10} & \uline{0.65}
& \textbf{0.11} & \uline{0.75} 
& \textbf{0.15} & \textbf{0.97} 
& \uline{0.16} & \textbf{1.39} 
& \textbf{0.08} & \textbf{0.70} 
\\
& \texttt{QpiGNN} ($\lambda^{0.1}$)
& 0.25 & 1.33
& 0.15 & 1.36 
& 0.18 & 1.35 
& 0.23 & 1.04 
& 0.26 & 1.34 
& 0.32 & 1.56 
& 0.24 & \uline{1.36} 
& 0.17 & 1.57 
& \uline{0.09} & \uline{0.94} 
\\
& \texttt{QpiGNN} ($\lambda^{opt.}$)
& \textbf{0.08} & \textbf{0.62}
& \uline{0.14} & \uline{1.18}
& 0.14 & \uline{1.16}
& 0.19 & 1.00
& 0.14 & 0.91
& \textbf{0.11} & \textbf{0.74}
& \textbf{0.15} & \textbf{0.97}
& \textbf{0.15} & \uline{1.44}
& \textbf{0.08} & \textbf{0.70}
\\
\bottomrule
\end{tabular}
}

\vspace{2pt}
\resizebox{\textwidth}{!}{
\begin{tabular}{c|c|cc|cc|cc|cc|cc|cc|cc|cc|cc|cc}
\toprule
& \multirow{2}{*}{{\textbf{Model}}}
& \multicolumn{2}{c|}{\textbf{Education}} & \multicolumn{2}{c|}{\textbf{Election}} & \multicolumn{2}{c|}{\textbf{Income}} & \multicolumn{2}{c|}{\textbf{Unemploy.}} & \multicolumn{2}{c|}{\textbf{Twitch}} & \multicolumn{2}{c|}{\textbf{Chameleon}} & \multicolumn{2}{c|}{\textbf{Crocodile}} & \multicolumn{2}{c|}{\textbf{Squirrel}} & \multicolumn{2}{c|}{\textbf{Anaheim}} & \multicolumn{2}{c}{\textbf{Chicago}}\\
\cline{3-22}
& & \textbf{MPE} & \textbf{CWC} & \textbf{MPE} & \textbf{CWC} & \textbf{MPE} & \textbf{CWC} & \textbf{MPE} & \textbf{CWC} & \textbf{MPE} & \textbf{CWC} & \textbf{MPE} & \textbf{CWC} & \textbf{MPE} & \textbf{CWC} & \textbf{MPE} & \textbf{CWC} & \textbf{MPE} & \textbf{CWC} & \textbf{MPE} & \textbf{CWC} \\
\cline{2-22}
\multirow{6}{*}{\rotatebox{90}{\textbf{Real}}}
& SQR-GNN
& \textbf{0.09} & \textbf{0.55}
& \textbf{0.12} & \textbf{1.02}
& \textbf{0.06} & \textbf{0.56}
& \textbf{0.08} & \uline{1.01}
& \textbf{0.03} & 20.77
& \textbf{0.02} & 2.39
& \textbf{0.01} & 0.75
& \textbf{0.02} & 10.73
& \textbf{0.10} & 1.06
& \textbf{0.06} & \uline{0.58}
\\
& RQR$^{adj.}$-GNN 
& 0.15 & 0.87
& \uline{0.14} & 1.17
& 0.10 & 0.74
& \uline{0.09} & 0.99
& 0.10 & 1.30
& 0.07 & \uline{0.59}
& 0.04 & 0.27
& 0.07 & \uline{0.31}
& 0.20 & 1.76
& 0.10 & 0.77
\\
& BayesianNN 
& 0.16 & 3.11
& 0.37 & 4.25
& 0.14 & 3.96
& 0.16 & 5.15
& 0.18 & 6.88
& 0.08 & 6.24
& 0.08 & 4.53
& 0.08 & 4.06
& 0.15 & 5.36
& 0.10 & 4.27
\\
& MC dropout 
& 0.15 & 64.77
& \uline{0.14} & 35.59
& 0.11 & 57.30
& \uline{0.09} & 44.11
& 0.12 & 44.28 
& 0.04 & 26.19
& 0.03 & 12.55
& 0.04 & 27.65 
& 0.18 & 58.96 
& 0.10 & 82.75
\\
& CF-GNN 
& 0.72 & 0.81 
& 0.26 & 1.34 
& 0.75 & 0.97
& 0.75 & 0.87
& 0.77 & 1.26 
& - & -
& - & -
& - & -
& 0.89 & 1.37 
& 0.75 & 0.89
\\
& CF-GNN (opt.)
& 0.76 & 0.98
& 0.22 & \uline{1.03}
& 0.71 & 0.81 
& 0.67 & \textbf{0.73} 
& 0.57 & \textbf{0.75}
& - & -
& - & -
& - & -
& 1.00 & 1.37
& 0.67 & 0.65 
\\
\cline{2-22}
&  \texttt{QpiGNN} ($\lambda^{0.5}$)
& \uline{0.11} & \uline{0.61} 
& 0.15 & 1.17 
& \uline{0.07} & \textbf{0.56} 
& 0.18 & 1.29 
& \uline{0.04} & 3.79 
& \uline{0.03} & 3.56 
& \uline{0.02} & \textbf{0.16} 
& \uline{0.03} & 0.58 
& \uline{0.12} & \uline{0.95} 
& \uline{0.07} & \textbf{0.55} 
\\
&  \texttt{QpiGNN} ($\lambda^{0.1}$)
& 0.23 & 0.96 
& 0.17 & 1.39 
& 0.13 & 0.96 
& 0.23 & 1.61 
& 0.11 & 1.28 
& 0.09 & 0.91 
& 0.06 & 0.80 
& 0.08 & 0.66 
& 0.19 & 1.37  
& 0.11 & 0.87
\\
& \texttt{QpiGNN} ($\lambda^{opt.}$)
& 0.12 & 0.64
& 0.14 & 1.15
& \uline{0.07} & \uline{0.59}
& 0.18 & 1.28
& 0.08 & \uline{1.00}
& 0.06 & \textbf{0.55}
& 0.03 & \uline{0.26}
& 0.04 & \textbf{0.29}
& \uline{0.12} & \textbf{0.94}
& \uline{0.07} & \textbf{0.55}
\\
\bottomrule
\end{tabular}
}

\label{tab:exp_apdx_1}
\end{table} 
\begin{table}[t]
\centering
\caption{
Comparison of Sharpness and Winkler Score (WS) on synthetic and real datasets. Lower values indicate better performance for both metrics. The best result for each dataset is highlighted in \textbf{bold}, and the second-best is \underline{underlined}.
}
\vspace{4pt}

\setlength{\abovecaptionskip}{5pt}
\setlength{\tabcolsep}{5pt}
\renewcommand{\arraystretch}{1.3}

\resizebox{\textwidth}{!}{
\begin{tabular}{c|c|cc|cc|cc|cc|cc|cc|cc|cc|cc}
\toprule
& \multirow{2}{*}{\textbf{Model}} 
& \multicolumn{2}{c|}{\textbf{Basic}} & \multicolumn{2}{c|}{\textbf{Gaussian}} & \multicolumn{2}{c|}{\textbf{Uniform}} & \multicolumn{2}{c|}{\textbf{Outlier}} & \multicolumn{2}{c|}{\textbf{Edge}} & \multicolumn{2}{c|}{\textbf{BA}} & \multicolumn{2}{c|}{\textbf{ER}} & \multicolumn{2}{c|}{\textbf{Grid}} & \multicolumn{2}{c}{\textbf{Tree}}\\
\cline{3-20}
& & \textbf{Sharpness} & \textbf{WS} & \textbf{Sharpness} & \textbf{WS} & \textbf{Sharpness} & \textbf{WS} & \textbf{Sharpness} & \textbf{WS} & \textbf{Sharpness} & \textbf{WS} & \textbf{Sharpness} & \textbf{WS} & \textbf{Sharpness} & \textbf{WS} & \textbf{Sharpness} & \textbf{WS} & \textbf{Sharpness} & \textbf{WS} \\
\cline{2-20}
\multirow{6}{*}{\rotatebox{90}{\textbf{Synthetic}}}
& SQR-GNN 
& \uline{0.14} & 0.33
& \uline{0.26} & \textbf{0.51}
& 0.27 & 0.52
& \uline{0.01} & \uline{0.12}
& \uline{0.13} & \textbf{0.32}
& 0.54 & 0.73
& \uline{0.38} & \uline{0.61}
& 0.29 & 0.55
& \textbf{0.07} & \textbf{0.27}
\\
& RQR$^{adj.}$-GNN  
& 0.67 & 0.82
& 0.29 & \uline{0.54}
& 0.46 & 0.68
& 0.13 & 0.37
& 0.68 & 0.84
& 0.57 & 0.76
& 0.60 & 0.78
& \uline{0.25} & \uline{0.52}
& 0.46 & 0.68
\\
& BayesianNN 
& 9.08 & 3.01
& 8.92 & 2.98
& 9.02 & 3.00
& 8.72 & 2.95
& 9.37 & 3.06
& 9.50 & 3.08
& 9.09 & 3.01
& 9.11 & 3.01
& 9.02 & 3.00
\\
& MC dropout 
& \textbf{0.10} & 0.47 
& \textbf{0.04} & 0.27
& \textbf{0.07} & \textbf{0.36}
& \textbf{0.00} & \textbf{0.11}
& \textbf{0.09} & 0.45
& \textbf{0.07} & \textbf{0.41}
& \textbf{0.07} & \uline{0.41}
& \textbf{0.03} & \textbf{0.25}
& \textbf{0.04} & \uline{0.38} 
\\
& CF-GNN 
& 4.79 & 1.91
& 9.47 & 2.92 
& 10.49 & 3.07
& 4.42 & 2.01 
& 3.79 & 1.78
& 10514.81 & 68.69
& 638.45 & 17.26
& 14.91 & 3.19
& 1.31 & 0.98 
\\
\cline{2-20}
&  \texttt{QpiGNN} ($\lambda^{0.5}$)
& \textbf{0.10} & \textbf{0.30} 
& 0.31 &  0.55
& \uline{0.19} &  \uline{0.44}
& 0.23 &  0.48
& 0.17 &  \uline{0.39}
& 0.31 &  0.49
& 0.42 &  0.63
& 0.76 &  0.87
& 0.15 &  0.39
\\
&  \texttt{QpiGNN} ($\lambda^{0.1}$)
& 0.86 &  0.93
& 0.71 &  0.84
& 0.79 &  0.89
& 0.56 &  0.75
& 0.94 &  0.97
& 1.03 &  1.01
& 0.85 &  0.92
& 0.95 &  0.98
& 0.35 &  0.59
\\
& \texttt{QpiGNN} ($\lambda^{opt.}$)
& \textbf{0.10} & \uline{0.31}
& 0.42 & 0.65
& 0.39 & 0.62
& 0.25 & 0.49
& 0.30 & 0.54
& \uline{0.30} & \uline{0.48}
& 0.42 & 0.63
& 0.87 & 0.93
& 0.15 & 0.39
\\
\bottomrule
\end{tabular}
}

\vspace{2pt}
\resizebox{\textwidth}{!}{
\begin{tabular}{c|c|cc|cc|cc|cc|cc|cc|cc|cc|cc|cc}
\toprule
& \multirow{2}{*}{\textbf{Model}} 
& \multicolumn{2}{c|}{\textbf{Education}} & \multicolumn{2}{c|}{\textbf{Election}} & \multicolumn{2}{c|}{\textbf{Income}} & \multicolumn{2}{c|}{\textbf{Unemploy.}} & \multicolumn{2}{c|}{\textbf{Twitch}} & \multicolumn{2}{c|}{\textbf{Chameleon}} & \multicolumn{2}{c|}{\textbf{Crocodile}} & \multicolumn{2}{c|}{\textbf{Squirrel}} & \multicolumn{2}{c|}{\textbf{Anaheim}} & \multicolumn{2}{c}{\textbf{Chicago}}\\
\cline{3-22}
& & \textbf{Sharpness} & \textbf{WS} & \textbf{Sharpness} & \textbf{WS} & \textbf{Sharpness} & \textbf{WS} & \textbf{Sharpness} & \textbf{WS} & \textbf{Sharpness} & \textbf{WS} & \textbf{Sharpness} & \textbf{WS} & \textbf{Sharpness} & \textbf{WS} & \textbf{Sharpness} & \textbf{WS} & \textbf{Sharpness} & \textbf{WS} & \textbf{Sharpness} & \textbf{WS} \\
\cline{2-22}
\multirow{6}{*}{\rotatebox{90}{\textbf{Real}}}
& SQR-GNN 
& \uline{0.11} & \uline{0.33}
& \uline{0.22} & \uline{0.48}
& \uline{0.05} & \uline{0.23}
& \uline{0.11} & \uline{0.34}
& \textbf{0.00} & \textbf{0.05}
& \textbf{0.00} & \textbf{0.03}
& \textbf{0.00} & \textbf{0.01}
& \textbf{0.00} & \textbf{0.03}
& \uline{0.12} & \uline{0.32}
& \uline{0.05} & \uline{0.22}
\\
& RQR$^{adj.}$-GNN  
& 0.24 & 0.50
& 0.29 & 0.55
& 0.13 & 0.37
& 0.15 & 0.39
& 0.18 & 0.42
& 0.02 & 0.16
& \uline{0.01} & 0.09
& 0.02 & 0.16
& 0.25 & 0.51
& 0.09 & 0.31
\\
& BayesianNN 
& 8.78 & 2.96
& 8.93 & 2.98
& 8.83 & 2.97
& 8.87 & 2.98
& 9.44 & 3.07
& 8.73 & 2.95
& 9.45 & 3.07
& 8.85 & 2.97
& 8.70 & 2.94
& 8.96 & 2.99
\\
& MC dropout 
& \textbf{0.01} & \textbf{0.14}
& \textbf{0.03} & \textbf{0.23}
& \textbf{0.01} & \textbf{0.11}
& \textbf{0.01} & \textbf{0.13}
& 0.02 & 0.15
& \textbf{0.00} & \uline{0.04}
& \textbf{0.00} & \uline{0.02}
& \textbf{0.00} & \uline{0.04}
& \textbf{0.01} & \textbf{0.14}
& \textbf{0.01} & \textbf{0.09}
\\
& CF-GNN 
& 8.78 & 2.83
& 1.25 & 1.10
& 14.32 & 3.55
& 10.77 & 3.24
& 14.33 & 3.56
& - & - 
& - & -
& - & -
& 11.49 & 3.28
& 11.11 & 3.18
\\
& CF-GNN (opt.)
& 11.82 & 3.14
& 0.90 & 0.96
& 8.63 & 2.97
& 6.88 & 2.69
& 5.58 & 2.38
& - & - 
& - & -
& - & -
& 8.17 & 2.86
& 5.56 & 2.31
\\
\cline{2-22}
&  \texttt{QpiGNN} ($\lambda^{0.5}$)
& 0.35 &  0.57
& 0.62 &  0.78
& 0.19 &  0.41
& 0.56 &  0.74
& \uline{0.01} & \uline{0.10} 
& \textbf{0.00} &  0.05
& \uline{0.01} &  0.08
& \uline{0.01} &  0.08
& 0.18 &  0.40
& 0.15 &  0.36
\\
&  \texttt{QpiGNN} ($\lambda^{0.1}$)
& 0.80 &  0.90
& 0.93 &  0.97
& 0.52 &  0.72
& 0.87 &  0.93
& 0.32 &  0.54
& 0.18 &  0.41
& 0.32 &  0.54
& 0.24 &  0.47
& 0.56 &  0.74
& 0.37 &  0.60
\\
& \texttt{QpiGNN} ($\lambda^{opt.}$)
& 0.38 & 0.59
& 0.61 & 0.78
& 0.22 & 0.44
& 0.55 & 0.73
& 0.15 & 0.37
& 0.06 & 0.23
& 0.03 & 0.16
& 0.04 & 0.19
& 0.18 & 0.40
& 0.16 & 0.36
\\
\bottomrule
\end{tabular}
}
\label{tab:exp_apdx_2}
\end{table} 
In addition to the primary evaluation metrics—PICP and MPIW—we report results on four supplementary UQ metrics to provide a more comprehensive evaluation.
Table~\ref{tab:exp_apdx_1} presents results on MPE and CWC. MPE captures the alignment between the center of each predicted interval and the corresponding ground truth value, serving as a proxy for point-prediction accuracy. CWC combines normalized interval width with an exponential penalty on undercoverage, allowing us to assess how well a model balances compactness with reliability. A lower CWC indicates better trade-off handling between sharpness and valid coverage.
Table~\ref{tab:exp_apdx_2} reports results on Sharpness and WS, both of which focus on interval compactness and informativeness. Sharpness penalizes overly wide intervals through a quadratic term, while WS additionally incorporates a coverage-sensitive penalty, making it particularly useful for evaluating practical utility. Models like SQR-GNN and MC dropout achieve low sharpness and WS, indicating tight intervals—but often at the expense of undercoverage.

As discussed in Section~\ref{sec:experiments}, such models fail to meet the target coverage threshold (e.g., \(1 - \alpha = 0.9\)) as shown in Table~\ref{tab:picp_mpiw}, thereby exposing a critical trade-off between reliability and precision. These additional metrics thus offer a more nuanced view of model performance, enabling a clearer understanding of uncertainty behavior across diverse graph datasets. They also help identify whether a method’s compact intervals are meaningfully calibrated or merely overconfident.

\begin{figure}[ht]
  \centering
  \includegraphics[width=1.0\linewidth]{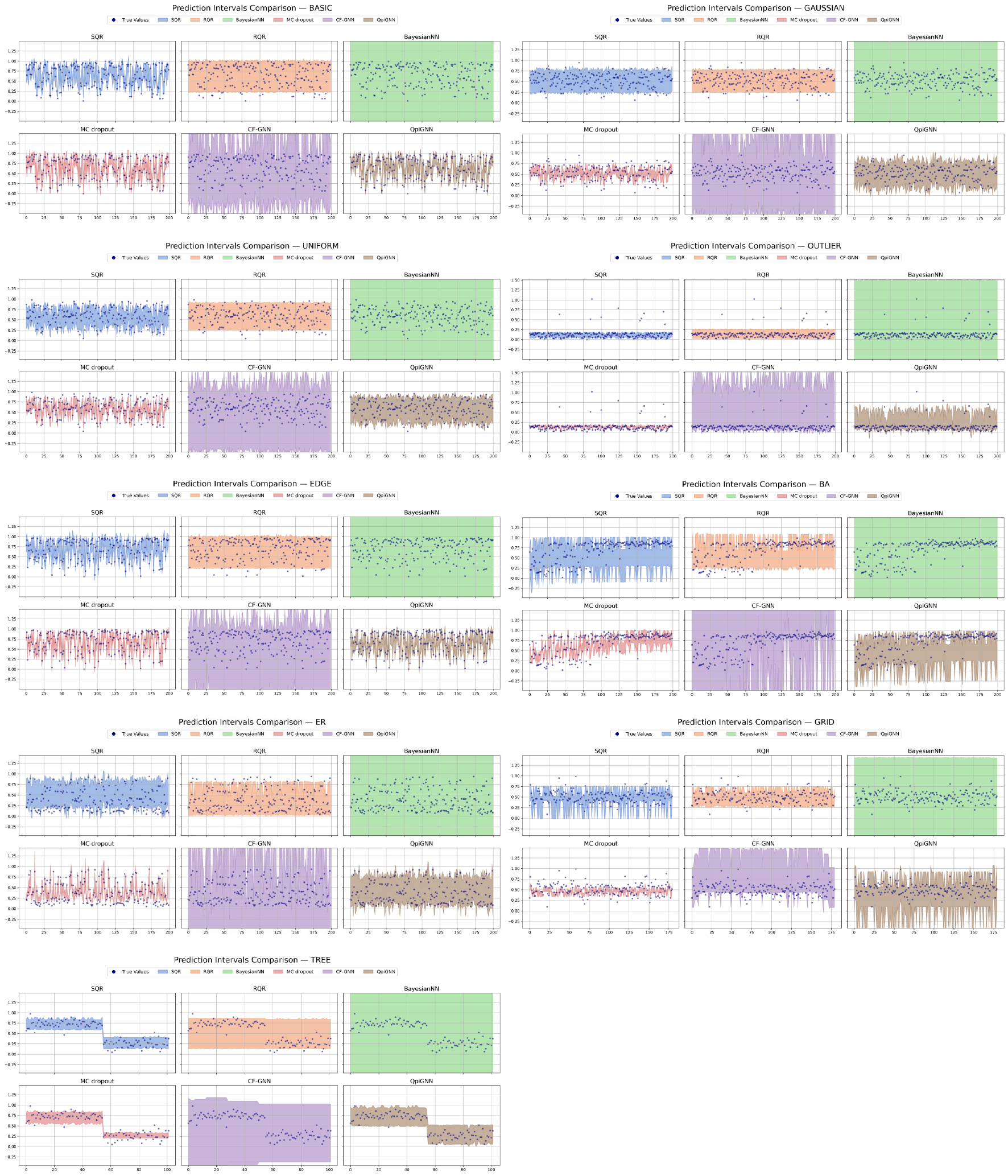}
  \caption{Prediction interval comparison across 9 synthetic datasets for qualitative analysis.}
  \label{fig:qa_synthetic}
\end{figure}

\begin{figure}[ht]
  \centering
  \includegraphics[width=1.0\linewidth]{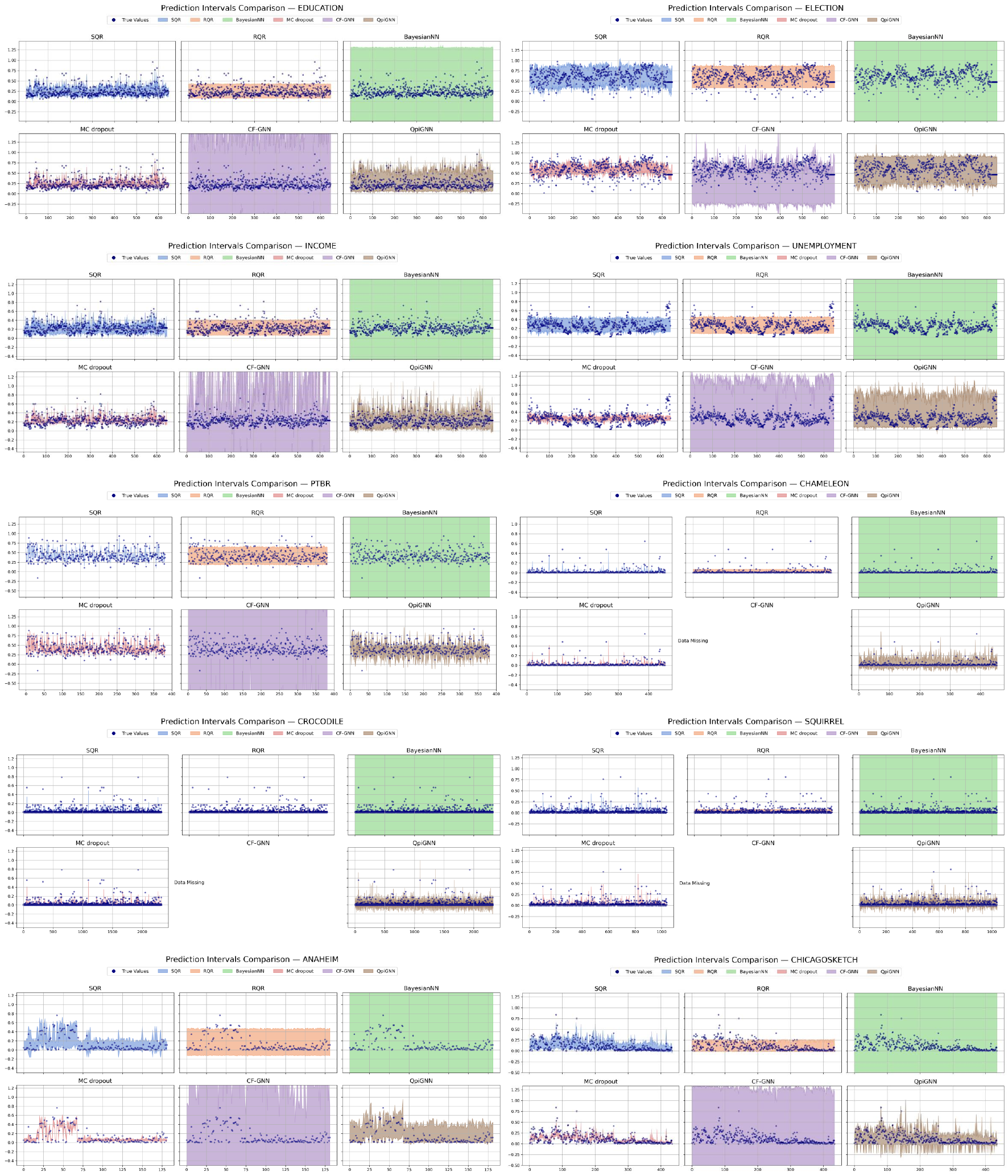}
  \caption{Prediction interval comparison across 10 real datasets for qualitative analysis.}
  \label{fig:qa_real}
\end{figure}

\section{Qualitative Analysis}
\label{apdx:quality}
To complement our quantitative evaluation, we present a qualitative comparison of prediction intervals across all models. Figure~\ref{fig:qa_synthetic} shows visualizations for nine synthetic datasets, while Figure~\ref{fig:qa_real} displays corresponding results for ten real-world graph datasets.

QpiGNN achieves a favorable trade-off between calibration and compactness. On the Tree dataset, SQR and MC Dropout produce narrow intervals that miss several true targets—demonstrating undercoverage—while RQR generates globally smooth but overly wide intervals that fail to capture local uncertainty variations. BayesianNN and CF-GNN ensure full coverage by expanding the intervals significantly, resulting in conservative but less informative predictions. In contrast, QpiGNN adaptively adjusts its interval widths, slightly increasing them when needed to satisfy the target coverage (\(1 - \alpha = 0.9\)), while preserving interval sharpness where uncertainty is low.

These trends persist on the real-world Anaheim dataset, where QpiGNN again balances calibration and informativeness. The ability to modulate interval width in response to local uncertainty allows QpiGNN to avoid both undercoverage and overconservativeness, unlike other methods that either overfit to fixed-width regimes or fail to generalize. 


\section{Computational Efficiency and Complexity Analysis}
\label{apdx:compute_effi}
\begin{table}[t]
\caption{Comparison of average computational resource consumption across models during real-world dataset training.}
\vspace{4pt}

\centering
\setlength{\abovecaptionskip}{5pt}
\setlength{\tabcolsep}{5pt}
\renewcommand{\arraystretch}{1.3}

\resizebox{\textwidth}{!}{
\begin{tabular}{c|c|c|c|c|c|c|c|c|c|c|c}
\toprule
\multicolumn{2}{c|}{\textbf{Model}} & \textbf{Education} & \textbf{Election} & \textbf{Income} & \textbf{Unemploy.} & \textbf{Twitch} & \textbf{Chameleon} & \textbf{Crocodile} & \textbf{Squirrel} & \textbf{Anaheim} & \textbf{Chicago}\\
\midrule
\multirow{7}{*}{\textbf{Training Time (sec)}}
& SQR-GNN
& 1.39
& 1.43
& 1.61
& 1.52
& 1.83
& 1.83
& 13.76
& 2.98
& 1.87
& 1.93
\\
& RQR$^{adj.}$-GNN 
& 2.06
& 1.99
& 1.79
& 2.39
& 2.10
& 2.00
& 31.51
& 7.81
& 1.78
& 1.71
\\
& BayesianNN 
& 1.32
& 1.28
& 1.31
& 1.32
& 1.34
& 1.41 
& 21.34
& 5.36
& 1.48
& 1.26
\\
& MC dropout 
& 1.83
& 1.83
& 1.82
& 1.37
& 1.77
& 1.53
& 18.37
& 2.92
& 1.70
& 1.80
\\
& CF-GNN
& 3.63 / 9.77
& 3.44 / 9.86
& 3.70 / 9.80
& 3.48 / 9.73
& 4.83 / 9.61
& -
& -
& -
& 3.48 / 8.21
& 3.27 / 9.06
\\
& \texttt{QpiGNN}
& 1.97
& 1.84
& 1.74
& 1.70
& 1.63
& 1.84
& 14.87
& 5.28
& 1.92
& 1.87
\\
\midrule
\multirow{7}{*}{\textbf{CPU (MB)}}
& SQR-GNN
& 68.56
& 68.56
& 68.56
& 68.56
& 401.34
& 447.80
& 8069.80
& 2016.32
& 65.74
& 68.04
\\
& RQR$^{adj.}$-GNN 
& 286.70
& 286.70
& 286.70
& 286.70
& 382.78
& 425.81
& 7601.41
& 1966.33
& 82.11
& 167.08
\\
& BayesianNN 
& 68.47
& 68.47
& 68.47
& 68.47
& 382.77
& 425.79
& 7601.32
& 1966.30
& 65.72
& 67.99
\\
& MC dropout 
& 69.23
& 69.23
& 69.23
& 69.23
& 382.77
& 425.79
& 7601.61
& 1966.29
& 65.94
& 68.49
\\
& CP 
& 67.58
& 67.58
& 67.58
& 67.58
& 299.72
& 331.93
& 5712.37
& 1340.74
& 65.65
& 67.02
\\
& CF-GNN
& 70.70 / 69.53
& 70.70 / 69.53
& 70.70 / 69.53
& 70.70 / 69.52
& 523.31 / 521.99
& -
& -
& -
& 66.02 / 65.73
& 70.27 / 69.54
\\
& \texttt{QpiGNN} 
& 68.48
& 68.48
& 68.48
& 68.48
& 382.78
& 425.80
& 7601.44
& 1966.31
& 1306.90
& 68.00
\\
\midrule
\multirow{7}{*}{\textbf{Parameters (1,000)}}
& SQR-GNN
& 9.28
& 9.28
& 9.28
& 9.28
& 414.27
& 409.41
& 1695.94
& 411.46
& 9.03
& 9.03
\\
& RQR$^{adj.}$-GNN 
& 9.22
& 9.22
& 9.22
& 9.22
& 414.21
& 409.35
& 1695.87
& 411.39
& 8.96
& 8.96
\\
& BayesianNN 
& 9.22
& 9.22
& 9.22
& 9.22
& 414.21
& 409.35
& 1695.87
& 411.39
& 8.96
& 8.96
\\
& MC dropout 
& 9.15
& 9.15
& 9.15
& 9.15
& 414.15
& 409.28
& 1695.81
& 411.33
& 8.90
& 8.90
\\
& CP 
& 9.15
& 9.15
& 9.15
& 9.15
& 414.15
& 409.28
& 1695.81
& 411.33
& 8.90
& 8.90
\\
& CF-GNN
& 1.22
& 1.22
& 1.22
& 1.22
& 406.21
& -
& -
& -
& 0.96
& 0.96
\\
& \texttt{QpiGNN} 
& 9.22
& 9.22
& 9.22
& 9.22
& 414.21
& 409.35
& 1695.87
& 411.39
& 8.96
& 8.96
\\
\bottomrule
\end{tabular}
}

\label{tab:resource}
\end{table} 
Table~\ref{tab:resource} presents a comparison of computational resource usage across all models during training on real-world graph datasets, including training time, peak memory usage, and model size, all measured using an NVIDIA Tesla V100 GPU. 
The results highlight the trade-offs between efficiency and uncertainty estimation quality across different approaches.
We present a theoretical comparison of the time complexity of all baseline models considered in this study. Let $N$ and $E$ denote the number of nodes and edges in the graph and $d$ the hidden dimension of the GNN layers.

QpiGNN, RQR$^{adj.}$-GNN, and SQR-GNN share a common two-layer GraphSAGE~\citep{hamilton2017inductive} backbone and differ only in their output heads or loss formulations. Their time complexity is identical $\mathcal{O}(E d + N d^2)$.
MC dropout requires $T$ stochastic forward passes at inference time to approximate uncertainty, resulting in a total complexity of $\mathcal{O}(T \cdot (E d + N d^2))$.
Bayesian Neural Networks (BayesianNN) introduces posterior sampling in the final layer, incurring additional cost per forward pass $\mathcal{O}(E d + N d^2 + N d)$.
Conformalized GNN (CF-GNN) attaches a separate multi-layer GNN module (ConfGNN) for calibration on top of the base predictor. Let $d'$ and $L$ denote the hidden dimension and number of layers in ConfGNN, respectively. The resulting complexity is $\mathcal{O}(E d + N d^2 + L \cdot (E d' + N d'^2))$. This auxiliary component significantly increases both runtime and memory requirements, which may limit scalability, particularly for large or dense graphs.

QpiGNN and its quantile-based variants (RQR, SQR) offer strong computational efficiency by avoiding repeated forward passes or additional modules. In contrast, MC dropout and CF-GNN introduce notable overhead due to ensemble-style inference and dual-network design, respectively. This trade-off underscores the practical scalability advantage of QpiGNN for efficient uncertainty estimation in graph learning.

\section{Structural Shift Generalization}
\label{apdx:gen_results}
\begin{table}[t]
\centering
\caption{
Generalization performance across different synthetic graph datasets.
Each cell reports the average PICP and MPIW over 10 runs with a fixed width penalty $\lambda_{\text{width}} = 0.5$.}
\vspace{4pt}
\small

\setlength{\abovecaptionskip}{5pt}
\setlength{\tabcolsep}{5pt}
\renewcommand{\arraystretch}{1.2}
\resizebox{0.7\textwidth}{!}{
\begin{tabular}{l||rrrrr||rrrrr}
\toprule
\multirow{2}{*}{Target} & \multicolumn{5}{c||}{\textbf{PICP}} & \multicolumn{5}{c}{\textbf{MPIW}} \\
\cline{2-11}
 &  \textbf{BA} &  \textbf{ER} & \textbf{basic} &  \textbf{grid} &  \textbf{tree} 
 &  \textbf{BA} &  \textbf{ER} & \textbf{basic} &  \textbf{grid} &  \textbf{tree} \\
\midrule
\textbf{BA}     
& 0.8715 & 0.9720 & 0.7769 & 0.9527 & 0.9627
& 0.5549 & 1.2381 & 1.0600 & 1.3330 & 1.1843 \\
\textbf{ER}
& 0.8324 & 0.9289 & 0.6719 & 0.8627 & 0.8892
& 0.7489 & 0.6280 & 0.8749 & 0.9655 & 0.9233\\
\textbf{basic}
& 0.1000 & 0.2750 & 0.9000 & 0.8277 & 0.4313
& 0.3060 & 0.4040 & 0.4018 & 0.5048 & 0.4799\\
\textbf{grid}
& 0.7064 & 0.6385 & 0.5250 & 0.8250 & 0.6578
& 0.8479 & 0.8186 & 0.8099 & 0.8072 & 0.8335\\
\textbf{tree}
& 0.1045 & 0.5625 & 0.2495 & 0.5644 & 0.9058
& 0.3884 & 0.4299 & 0.3389 & 0.3861 & 0.3905 \\
\bottomrule
\end{tabular}
}
\label{tab:generalization}
\end{table}

To evaluate the generalization capacity of QpiGNN under distribution shift, we conduct an out-of-distribution (OOD) experiment across synthetic graph types. Specifically, we train the model on a single graph type and evaluate it on all other types, without further fine-tuning or recalibration.
Table~\ref{tab:generalization} presents the results. Each row indicates the graph type used for training, while columns correspond to the test graph types. We report the PICP and MPIW, averaged over 10 independent runs using a fixed-width penalty \(\lambda_{\text{width}} = 0.5\). Higher PICP reflects better coverage (calibration), and lower MPIW indicates sharper, more compact intervals.

QpiGNN shows the strongest generalization when trained on expressive graphs such as \textit{BA} and \textit{ER}, achieving high PICP scores on multiple unseen target types (e.g., BA→ER: 0.9720, ER→grid: 0.8627). However, this often comes at the cost of wider intervals (e.g., MPIW $\geq$ 1.2). In contrast, models trained on simple graphs such as \textit{basic} or \textit{tree} tend to under-cover other graph types despite producing narrow intervals (e.g., basic→ BA: PICP = 0.1000, MPIW = 0.3060).
Structurally rich graphs improve generalization but lead to wider intervals, while simple graphs yield narrower yet poorly calibrated intervals—highlighting the need for expressive source graphs in uncertainty transfer.

\begin{table}[t]
\centering
\caption[
Ablation results on synthetic datasets
]{
Ablation results on 19 synthetic and real datasets.
\ding{51} denotes an enabled component. 
Each result is averaged over 5 runs of 500 epochs. The width penalty factor is set to $\lambda_{\text{width}} = 0.5$. 
The first column shows PICP and the second shows MPIW. 
\bluebold{Bold} indicates models achieving PICP $\geq 0.9$ (target coverage $1{-}\alpha$), and among them, the configuration with the \reduline{lowest MPIW} is highlighted. \\ 
\textit{Note: The "No Loss" setting defaults to MSE loss over central prediction.}
}
\vspace{4pt}
\setlength{\abovecaptionskip}{5pt}
\setlength{\tabcolsep}{3pt}
\renewcommand{\arraystretch}{1.3}
\resizebox{\textwidth}{!}{
\begin{tabular}{lcccc||cc|cc|cc|cc|cc|cc|cc|cc|cc}
\toprule
\multirow{2}{*}{\textbf{Model Variant}} & \multirow{2}{*}{\textbf{Dual}} & \multirow{2}{*}{\textbf{Coverage}} & \multirow{2}{*}{\textbf{Width}} & 
& \multicolumn{2}{c|}{\textbf{Basic}} 
& \multicolumn{2}{c|}{\textbf{Gaussian}} 
& \multicolumn{2}{c|}{\textbf{Uniform}} 
& \multicolumn{2}{c|}{\textbf{Outlier}} 
& \multicolumn{2}{c|}{\textbf{Edge}} 
& \multicolumn{2}{c|}{\textbf{ER}} 
& \multicolumn{2}{c|}{\textbf{BA}}
& \multicolumn{2}{c|}{\textbf{grid}}
& \multicolumn{2}{c}{\textbf{tree}}
\\
\cline{6-23}
& & & & & \textbf{PICP} & \textbf{MPIW} & \textbf{PICP} & \textbf{MPIW} & \textbf{PICP} & \textbf{MPIW} & \textbf{PICP} & \textbf{MPIW} & \textbf{PICP} & \textbf{MPIW} & \textbf{PICP} & \textbf{MPIW} & \textbf{PICP} & \textbf{MPIW} & \textbf{PICP} & \textbf{MPIW} & \textbf{PICP} & \textbf{MPIW}\\
\midrule
\multicolumn{23}{l}{\textit{Architecture-Level Ablation}} \\
\midrule
Dual Head (learned diff)  
& \ding{51} & \ding{51} & \ding{51} & 
& \bluebold{0.91} & 0.31 
& \bluebold{0.92} & \reduline{0.53}
& 0.88 & 0.43
& 0.88 & 0.47
& \bluebold{0.93} & 0.43
& \bluebold{0.98} & \reduline{0.58}
& \bluebold{0.98} & \reduline{0.45}
& \bluebold{0.98} & \reduline{0.86}
& \bluebold{0.96} & \reduline{0.39}
\\
Fixed Margin (0.05) 
& \ding{51} & \ding{51} & \ding{51} & 
& 0.72 & 0.10 
& 0.29 & 0.10
& 0.25 & 0.10
& 0.68 & 0.10
& 0.77 & 0.10
& 0.64 & 0.10
& 0.43 & 0.10
& 0.18 & 0.10
& 0.32 & 0.10
\\
Fixed Margin (0.10) 
& \ding{51} & \ding{51} & \ding{51} & 
& \bluebold{0.94} & \reduline{0.20} 
& 0.51 & 0.20
& 0.51 & 0.20
& 0.68 & 0.20
& \bluebold{0.97} & \reduline{0.20}
& 0.84 & 0.20
& 0.73 & 0.20
& 0.38 & 0.20
& 0.67 & 0.20
\\
Single Head (low/high sep.) 
& \ding{55} & \ding{51} & \ding{51} & 
& 0.46 & 0.07
& 0.25 & 0.09
& 0.25 & 0.10
& 0.71 & 0.23
& 0.58 & 0.07
& 0.78 & 0.19
& 0.32 & 0.11
& 0.26 & 0.14
& 0.06 & 0.02
\\
\midrule
\multicolumn{23}{l}{\textit{Loss-Level Ablation}} \\
\midrule
Full Loss
& \ding{51} & \ding{51} & \ding{51} & 
& \bluebold{0.91} & \reduline{0.31} 
& \bluebold{0.92} & \reduline{0.53}
& 0.88 & 0.43
& 0.88 & 0.47
& \bluebold{0.93} & \reduline{0.43}
& \bluebold{0.98} & \reduline{0.58}
& \bluebold{0.98} & \reduline{0.45}
& \bluebold{0.98} & \reduline{0.86}
& \bluebold{0.96} & \reduline{0.39}
\\
Coverage Only 
& \ding{51} & \ding{51} & \ding{55} & 
& \bluebold{1.00} & 1.24
& \bluebold{1.00} & 1.18
& \bluebold{1.00} & 1.21
& \bluebold{0.99} & 1.13
& \bluebold{1.00} & 1.30
& \bluebold{0.99} & 1.09
& \bluebold{0.97} & 1.16
& \bluebold{0.99} & 1.14
& \bluebold{1.00} & 1.10
\\
Width Only 
& \ding{51} & \ding{55} & \ding{51} & 
& 0.00 & 0.00 
& 0.00 & 0.00
& 0.00 & 0.00
& 0.00 & 0.00
& 0.00 & 0.00
& 0.00 & 0.00
& 0.00 & 0.01
& 0.00 & 0.02
& 0.00 & 0.01
\\
No Loss (Sanity Check) 
& \ding{51} & \ding{55} & \ding{55} & 
& \bluebold{1.00} & 1.00 
& \bluebold{1.00} & 1.00
& \bluebold{1.00} & \reduline{1.00}
& \bluebold{0.98} & \reduline{1.00}
& \bluebold{1.00} & 1.00
& \bluebold{0.99} & 1.00
& \bluebold{0.99} & 1.00
& \bluebold{1.00} & 1.00
& \bluebold{1.00} & 1.00
\\
\bottomrule
\end{tabular}
}
\label{tab:ablation_apdx}
\end{table}

\begin{table}[t]
\centering
\vspace{4pt}
\setlength{\abovecaptionskip}{5pt}
\setlength{\tabcolsep}{3pt}
\renewcommand{\arraystretch}{1.3}
\resizebox{\textwidth}{!}{
\begin{tabular}{lcccc||cc|cc|cc|cc|cc|cc|cc|cc|cc|cc}
\toprule
\multirow{2}{*}{\textbf{Model Variant}} & \multirow{2}{*}{\textbf{Dual}} & \multirow{2}{*}{\textbf{Coverage}} & \multirow{2}{*}{\textbf{Width}} & 
& \multicolumn{2}{c|}{\textbf{Education}} 
& \multicolumn{2}{c|}{\textbf{Election}} 
& \multicolumn{2}{c|}{\textbf{Income}} 
& \multicolumn{2}{c|}{\textbf{Unemploy.}} 
& \multicolumn{2}{c|}{\textbf{Twitch}} 
& \multicolumn{2}{c|}{\textbf{Chameleon}} 
& \multicolumn{2}{c|}{\textbf{Crocodile}}
& \multicolumn{2}{c|}{\textbf{Squirrel}}
& \multicolumn{2}{c}{\textbf{Anaheim}}
& \multicolumn{2}{c}{\textbf{Chicago}}
\\
\cline{6-25}
& & & & & \textbf{PICP} & \textbf{MPIW} & \textbf{PICP} & \textbf{MPIW} & \textbf{PICP} & \textbf{MPIW} & \textbf{PICP} & \textbf{MPIW} & \textbf{PICP} & \textbf{MPIW} & \textbf{PICP} & \textbf{MPIW} & \textbf{PICP} & \textbf{MPIW} & \textbf{PICP} & \textbf{MPIW} & \textbf{PICP} & \textbf{MPIW} & \textbf{PICP} & \textbf{MPIW}\\
\midrule
\multicolumn{23}{l}{\textit{Architecture-Level Ablation}} \\
\midrule
Dual Head (learned diff)  
& \ding{51} & \ding{51} & \ding{51} & 
& \bluebold{0.99} & \reduline{0.56}
& \bluebold{0.99} & \reduline{0.82}
& \bluebold{0.99} & \reduline{0.40}
& \bluebold{0.99} & \reduline{0.70}
& 0.60 & 0.09
& 0.61 & 0.04
& \bluebold{0.94} & \reduline{0.09}
& 0.71 & 0.07
& \bluebold{0.93} & \reduline{0.40} 
& \bluebold{0.97} & \reduline{0.36}
\\
Fixed Margin (0.05) 
& \ding{51} & \ding{51} & \ding{51} & 
& 0.45 & 0.10
& 0.30 & 0.10
& 0.59 & 0.10
& 0.49 & 0.10
& 0.71 & 0.10
& \bluebold{0.91} & \reduline{0.10}
& \bluebold{0.94} & 0.10
& 0.88 & 0.10
& 0.50 & 0.10
& 0.64 & 0.10
\\
Fixed Margin (0.10) 
& \ding{51} & \ding{51} & \ding{51} & 
& 0.72 & 0.20
& 0.56 & 0.20
& 0.83 & 0.20
& 0.78 & 0.20
& \bluebold{0.90} & \reduline{0.20}
& \bluebold{0.95} & 0.20
& \bluebold{0.98} & 0.20
& \bluebold{0.97} & \reduline{0.20}
& 0.77 & 0.20
& 0.84 & 0.15
\\
Single Head (low/high sep.) 
& \ding{55} & \ding{51} & \ding{51} & 
& 0.27 & 0.07
& 0.37 & 0.14
& 0.81 & 0.22
& 0.55 & 0.12
& 0.21 & 0.02
& 0.48 & 0.02
& 0.85 & 0.03
& 0.43 & 0.03
& 0.43 & 0.11
& 0.67 & 0.15
\\
\midrule
\multicolumn{23}{l}{\textit{Loss-Level Ablation}} \\
\midrule
Full Loss
& \ding{51} & \ding{51} & \ding{51} & 
& \bluebold{0.99} & \reduline{0.56}
& \bluebold{0.99} & \reduline{0.82}
& \bluebold{0.99} & \reduline{0.40}
& \bluebold{0.99} & \reduline{0.70}
& 0.60 & 0.09
& 0.61 & 0.04
& \bluebold{0.94} & \reduline{0.09}
& 0.71 & 0.07
& \bluebold{0.93} & \reduline{0.40}
& \bluebold{0.97} & \reduline{0.36}
\\
Coverage Only 
& \ding{51} & \ding{51} & \ding{55} & 
& \bluebold{1.00} & 1.12
& \bluebold{1.00} & 1.13
& \bluebold{1.00} & 1.11
& \bluebold{1.00} & 1.12
& \bluebold{1.00} & 1.17
& \bluebold{1.00} & 1.01
& \bluebold{1.00} & 1.03
& \bluebold{1.00} & 1.06
& \bluebold{1.00} & 1.13
& \bluebold{1.00} & 1.11
\\
Width Only 
& \ding{51} & \ding{55} & \ding{51} & 
& 0.00 & 0.00
& 0.00 & 0.00
& 0.00 & 0.00
& 0.00 & 0.00
& 0.00 & 0.00
& 0.21 & 0.00
& 0.29 & 0.00
& 0.08 & 0.00
& 0.08 & 0.00
& 0.05 & 0.01
\\
No Loss (Sanity Check) 
& \ding{51} & \ding{55} & \ding{55} & 
& \bluebold{1.00} & 1.00
& \bluebold{1.00} & 1.00
& \bluebold{1.00} & 1.00
& \bluebold{1.00} & 1.00
& \bluebold{1.00} & \reduline{1.00}
& \bluebold{1.00} & \reduline{1.00}
& \bluebold{1.00} & 1.00
& \bluebold{1.00} & \reduline{1.00}
& \bluebold{1.00} & 1.00
& \bluebold{1.00} & 1.00
\\
\bottomrule
\end{tabular}
}
\label{tab:ablation_apdx_2}
\end{table}
\section{Ablation Study Results}
\label{apdx:ablation}
To better understand the contributions of individual components within QpiGNN, we conduct a comprehensive ablation study across nine synthetic graph datasets. Table~\ref{tab:ablation_apdx} presents results from both architecture-level and loss-level ablations.
Each configuration is evaluated over five runs for 500 training epochs, using a fixed width penalty coefficient \(\lambda_{\text{width}} = 0.5\).

\paragraph{Architecture-Level Ablation.}
We first evaluate different architectural choices. The full \textit{Dual Head} model with learned margin prediction (i.e., separate heads for mean and interval width) achieves strong performance across all datasets, consistently satisfying the target coverage level (\(1 - \alpha = 0.9\)) while minimizing interval width (lowest MPIW in most cases). In contrast:
\begin{itemize}
    \item The Fixed Margin variants fail to adapt to dataset-specific uncertainty patterns, leading to either severe undercoverage (e.g., Gaussian, Uniform) or overestimation of width (e.g., Tree).
    \item The Single Head model, where upper and lower bounds are predicted independently, shows unstable behavior, often resulting in poor calibration and highly variable coverage.
\end{itemize}
These results confirm the importance of a dedicated dual-head architecture that allows flexible, learned interval widths conditioned on node features.

\paragraph{Loss-Level Ablation.}
Next, we assess the role of each loss component:
\begin{itemize}
    \item The Full Loss (coverage + width + violation penalty) yields a consistent balance between calibration and compactness across all graphs.
    \item The Coverage-Only variant achieves perfect coverage on all datasets but produces excessively wide intervals (high MPIW and CWC), sacrificing informativeness for reliability.
    \item The Width-Only variant fails entirely, collapsing all intervals to zero, resulting in complete undercoverage—demonstrating that coverage loss is essential to meaningful UQ.
    \item The No Loss setting (MSE over center prediction, used as a sanity check) trivially learns to output wide enough intervals to contain all targets, but lacks meaningful control over sharpness or adaptive behavior.
\end{itemize}
Taken together, these results justify the design of QpiGNN’s learning objective and architecture. Both the dual-head structure and the full composite loss are critical for achieving high-quality, well-calibrated, and compact prediction intervals across diverse graph topologies.

\end{document}